\newcommand{\p}{\bar{p}}
\newcommand{\q}{\bar{q}}
\title[Max-Quantile Grouped Infinite-Arm Bandits]{Max-Quantile Grouped Infinite-Arm Bandits}
\renewcommand{\cite}{\citep}
\begin{document}

\maketitle

\begin{abstract}%
    In this paper, we consider a bandit problem in which there are a number of groups each consisting of infinitely many arms.  Whenever a new arm is requested from a given group, its mean reward is drawn from an unknown reservoir distribution (different for each group), and the uncertainty in the arm's mean reward can only be reduced via subsequent pulls of the arm.  The goal is to identify the infinite-arm group whose reservoir distribution has the highest $(1-\alpha)$-quantile (e.g., median if $\alpha = \frac{1}{2}$), using as few total arm pulls as possible.  We introduce a two-step algorithm that first requests a fixed number of arms from each group and then runs a finite-arm grouped max-quantile bandit algorithm.  We characterize both the instance-dependent and worst-case regret, and provide a matching lower bound for the latter, while discussing various strengths, weaknesses, algorithmic improvements, and potential lower bounds associated with our instance-dependent upper bounds.
    
    \begin{keywords}%
    Bandit algorithms, infinite-arm bandits, grouped bandits, lower bounds
\end{keywords}
\end{abstract}

\vspace*{-2ex}
\section{Introduction}
\vspace*{-1ex}

Multi-armed bandit (MAB) algorithms are widely adopted in scenarios of decision-making under uncertainty \cite{ slivkins2019introduction,Csa18}.  In theoretical MAB studies, two particularly common performance goals are {\em regret minimization} and {\em best arm identification}, and this paper is more  related to the latter.
The most basic form of best arm identification seeks to identify a \textit{single} arm with the highest mean reward from a \textit{finite} set of arms \cite{kaufmann2016complexity}.  
Other variations include but not limited to (i) 
identifying {\em multiple arms} with highest mean rewards \cite{kalyanakrishnan2012pac};
(ii)
 identifying a single arm with a near-maximal mean reward from an \textit{infinite}  set of arms \cite{aziz2018pure,chaudhuri2018quantile};
 (iii)
 identifying, for each group of arms, a single arm with the highest mean reward 
 \cite{gabillon2011MultiBandit, bubeck2013multipleident};
 (iv)
 identifying a single arm-group from a set of groups (of arms) whose worst arm  has the highest mean reward \cite{wang2021max}.

In this paper, we consider a novel setup featuring aspects of both settings (ii) and (iv) above.
Specifically, we are given a finite number of groups, with each group having an
(uncountably) infinite number of arms, and our goal is to identify the group whose top {\em $(1-\alpha)$-quantile arm} (in terms of the mean reward) is as high as possible.  While this setup builds on existing ones, we will find that it comes with unique challenges, with non-minor differences in both the upper and lower bounds.  

A natural motivation for this problem is \emph{comparing large populations} (e.g., of users, items, or otherwise).  As a concrete example, in recommendation systems, one may wish to know which among several populations has the highest median click-through rate, while displaying as few total recommendations as possible.  Beyond any specific applications, we believe that our problem is natural to consider in the broader context of infinite-arm bandits.

Before formally introducing the problem and stating our contributions, we outline some related work.

\vspace*{-2ex}
\subsection{Related Work} \label{sec:related}
\vspace*{-1ex}

The related work on multi-armed bandits is extensive (e.g., see \cite{slivkins2019introduction, Csa18} and the references therein); we only provide a brief outline here, with an emphasis on only the most closely related works.

{\bf Finite-arm settings.} The standard (single) best arm identification problem was studied in \cite{audibert2010bestarmid,gabillon2012unified,jamieson2014best,kaufmann2016complexity,garivier2016optimal}, among others.  These works are commonly distinguished according to whether the time horizon is fixed (fixed-budget setting) or the target error probability is fixed (fixed-confidence setting), and our focus is on the latter.  In particular, we will utilize anytime confidence bounds from \cite{kaufmann2016complexity} for upper-bounding
the number of pulls.

A {\em grouped} best-arm identification problem was studied in \cite{gabillon2011MultiBandit,bubeck2013multipleident,scarlett2019overlapping}, where the arms are allocated into groups, and the goal is to find the best arm in each group.  
Another notable setting in which multiple arms are returned is that of subset selection, where one seeks to find a subset of $k$ arms attaining the highest mean rewards \cite{kalyanakrishnan2012pac,pmlr-v30-Kaufmann13,kaufmann2016complexity}.  Group structure can also be incorporated into \emph{structured bandit} frameworks \cite{huang2017structured,gupta2020unified,mukherjee2020generalized,neopane2021best}.  Perhaps the most related among these is that of max-min grouped bandits \cite{wang2021max}, which seeks a group whose worst arm is as high as possible in a finite-arm setting.   We discuss the main similarities and differences to our setting in Appendix \ref{sec:comparisons}, as well as highlighting a connection to structured best-arm identification \cite{huang2017structured}.


{\bf Infinite-arm settings.} One line of works in infinite-arm bandits assumes that the actions lie in a metric space, and the associated mean rewards satisfy some smoothness condition such as being locally Lipschitz \cite{kleinberg2008multi, bubeck2008online, bubeck2011x, grill2015black, kleinberg2019bandits}.   
More relevant to our paper is the line of works considering a \emph{reservoir distribution} on arms; throughout the learning process, new arms can be requested and previously-requested ones can be pulled.  Earlier works in this direction focused primarily on regret minimization (e.g., \cite{berry1997bandit, wang2008infinitely, bonald2013two, david2014infinitely,li2017infinitely,kalvit2021bandits}), and several recent works have considered pure exploration (e.g., \cite{carpentier2015simple,jamieson2016power,chaudhuri2018quantile,aziz2018pure,chaudhuri2019pac,ren2019exploring,katz2020true,zhang2021quantile}).  Throughout the paper, we particularly focus on the work \cite{aziz2018pure}, which studies the problem of finding a single arm in the top $(1-\alpha)$-quantile.  See Appendix~\ref{sec:comparisons} for a discussion of some key similarities and differences.

The idea of measuring the performance against a quantile has appeared in several of these works, including \cite{chaudhuri2018quantile,aziz2018pure}.  Related notions include finding a ``good'' arm in a scenario where there are two types of arm \cite{jamieson2016power}, and finding a subset of a top fraction of arms in a finite-arm setting \cite{chaudhuri2019pac,ren2019exploring}.  The notion of finding an optimal quantile has also appeared extensively in risk-aware bandits \cite{tan2022survey}, but these problems consist of a single set of arms and consider quantiles of the reward distributions, which is handled very differently to quantiles of reservoir distributions.




\vspace*{-2ex}
\section{Problem Setup and Contributions} 
\label{sec:setup}
\vspace*{-1ex}

\textbf{Arms and rewards.}
We first describe the problem aspects that are the same as regular MAB problems. 
We are given a collection $\gA$ of  arms, which has some unknown mean-reward mapping $\mu \colon \gA \to \R$.  
In each round, indexed by $t \ge 1$, the algorithm 
pulls one or more arms\footnote{We will find it useful to let $t$ index ``rounds'' each possibly consisting of several arm pulls, but we will still be interested in the \emph{total} number of arm pulls.} in $\gA$ and observes
their corresponding rewards.
We consider the stochastic reward setting, in which for each arm $j \in {\cal A}$, the observations of its
reward $\{X_{j,t}\}_{t \ge 1}$ are i.i.d. random variables 
from some distribution with mean $\mu_j \coloneqq \mu(j)$.  
It is also useful to define the empirical mean of arm $j$ at round $t$, defined as 
\begin{equation}
	\label{eq:empirical_mean}
	\hat{\mu}_{j, T_j(t)} \coloneqq \frac{1}{T_j(t)}\sum_{\substack{\tau \in \{1,\dotsc,t\} \,:\, \\ \text{arm $j$ pulled}}}X_{j,\tau},
\end{equation}
where $T_{j}(t) \le t$ is the number of pulls of arm $j$ up to round $t$.  In standard best-arm identification problems, the goal is to identify the best arm with high probability using as few pulls as possible.

We will restrict our attention to classes of arm distributions that are \emph{uniquely parametrized by their mean}, e.g., Bernoulli($\mu$) or $N(\mu,\sigma^2)$ with $\sigma^2 > 0$ being the same for every arm.  By doing so, the notion of a reservoir distribution (see below) can be introduced using probability distributions on $\RR$, which is significantly more convenient compared to more general distributions.

\textbf{Infinite-arm and group notions.}
In our setup, the number of arms $\gA$ is 
(potentially uncountably) infinite.
Furthermore, these arms are partitioned into a finite number of \textit{disjoint} groups, with 
each group having a (potentially uncountably) infinite number of arms. The set of these disjoint groups is denoted by $\gG$.
The mean rewards for each group $G\in \gG$ form a probability space 
$\left(
\ExpRewards_{G}, \ExpRewardEvents_{G}, \ExpRewardMeasure_G \right)$,
where 
$\ExpRewards_G = \mu(G) =
\left\{\mu_j \colon j \in G \right\}$.
For each group~$G$, we define a corresponding \emph{reservoir distribution} CDF $\ExpRewardCDF_G \colon  \ExpRewards_{G} \to [0, 1]$, as well as a quantile function
$\ExpRewardInvCDF_G \colon [0, 1] \to  \ExpRewards_{G}$,
by
\begin{equation}
	\label{eq:CDF}
	\ExpRewardCDF_G (\tau)  \coloneqq
	\ExpRewardMeasure_G ( \Set{\mu \le \tau} )
	\quad \text{ and } \quad 
	\ExpRewardInvCDF_G(p) := \inf \Set{ \expReward : F_G(\expReward) \ge p },
\end{equation}
where $F_G$ is assumed to have a bounded support. 
Our goal is to identify a group in $\gG$ with the highest $(1-\alpha)$-quantile, i.e., a group $G$ satisfying
\begin{equation}
	\label{eq:exact_best_group}
	\ExpRewardInvCDF_{G}(1 - \alpha) = \max_{G' \in {\cal G}} \ExpRewardInvCDF_{G'}(1 - \alpha).
\end{equation}
Observe that when $\alpha$ is close to zero, this problem resembles that of finding a near-maximal arm across all groups, whereas when $\alpha$ is close to one, the problem resembles the max-min problem of finding the group whose worst arm is as high as possible \cite{wang2021max}.  Throughout the paper, we treat $\alpha \in (0,1)$ as a fixed constant (e.g., $\alpha = \frac{1}{2}$ for the median), meaning its dependence may be omitted in $O(\cdot)$ notation.

Throughout the course of the algorithm, the following can be performed:
\begin{itemize}[itemsep=0ex,topsep=0pt]
	\item The algorithm can request to receive one or more additional arms from any group of its choice; if that group is $G \in \gG$, then the arm mean is drawn according to $F_G$.
	\item Among all the arms requested so far, the algorithm can perform \textit{pulls} of the arms and observe the corresponding rewards, as usual.
\end{itemize}
We are interested in keeping the total number of arm pulls low; the total number of arms requested is not of direct importance (similar to previous infinite-arm problems such as \cite{aziz2018pure}).

It will be convenient to uniquely index each arm in a given group $G$ by $j \in [0,1]$ such that any new arm drawn from the reservoir distribution has its $j$ value drawn uniformly from $[0,1]$, and has mean reward $\mu_{G,j} = F^{-1}_G(j)$.\footnote{We may assume that the same $j$ value is never drawn twice, since this is a zero-probability event.  We emphasize that with $\mu = F^{-1}_G(j)$, the assumption of unique $j$ values does not necessarily imply unique arm means, as we still allow $F_G$ to have mass points.}
We will use this convention, but it is important to note that whenever the algorithm requests an arm, its index $j$ remains unknown.  We also note that two different indices $j \ne j'$ in a given group could still have the same means (i.e., $\mu_{G,j} = \mu_{G,j'}$), e.g., when the underlying reservoir distribution is discrete.


\textbf{$\epsilon, \Delta$-relaxations.}
When no assumptions are made on the reservoir distributions, one may encounter scenarios where the $(1-\alpha)$-quantile of a given group is arbitrarily hard to pinpoint (e.g., because the underlying CDF has a near-horizontal region, implying a very low probability of any given arm being near the precise quantile). 
To alleviate this challenge, we add an $\epsilon$-relaxation on $\alpha$
for some $\epsilon < \min(\alpha, 1 - \alpha)$
to limit the effort spent on identifying a quantile that is hard to pinpoint.  
In particular, we relax the task 
into finding a group $G$ satisfying
$
\ExpRewardInvCDF_{G}(1 - \alpha + \epsilon)
\ge \max_{G' \in {\cal G}} \ExpRewardInvCDF_{G'}(1 - \alpha - \epsilon).
$
Moreover, we further relax the task by only 
requiring the chosen group $G$ satisfies
\begin{equation}
	\label{eq:best_group_relax_eps_Delta}
	\ExpRewardInvCDF_{G}(1 - \alpha + \epsilon)
	\ge 
	\max_{G' \in {\cal G}} \ExpRewardInvCDF_{G'}(1 - \alpha - \epsilon)  - \Delta,
\end{equation}
for some $\Delta > 0$.
This relaxation allows us to limit the effort on distinguishing arms
(potentially from different groups) whose expected rewards are very close to each other; analogous relaxations are common in standard best-arm identification problems.
The general goal of the algorithm for our setup is to identify a group satisfying \eqref{eq:best_group_relax_eps_Delta} with high probability while using as few arm pulls as possible.

\textbf{Summary of contributions} With the problem setup now in place, we can now summarize our main contributions (beyond the problem formulation itself):
\begin{itemize}[itemsep=0ex,topsep=0pt,parsep=0pt]
	\item We introduce a two-step algorithm based on first requesting a fixed number of arms for each group, and then running a finite-arm subroutine.  Our main result for this algorithm is an instance-dependent upper bound on the number of pulls to guarantee \eqref{eq:best_group_relax_eps_Delta} with high probability (Corollary \ref{cor:samplebound_independent_H}), expressed in terms of the reservoir distributions and fundamental gap quantities introduced in Section \ref{sec:num_pulls}.
	\item In addition, we establish a worst-case upper bound in terms of $\epsilon$ and $\Delta$ alone (Eq.~\eqref{eq:weakened}), and a guarantee for the finite-arm setting (Theorem \ref{thm:samplebound}).  We believe that these should be of independent interest.
	\item We show that adapting our two-step algorithm to a multi-step algorithm can lead to a better instance-dependent bound (Corollary \ref{cor:samplebound_improved}), and discuss how the resulting gap terms may be similar to those of a potential instance-dependent lower bound (Appendix~\ref{sec:compare_aziz}), though formalizing the latter is left for future work.
	\item By deriving a worst-case lower bound (Theorem \ref{thm:lb}) and comparing it with our upper bound, we show that for worst-case instances, the optimal number of arm pulls scales as $\frac{|\mathcal{G}|}{\Delta^2\epsilon^2}$ (for $|\mathcal{G}| \ge 2$) up to logarithmic factors.  
\end{itemize}
Some of the innovations in our analysis include (i) suitably identifying the relevant ``gaps'' in the finite-arm elimination algorithm and adapting the analysis accordingly; (ii) setting up the relaxation~\eqref{eq:best_group_relax_eps_Delta} and determining how this impacts the number of arms to request and how to characterize the high-probability behavior of their gaps; (iii) suitably extending the two-step algorithm to a multi-step algorithm; and (iv) proving the lower bound via a likelihood-ratio based analysis that is distinct from others in the bandit literature to the best of our knowledge.

\vspace*{-2ex}
\section{Algorithm and Upper Bound}
\vspace*{-1ex}

\label{sec-fc-upper}

In this section, we introduce our main algorithm and provide its performance guarantee.  We make the following standard assumptions on the reward distributions.
\begin{assumption} \label{as:noise}
    For every arm $j$ (in every group $G$; the group dependence is left implicit here), we assume that the mean reward  $\mu_j$ is bounded in $[0,1]$,\footnote{Any finite interval can be shifted and scaled to this range.} and that the reward distribution is sub-Gaussian with parameter $\sigma^2 \le 1$.\footnote{The upper bound $\sigma^2 \le 1$ is for convenience in applying known confidence bounds, and can easily be relaxed.}
    That is,  for each arm $j$, and 
    for each $\lambda \in \mathbb{R}$,
    if $X$ is a random variable drawn from the arm's reward distribution, then $\mathbb{E}[X] = \mu_j$ and $\mathbb{E}[e^{\lambda (X - \mu_{j})}] \leq \exp(\lambda^2 \sigma^2/2)$.  Moreover, we assume that all of the reward distributions come from a common family of distributions that are uniquely parametrized by their mean (e.g., Bernoulli, or Gaussian with a fixed variance).
\end{assumption}
\vspace*{-2ex}
\subsection{Description of the Algorithm}
\label{kl-lucb}
\vspace*{-1ex}

We present our two-step algorithm in Algorithm~\ref{alg:main}.  This algorithm uses a finite-arm best-quantile identification (BQID) algorithm $\texttt{FiniteArmBQID}$ as a sub-routine, and its description is deferred to Algorithm~\ref{alg:elimination}  
in Appendix~\ref{sec:se}.  For now, we only need to treat this step in a ``black-box'' manner with certain guarantees outlined at the end of this subsection.

Our algorithm takes a two-step approach of first requesting a fixed number of arms from each group, and then running a finite-arm algorithm.  This approach was taken in \cite{aziz2018pure} for the (non-grouped) problem of finding an arm within a given quantile, but the results and analysis turn out to be quite different; see Appendix~\ref{sec:compare_aziz} for further discussion.  In Section \ref{sec:improve}, we will discuss further improvements via a multi-step approach.

\begin{algorithm}
    \caption{Main Algorithm}
    \label{alg:main}
    \begin{algorithmic}[1]
        \Require~Finite set of infinite arm groups $\cal G$, parameters 
        $\alpha, \epsilon, \Delta, \delta \in (0,1)$
        where
        $\delta < \epsilon < 
        \min(\alpha, 1- \alpha)$
        
        \State $N \coloneq N(\epsilon, \delta) =
        \big \lceil
        \frac{1}{2\epsilon^2} \log \frac{2|\gG|}{\delta}\big\rceil$
        \label{line:N}
        
        \For {each $G \in \gG$} 
        \label{line:forloop}
        \State 
        Request $N$ arms from $G$ according to
        $F_G$ to form  a (random) finite arm group $H$
        \label{line:generate_H}
        \EndFor 
        
       \State Let $\gH = \left\{H_1, H_2, \dots, H_{|\gG|} \right\}$ be a finite set of finite arm groups
       
       \State  Run \texttt{FiniteArmBQID}
       (Algorithm~\ref{alg:elimination} in {Appendix}~\ref{sec:se})
       with input $(\gH, \alpha, \Delta, \delta)$ to identify a group
       $\widehat{H} \in \gH$ 
       \label{line:finite_alg}
       
       \State Return the group $\widehat{G}$ corresponding to $\widehat{H}$.
       \label{line:output_mainalg}
    \end{algorithmic}
\end{algorithm}

First, for each group $G \in \gG$,
we randomly and independently request $N = \big \lceil
        \frac{1}{2\epsilon^2} \log \frac{2|\gG|}{\delta} 
        \big\rceil$ arms from
        the reservoir distribution $F_G$ to form a (random) arm
group $H$ of size $N$.
 This yields a set $\gH = \left\{H_1, H_2, \dots, H_{|\gG|} \right\}$ of  finite-arm groups, each with means denoted by $\{\mu_{H,j}\}_{j \in H}$ with continuous indexing $j \in [0,1]$ as discussed in Section \ref{sec:setup}.  We make use of the corresponding (discrete) probability spaces 
 $\left(\ExpRewards_{H}, \ExpRewardEvents_{H}, \ExpRewardMeasure_H \right)$
of mean rewards, with CDFs  and quantile functions given by
\begin{equation}
    \ExpRewardCDF_H (\tau)  \coloneqq
		\ExpRewardMeasure_H ( \Set{\mu \le \tau} ) = \frac{1}{|H|} \sum_{j \in H} \boldsymbol{1}\{ \mu_{H,j} \le \tau \}
  \quad \text{ and } \quad
  \ExpRewardInvCDF_H(p) := \inf \Set{ \expReward : F_H(\expReward) \ge p }.
  \label{eq:forwardH}
\end{equation}
 In Line \ref{line:finite_alg} of Algorithm \ref{alg:main}, our requirement on the finite-arm subroutine \texttt{FiniteArmBQID} is that it identifies a group $\widehat{H} \in \gH$ with a $(1-\alpha$)-quantile that is at most $\Delta$-suboptimal:
\begin{equation}
\label{eq:optimal H}
    \ExpRewardInvCDF_{\widehat{H}}(1 - \alpha)
     \ge
    \max_{H \in \gH} \ExpRewardInvCDF_{H}(1 - \alpha)   - \Delta.
\end{equation}
We then simply return the infinite-arm group $\widehat{G}$ corresponding to $\widehat{H}$ (the two have a trivial one-to-one mapping).  The high-level ideas behind the (elimination-based) algorithm we use for \texttt{FiniteArmBQID} will be introduced in Section \ref{sec:num_pulls}, and the full details will be given in 
Appendix~\ref{sec:subroutine_details}.

\vspace*{-2ex}
\subsection{Correctness} \label{sec:crrect_main}
\vspace*{-1ex}

Our theoretical analysis is done via a series of intermediate results, and we defer the proof details to Appendix \ref{sec-correctness}.  We will frequently make use of two high-probability events, defined as follows:
\begin{itemize}[itemsep=0ex, topsep=0pt,parsep=0pt]
    \item \underline{Event $A$}: 
    For each $G \in \gG$ and the corresponding finite group $H$ (generated in Line~\ref{line:generate_H} of Algorithm~\ref{alg:main}),  we have
    \begin{equation}
        \label{eq:sampled_quantile_sandwiched}
        \ExpRewardInvCDF_{G}(1 - \alpha - \epsilon)
        \le \ExpRewardInvCDF_{H}(1 - \alpha)
        \le \ExpRewardInvCDF_{G}(1 - \alpha + \epsilon).
    \end{equation}
    That is, the mean reward of the $(1-\alpha)$-quantiles of the sampled arms $H$   
    is between the mean rewards of the 
    $(1-\alpha \pm \epsilon)$-quantile of the arms in $G$.
    \item \underline{Event $B$}:
    The group $\widehat{H}$ returned by 
    Algorithm~\ref{alg:elimination}, 
    i.e., Line~\ref{line:finite_alg} of Algorithm~\ref{alg:main}
    satisfies \eqref{eq:optimal H}.
\end{itemize}
In Appendix \ref{sec-correctness}, we show that these each hold with probability at least $1-\delta$, and then deduce the following correctness guarantee.
\begin{theorem}
    \label{thm:correctness}
    Consider Algorithm 1 with inputs $(\gG, \alpha, \epsilon, \Delta, \delta)$ as defined in Section~\ref{sec:setup}.
    Under Assumption~\ref{as:noise}, with probability at least $1- 2\delta$,  Algorithm~\ref{alg:main} identifies a group $G \in \gG$
    satisfying \eqref{eq:best_group_relax_eps_Delta}.
\end{theorem}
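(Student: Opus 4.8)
The plan is to decompose the analysis into two high-probability events and combine them: \emph{Event $A$}, that the quantile sandwich \eqref{eq:sampled_quantile_sandwiched} holds for every group $G \in \mathcal{G}$ and its sampled finite counterpart, and \emph{Event $B$}, that \texttt{FiniteArmBQID} returns $\widehat{H}$ satisfying \eqref{eq:optimal H}. I would show each has probability at least $1-\delta$, take a union bound to get $\PP[A \cap B] \ge 1 - 2\delta$, and then verify \eqref{eq:best_group_relax_eps_Delta} deterministically on $A \cap B$.

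\textbf{Event $A$.} Fix $G$ and the finite group $H$ drawn from it. Since the $N$ arm indices are i.i.d.\ uniform on $[0,1]$, for any fixed threshold $\tau$ we have $|H| F_H(\tau) = \sum_{j \in H} \mathbf{1}\{\mu_{H,j} \le \tau\} \sim \mathrm{Binomial}(N, F_G(\tau))$, so Hoeffding's inequality together with $N = \lceil \tfrac{1}{2\epsilon^2}\log\tfrac{2|\mathcal{G}|}{\delta} \rceil$ makes each of the events $\{F_H(\tau) \le F_G(\tau) - \epsilon\}$ and $\{F_H(\tau) \ge F_G(\tau) + \epsilon\}$ have probability at most $\tfrac{\delta}{2|\mathcal{G}|}$. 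I would evaluate these bounds at the two \emph{deterministic} thresholds $\tau_- = F_G^{-1}(1 - \alpha - \epsilon)$ (more precisely at its left limit, to absorb any atom of $F_G$ there) and $\tau_+ = F_G^{-1}(1 - \alpha + \epsilon)$, and combine with the quantile-function identities $F_G(\tau_-^-) \le 1 - \alpha - \epsilon$ and $F_G(\tau_+) \ge 1 - \alpha + \epsilon$. On the good events this gives $F_H(\tau_-^-) \le 1 - \alpha$ and $F_H(\tau_+) \ge 1 - \alpha$, which by monotonicity of $F_H$ force $\tau_- \le F_H^{-1}(1-\alpha) \le \tau_+$, i.e.\ \eqref{eq:sampled_quantile_sandwiched}. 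A union bound over the $|\mathcal{G}|$ groups and the two tails per group yields $\PP[A] \ge 1 - \delta$.

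\textbf{Event $B$.} This is precisely the correctness conclusion of the finite-arm subroutine \texttt{FiniteArmBQID} (Algorithm~\ref{alg:elimination}) invoked with confidence parameter $\delta$: by the guarantee of Theorem~\ref{thm:samplebound}, with probability at least $1-\delta$ the returned group $\widehat{H}$ satisfies \eqref{eq:optimal H}. Hence $\PP[B] \ge 1-\delta$ and $\PP[A \cap B] \ge 1 - 2\delta$.

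\textbf{Conclusion and main obstacle.} On $A \cap B$, let $\widehat{G}$ be the output of Algorithm~\ref{alg:main} and $\widehat{H}$ its associated finite group. Event $A$ applied to $\widehat{G}$ gives $F_{\widehat{G}}^{-1}(1-\alpha+\epsilon) \ge F_{\widehat{H}}^{-1}(1-\alpha)$; Event $B$ gives $F_{\widehat{H}}^{-1}(1-\alpha) \ge \max_{H \in \mathcal{H}} F_H^{-1}(1-\alpha) - \Delta$; and Event $A$ applied to each other group $G' \in \mathcal{G}$ with its sampled group $H'$ gives $F_{H'}^{-1}(1-\alpha) \ge F_{G'}^{-1}(1-\alpha-\epsilon)$, hence $\max_{H \in \mathcal{H}} F_H^{-1}(1-\alpha) \ge \max_{G' \in \mathcal{G}} F_{G'}^{-1}(1-\alpha-\epsilon)$. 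Chaining the three inequalities yields $F_{\widehat{G}}^{-1}(1-\alpha+\epsilon) \ge \max_{G' \in \mathcal{G}} F_{G'}^{-1}(1-\alpha-\epsilon) - \Delta$, which is exactly \eqref{eq:best_group_relax_eps_Delta}. The part requiring the most care is Event $A$ --- specifically, making the passage from the pointwise CDF concentration to the quantile sandwich fully rigorous when $F_G$ has flat stretches or mass points, which is where the $\inf$ in the definition of $F_G^{-1}$, the use of left limits, and the constraint $\epsilon < \min(\alpha, 1-\alpha)$ all come into play; the $\PP[B]$ bound is outsourced to the finite-arm analysis, and the final chaining is routine.
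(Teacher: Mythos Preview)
Your proposal is correct and follows essentially the same approach as the paper: the same decomposition into Events $A$ and $B$, Hoeffding's inequality with the choice of $N$ to control Event $A$, the finite-arm subroutine guarantee for Event $B$, and the same chaining of inequalities on $A \cap B$ to deduce \eqref{eq:best_group_relax_eps_Delta}. The only cosmetic difference is that the paper phrases the Hoeffding step for Event $A$ in terms of counting how many sampled arms land in the top $(\alpha-\epsilon)$ or bottom $(1-\alpha-\epsilon)$ fractions, whereas you phrase it as CDF concentration at two fixed thresholds; these are equivalent formulations of the same argument.
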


Here and subsequently, the fact that \texttt{FiniteArmBQID} is performed via Algorithm \ref{alg:elimination} (Appendix~\ref{sec:se}) is left implicit in most of our formal statements.


%

\vspace*{-2ex}
\subsection{Number of Arm Pulls in Finite-Arm Subroutine} \label{sec:num_pulls}
\vspace*{-1ex}

The arm pulls in Algorithm~\ref{alg:main} occur entirely within the \texttt{FiniteArmBQID} subroutine (Algorithm~\ref{alg:elimination} in {Appendix}~\ref{sec:se}). 
This algorithm and its analysis will be generally similar to \cite[Algorithm 1]{wang2021max}, though with different details.
Their algorithm can be viewed as taking $\alpha = 1$ and $\Delta = 0$ in our notation, whereas we are interested in an \textit{arbitrary} $\alpha \in (0,1)$ and we allow for $\Delta$ to be positive.  Moreover, in \cite{wang2021max} the groups are allowed to overlap; we avoided such considerations because they seem tricky to formulate neatly in the infinite-arm setting, but in the finite-arm setting of this section it would be straightforward to incorporate.
However, we emphasize that the key distinction is not these details, but rather the challenges of
incorporating the finite-arm algorithm into an infinite-arm framework.

A complete description of the finite-arm subroutine is given in Algorithm~\ref{alg:elimination} in Appendix~\ref{sec:subroutine_details}, and its analysis is given in Appendix~\ref{sec:appendix_finite_alg}. 
While the details are deferred to later, we are in a position to proceed here with a ``black-box'' statement of its number of arm pulls, along with some intuition on why certain instance-dependent quantities arise.
Algorithm~\ref{alg:elimination}
takes 
a finite set $\gH$ of finite arm-groups and parameters $\alpha, \Delta, \delta \in (0,1)$ as input,
and seeks to output a group $H \in \gH$ satisfying \eqref{eq:optimal H}
with probability at least $1-\delta$. 
It is based on successive elimination\footnote{We expect that other approaches such as LUCB-type algorithms could also provide similar guarantees.  However, see Appendix \ref{sec:structured} for a discussion on how an existing LUCB-based bound for a general structured bandit framework can be weaker than ours.}, and as with previous algorithms of this kind, it stops pulling a given arm altogether once it is ``no longer of interest''.  
To formalize this, each arm $j \in H$ is assigned a gap value $\Delta_{H,j}$, and we will show that once the empirical means and true means are at most $\Delta_{H,j}/4$ apart, arm $j \in H$ can stop being pulled.  The intuition behind why arms can be eliminated is as follows:
\begin{itemize}[itemsep=0ex,topsep=0pt,parsep=0pt]
    \item It may occur that $\mu_{H,j}$ is already known accurately enough that any further accuracy provides no additional information about the group's $(1-\alpha)$-quantile.
    \item It may be that $H$ is known to be a suboptimal group, so \emph{all} its arms can stop being pulled.
    \item It may be that the optimal group is found and the algorithm can terminate.
    \item It may be that all remaining groups are known to satisfy \eqref{eq:optimal H} and the algorithm can terminate.
\end{itemize}
These four cases will be associated with gaps denoted by $\Delta'_{H,j}$, $\Delta_H$, $\Delta_0$, and $\Delta$ respectively (to be defined shortly), leading to the following overall gap for arm $j \in H$:
\begin{equation}  
\label{def:gap}
    \Delta_{H,j} \coloneqq \max\left\{\Delta, \Delta_H, \Delta_0, \Delta_{H,j}' \right\}  \in [\Delta, 1]. 
\end{equation}
We now proceed formally.
Let $H^*$ be a group with a highest $(1-\alpha)$-quantile arm,\footnote{Due to the randomness in generating the finite-arm groups, $H^*$ does not necessarily coincide with the optimal infinite-arm group.} i.e.,
$H^* \in \argmax_{H \in \gH} \ExpRewardInvCDF_{H}(1 - \alpha)$.
If multiple groups satisfy this condition, then $H^*$ denotes an arbitrary single one of them.  For each group $H$, we define the reward-gap $\Delta_H$ of group $H$ as the
difference between the $(1-\alpha)$-quantile arm of  $H^*$ and the $(1-\alpha)$-quantile arm of group $H$: 
\begin{equation}
\label{eq:Delta_H}
    \Delta_H \coloneqq \ExpRewardInvCDF_{H^*}(1 - \alpha) 
    -  \ExpRewardInvCDF_{H}(1 - \alpha).
\end{equation}
Similarly, $\Delta_0$ indicates the difference between the $(1-\alpha)$-quantile arms of group $H^*$ and the remaining groups $H \in (\gH \setminus H^*)$, i.e.,
\begin{equation}
\label{eq:Delta_0}
    \Delta_0 \coloneqq 
    \min_{H \in \gH, H \ne H^*} 
     \Delta_H =
      \ExpRewardInvCDF_{H^*}(1 - \alpha) 
    - \max_{H \in \gH, H \ne H^*} \ExpRewardInvCDF_{H}(1 - \alpha).
\end{equation}
Note that $\Delta_0 > 0$ if and only if there is a unique optimal group.  
Finally,  $\Delta_{H,j}'$ is the difference between the mean reward of arm $j$ and the $(1-\alpha)$-quantile arm of the group it is in, i.e.,
    \begin{equation}
    \label{eq:delta_prime}
        \Delta_{H,j}' \coloneqq 
          \left| \mu_{H,j} - \ExpRewardInvCDF_{H}(1 - \alpha) \right|.
    \end{equation}
The remaining term $\Delta$ in \eqref{def:gap} is already specified as part of the problem, and is included to relax the success criterion for very challenging instances where $\max\{\Delta_H, \Delta_0, \Delta_{H,j}'\} \approx 0$ or even $\max\{\Delta_H, \Delta_0, \Delta_{H,j}'\} = 0$.
Having defined $\Delta_{H,j}$, we now state
an upper bound on the total number of arm pulls by \texttt{FiniteArmBQID}.

\begin{theorem}
\label{thm:samplebound}
Let $(\gG, \alpha, \epsilon, \Delta, \delta)$ be a valid input of Algorithm~\ref{alg:main}
and $\gH$ be the set of random groups formed
in Lines~\ref{line:forloop}-\ref{line:generate_H} of Algorithm~\ref{alg:main}.  There exists a choice of \texttt{FiniteArmBQID} (see Algorithm~\ref{alg:elimination} in Appendix~\ref{sec:se}) yielding the following:
Under Assumption~\ref{as:noise}, with probability at least $1- \delta$, the algorithm identifies a group $H \in \gH$ satisfying \eqref{eq:optimal H} and uses a total number of arm pulls upper bounded by
    \begin{align}
    \label{eq:armpullboundrandom}
        T(\epsilon, \delta, \Delta) 
        &\leq 
        \sum\limits_{H \in \gH}
        \sum_{j=1}^N
        \frac{c}{\Delta_{H, j}^2}
       \log 
        \left(
            \frac{|\gH| N }{\delta}
            \log \frac{1}{\Delta_{H, j}^2}
        \right) \\
     & \le d \left(  
     \frac{|\gG|}{\epsilon^2 \Delta^2}  \right)
     \left(
     \log^2 \frac{|\gG|}{\delta}
     + 
     \left( \log \frac{|\gG|}{\delta} \right)
     \left( \log \log \frac{1}{\Delta} \right)
     \right)       \label{eq:weakened}
    \end{align}
where $c$ and $d$ are universal constants, the gaps $\Delta_{H, j}$ as defined in \eqref{def:gap} are random variables, and
$N =
\big\lceil
\frac{1}{2\epsilon^2} \log \frac{2|\gG|}{\delta} \big\rceil$
is as given in Line~\ref{line:N}
of Algorithm~\ref{alg:main}.
\end{theorem}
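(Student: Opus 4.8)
The bound has two layers: the instance-dependent estimate \eqref{eq:armpullboundrandom}, obtained by a standard successive-elimination argument built on anytime confidence bounds, and the worst-case estimate \eqref{eq:weakened}, which follows from \eqref{eq:armpullboundrandom} by crude bounding. Since all arm pulls occur inside $\texttt{FiniteArmBQID}$ and we are free to choose this subroutine, we take it to be the elimination algorithm of Algorithm~\ref{alg:elimination}. It suffices to analyze that algorithm conditioned on a fixed realization of $\gH$ (so the gaps $\Delta_{H,j}$ of \eqref{def:gap} are deterministic constants): the stated inequality, with its random $\Delta_{H,j}$, then holds unconditionally since it holds for every realization of Lines~\ref{line:forloop}--\ref{line:generate_H}.

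First I would fix the good event. For each of the $|\gH|\,N$ arms, apply the anytime confidence bound of \cite{kaufmann2016complexity} at confidence level $\delta/(|\gH|N)$, giving a radius function $\beta_j(s) = O\!\big(\sqrt{\tfrac{1}{s}\log\tfrac{|\gH|N\log s}{\delta}}\big)$ such that $|\hat\mu_{j,s} - \mu_{H,j}| \le \beta_j(s)$ for all $s\ge 1$ with probability at least $1-\delta/(|\gH|N)$; a union bound then yields a good event $\mathcal{E}$ of probability at least $1-\delta$ on which this holds for every arm simultaneously. On $\mathcal{E}$ I would establish two things. \emph{Correctness:} the elimination/stopping rules of Algorithm~\ref{alg:elimination}, expressed through the confidence band $[\hat\mu_{H,j}-\beta_j,\ \hat\mu_{H,j}+\beta_j]$ and the induced band on $\ExpRewardInvCDF_H(1-\alpha)$, never discard an arm or group needed to certify \eqref{eq:optimal H}, so the returned group satisfies \eqref{eq:optimal H}. \emph{Pull count:} on $\mathcal{E}$, arm $j\in H$ stops being pulled once $\beta_j(T_j(t)) \le \Delta_{H,j}/4$; solving $\beta_j(s)\le \Delta_{H,j}/4$ for $s$ gives $T_j \le \frac{c}{\Delta_{H,j}^2}\log\!\big(\frac{|\gH|N}{\delta}\log\frac{1}{\Delta_{H,j}^2}\big)$ for a universal constant $c$. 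Summing over $j\in H$ and $H\in\gH$ gives \eqref{eq:armpullboundrandom}.

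The crux — and the step I expect to be the main obstacle — is verifying that $\Delta_{H,j}$ as defined in \eqref{def:gap} is indeed an upper bound on the ``effective gap'' of arm $j$: once every arm has confidence radius at most $\Delta_{H,j}/4$, one of four events must fire for arm $j$, namely (i) the mean of arm $j$ is known accurately enough relative to $\Delta'_{H,j} = |\mu_{H,j} - \ExpRewardInvCDF_H(1-\alpha)|$ that further pulls cannot change which arm realizes the $(1-\alpha)$-quantile of $H$, nor the quantile estimate to the needed precision; (ii) the confidence band on $\ExpRewardInvCDF_H(1-\alpha)$ lies provably below that of $H^*$, so all of $H$ is eliminated via $\Delta_H$; (iii) the unique optimal group is confirmed via $\Delta_0$; or (iv) every surviving group is within $\Delta$ of optimal. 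The delicate point is that $\ExpRewardInvCDF_H(1-\alpha)$ is an order statistic of the group's means, so per-arm confidence radii must be translated into a two-sided confidence band on this quantile, and the elimination thresholds must be checked to trigger at the right scale; this is exactly where the analysis departs from the max-min case ($\alpha=1$, $\Delta=0$) of \cite{wang2021max}, and where the four-way maximum in \eqref{def:gap} has to be assembled carefully, while handling mass points of $F_H$ and near-ties across groups.

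Finally, \eqref{eq:weakened} follows from \eqref{eq:armpullboundrandom} by crude bounding. Each $\Delta_{H,j}\in[\Delta,1]$, so $\frac{1}{\Delta_{H,j}^2}\le\frac{1}{\Delta^2}$ and $\log\frac{1}{\Delta_{H,j}^2}\le\log\frac{1}{\Delta^2}$; the double sum has $|\gH|N = |\gG|N$ terms; and $N = \lceil\frac{1}{2\epsilon^2}\log\frac{2|\gG|}{\delta}\rceil = O\!\big(\frac{1}{\epsilon^2}\log\frac{|\gG|}{\delta}\big)$, so the prefactor is $O\!\big(\frac{|\gG|}{\epsilon^2\Delta^2}\log\frac{|\gG|}{\delta}\big)$. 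For the logarithm, $\log\frac{|\gH|N}{\delta} = \log\frac{|\gG|}{\delta} + \log N$, and using $\delta<\epsilon$ we get $\log N = O(\log\frac{1}{\epsilon} + \log\log\frac{|\gG|}{\delta}) = O(\log\frac{|\gG|}{\delta})$, hence $\log\!\big(\frac{|\gH|N}{\delta}\log\frac{1}{\Delta^2}\big) = O(\log\frac{|\gG|}{\delta} + \log\log\frac{1}{\Delta})$. Multiplying the prefactor by this and distributing gives $O\!\big(\frac{|\gG|}{\epsilon^2\Delta^2}(\log^2\frac{|\gG|}{\delta} + \log\frac{|\gG|}{\delta}\log\log\frac{1}{\Delta})\big)$, which is \eqref{eq:weakened}.
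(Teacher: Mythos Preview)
Your proposal is correct and follows essentially the same route as the paper: the first bound comes from Theorem~\ref{thm:ub_se}, whose proof (Appendix~\ref{sec:appendix_finite_alg}) establishes the anytime confidence event via \cite{kaufmann2016complexity}, proves conditional correctness (Lemma~\ref{lem:conv_SE}), and then shows in Lemma~\ref{lem:stop_pull_condition} that $U(T_j(t),\delta/n)<\Delta_j/4$ triggers one of the four stopping events---the case split into $\Delta'_{H,j},\Delta_H,\Delta_0,\Delta$ and the translation of per-arm confidence into quantile confidence that you flag as the crux is exactly the content of that lemma and its auxiliary quantile bounds \eqref{eq:quantilebound}--\eqref{eq:quantilebound2}. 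Your derivation of \eqref{eq:weakened} from \eqref{eq:armpullboundrandom} via $\Delta_{H,j}\ge\Delta$, $|\gH|N$ summands, and $\delta<\epsilon\Rightarrow\log N=O(\log\tfrac{|\gG|}{\delta})$ matches the paper's one-line justification.
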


\begin{proof}
  The first line follows from Theorem~\ref{thm:ub_se} in Appendix~\ref{sec:se},
  whose proof is given in Appendix~\ref{sec:appendix_finite_alg}.  The second line lower bounds each gap by $\Delta$, and performs asymptotic simplifications along with $|\gG|=|\gH|$ and the assumption $\delta < \epsilon$ specified in Algorithm \ref{alg:main}.
\end{proof}
%
\vspace*{-2ex}
\subsection{Number of Arm Pulls Without $\gH$ Dependence} \label{sec:num_pulls_without}
\vspace*{-1ex}
Since each finite group $H \in \gH$ of arms
consists of arms randomly sampled from their corresponding~$G$, the gaps 
$\Delta_H, \Delta_0, \Delta'_{H, j}$
 are random variables, and
 so $\Delta_{H, j} = \max\{ \Delta, \Delta_H, \Delta_0, 
\Delta'_{H, j } \}$ are also random variables.  Our next step is to bound these gaps with high probability.

For $\Delta_H$ and $\Delta_0$, Event A above readily yields the following lower bounds:
\begin{gather}
    \Delta_H \ge
    \widetilde{\Delta}_G \coloneqq
    \max_{G' \in \gG}  
    \ExpRewardInvCDF_{G'}(1 - \alpha - \epsilon)
    -  \ExpRewardInvCDF_{G}(1 - \alpha + \epsilon) \label{eq:Delta_H_lower_bound} \\
    \Delta_0 \ge
    \widetilde{\Delta}_0 
    \coloneqq \max\limits_{G \in \gG} \ExpRewardInvCDF_{G}(1 - \alpha - \epsilon)
    - \max\limits_{G'\in \gG, \ G' \ne G_{\epsilon}^*} \ExpRewardInvCDF_{G'}(1 - \alpha + \epsilon),
    \label{eq:Delta_0_lower_bound}
\end{gather}
where $G_{\epsilon}^* \in \argmax_{G \in\gG} \ExpRewardInvCDF_{G}(1 - \alpha + \epsilon)$. See Appendix \ref{app:DeltaH0} for the details.

In contrast, based on the high-probability events established so far, there is no non-trivial lower bound on $\Delta'_{H, j}$.  For example, if the mean rewards of all arms sampled for group $H$  are concentrated around the $(1-\alpha)$-quantile, then $\Delta'_{H, j}$ will be small for each $j$.  We address this difficulty by introducing further high-probability events under which $\Delta'_{H, j}$ can be suitably lower bounded.

Let $\gH = \{H\}$ be the set of groups formed in Lines~\ref{line:forloop}-\ref{line:generate_H} of Algorithm~\ref{alg:main}.
We partition the arms in $H$ as follows. Let 
$m$ be the smallest integer such that 
$ (1-\alpha) - 
    \left\lfloor
    \frac{1-\alpha}{\epsilon}
    \right\rfloor \epsilon 
    + m \epsilon \ge 1$, i.e.,
\begin{equation}
\label{eq:number_of_buckets}
     m \coloneqq
     \min \left\{
     k \in \mathbb{N}: k \ge
     \frac{\alpha}{\epsilon} + \left\lfloor \frac{1-\alpha}{\epsilon} \right\rfloor
     \right\}
     \in \bigg\{ \left\lfloor \frac{1}{\epsilon} \right\rfloor, \left\lceil \frac{1}{\epsilon} \right\rceil \bigg\} \ge 3,
\end{equation}
where the lower bound of $3$ follows since $\epsilon < \min(\alpha,1-\alpha) \le \frac{1}{2}$ (Line~1 of Algorithm \ref{alg:main}). 
We partition $H$ into $m+1$ disjoint subsets:
\begin{equation}
\label{def:S_H_i}
    S_{H, i}
    \coloneqq
    \begin{cases}
    H \cap [0, b_{1}) =
    \{
    j \in H \mid
    0
        \le j
        <
        b_1
    \}
    &
    \quad \text{for } i =0, \\
        H \cap [b_i, b_{i+1}) =
    \{
    j \in H \mid
    b_{i}
        \le j
        <
        b_{i+1} = b_i + \epsilon
    \}
    &
    \quad \text{for } 1 \le i \le m-1, \\
    H \cap [b_m, 1] =
    \{
    j \in H \mid
    b_{m}
        \le j
        \le
        1
    \}
    &
    \quad \text{for } i = m,
    \end{cases}
\end{equation}
where 
\begin{equation}
\label{eq: b_i}
    b_{i} \coloneqq
        \left(1-\alpha\right) - 
        \left\lfloor
        \frac{1-\alpha}{\epsilon}
        \right\rfloor \epsilon 
        + (i-1) \epsilon
        \quad \text{ for } i = 1, 2, \dots, m. 
\end{equation}
That is, $[0, b_1), \dots, [1-\alpha - \epsilon, 1-\alpha),
[1-\alpha, 1-\alpha + \epsilon),
\dots , [b_m, b_{m+1} = 1]$ are $m+1$ disjoint intervals of $[0, 1]$, and each interval has a size of at most $\epsilon$.  See Figure \ref{fig:partition} for an illustration (in this example, $S_{H,0}$ is empty because $\frac{\alpha}{\epsilon}$ is an integer).

The idea in our subsequent analysis (leading to Corollary \ref{cor:samplebound_independent_H} below) is to bound the number of arms in each partition, and then bound the arm means in each partition by their ``worst-case'' values (i.e., those giving the smallest gaps).  The partition width $\epsilon$ is chosen such that the resulting ``rounding error'' coincides with the desired guarantee \eqref{eq:best_group_relax_eps_Delta} with parameter $\epsilon$.

\begin{figure}
    \begin{centering}
        \includegraphics[width=0.5\columnwidth]{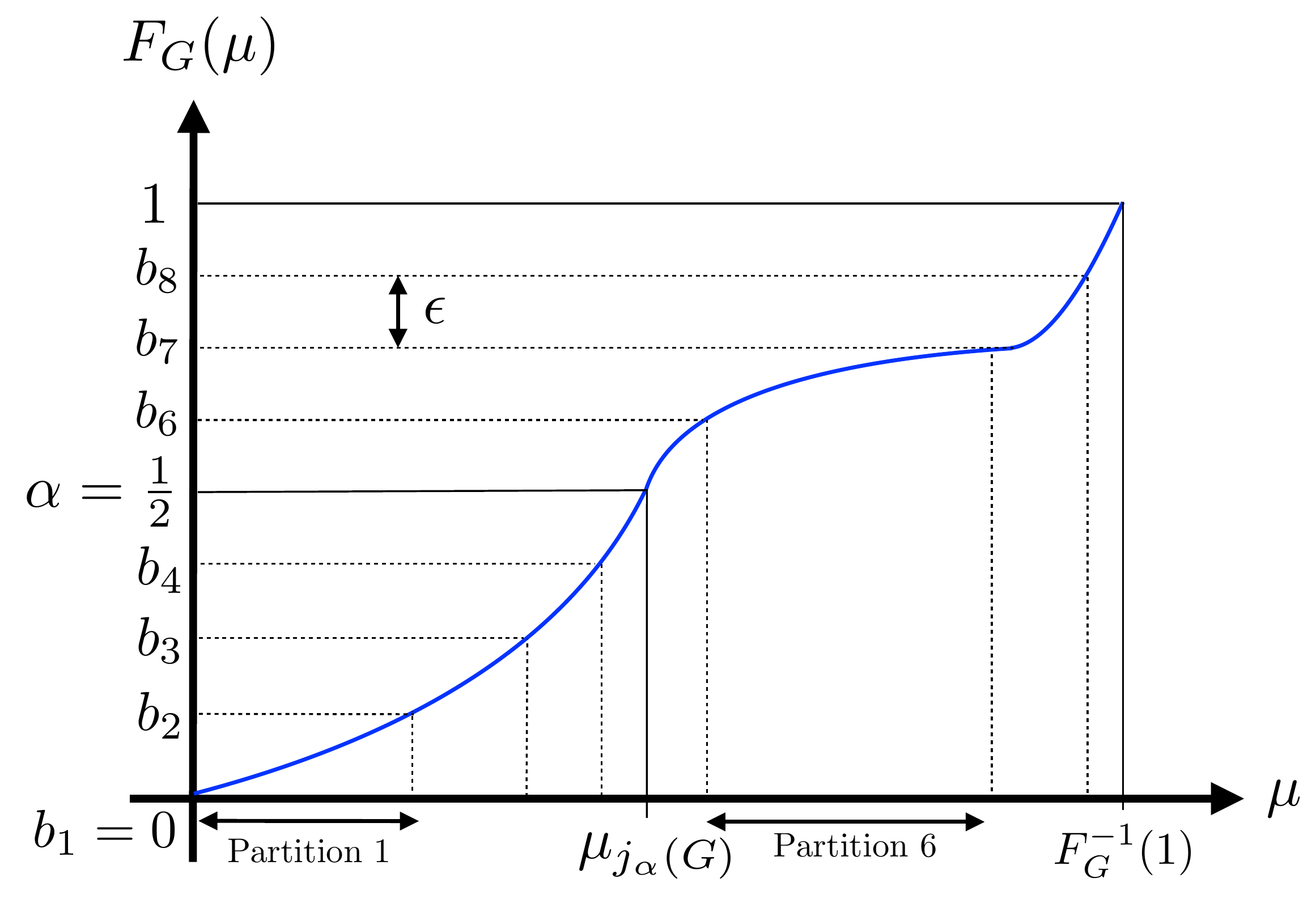}
        \par 
    \end{centering}
    
    \caption{Example of partitioning of arms, with $\alpha = \frac{1}{2}$ and $\epsilon = \frac{1}{8}$. \label{fig:partition}}
\end{figure}

Specifically, in Appendix \ref{app:DeltaH0}, we use the above partitioning to show that $\Delta'_{H, j} \ge \widetilde{\Delta}'_{G,i}$, where
\begin{equation}
    \widetilde{\Delta}'_{G,i}
    \coloneqq  
    \begin{cases}
        \ExpRewardInvCDF_{G}(1 - \alpha -\epsilon) - 
        \ExpRewardInvCDF_{G}(b_{i+1})
        & \text{ if } \, 0 \le i < 
        \left\lfloor \frac{1-\alpha}{\epsilon}
        \right\rfloor - 1\\
        \ExpRewardInvCDF_{G}(b_{i}) - 
        \ExpRewardInvCDF_{G}(1 - \alpha +\epsilon)
        & \text{ if } \,\left\lfloor \frac{1-\alpha}{\epsilon}
        \right\rfloor + 1 < i \le m \\
        0 & \text{otherwise}. 
    \end{cases} \label{eq:Delta_tilde}
\end{equation}
Intuitively, the third case corresponds to arms so close to the quantile that no positive $\Delta'_{H,j}$ gap can be guaranteed (though other gaps such as $\Delta_H$ may still be positive).

Combining the lower bounds on $\Delta_H$, $\Delta_0$, and $\Delta'_{H, j}$, we have
   $ \Delta_{H, j} 
    \ge
    \max\big\{
    \Delta,
    \widetilde{\Delta}_G,
    \widetilde{\Delta}_0,
    \widetilde{\Delta}'_{G,i}
    \big\}$,
where $i$ is the index of the partition of $H$ that $j$ belongs to, i.e., $j \in S_{H, i}$.
To simplify notation, let
\begin{equation}
\label{eq:tilde_Delta_G_i}
    \widetilde{\Delta}_{G, i} \coloneqq
    \widetilde{\Delta}_{G(H), i} =
    \max\left\{
    \Delta,
    \widetilde{\Delta}_G,
    \widetilde{\Delta}_0,
    \widetilde{\Delta}'_{G,i}
    \right\} \le \Delta_{H, j}.
\end{equation}
To completely remove the dependence on $\gH$, we also need to bound the number of arms in each partition $S_{H,i}$.  In Lemma \ref{lem:partition_arms} in Appendix \ref{app:intermediate}, we use a concentration argument to establish that $|S_{H,i}| \le 3\epsilon N$ for all $i$ with probability at least $1-\delta$, leading to our final guarantee stated as follows.
\begin{corollary}
    \label{cor:samplebound_independent_H}
    Let $(\gG, \alpha, \epsilon, \Delta, \delta)$ be a valid input of Algorithm~\ref{alg:main}.
    Under Assumption~\ref{as:noise}, with probability at least $1- 3\delta$, 
    Algorithm~\ref{alg:main} identifies a group $G \in \gG$
    satisfying \eqref{eq:best_group_relax_eps_Delta},
    and uses a total number of arm pulls satisfying
        \begin{align}
        \label{eq:armpullbound_independent_H}
            T(\epsilon, \delta, \Delta) 
            &\leq 
            \sum\limits_{G \in \gG}
            \sum_{i=1}^m
                \frac{c}{
            \widetilde{\Delta}_{G, i}^2}
           \log 
            \left(
                \frac{|\gG| N }{\delta}
                \log \frac{1}{\widetilde{\Delta}_{G, i}^2}
            \right) 
        \cdot
        3 \epsilon N   \\
        &\le  d \left(  
     \frac{|\gG|}{\epsilon^2 \Delta^2}  \right)
     \left(
     \log^2 \frac{|\gG|}{\delta}
     + 
     \left( \log \frac{|\gG|}{\delta} \right)
     \left( \log \log \frac{1}{\Delta} \right)
     \right) \label{eq:weakened_final}
        \end{align}
    where $c$ and $d$ are universal constants, $m$ is given in \eqref{eq:number_of_buckets},
    $\widetilde{\Delta}_{G, i}$ is as defined in \eqref{eq:tilde_Delta_G_i}, 
    and $N = \big \lceil\frac{1}{2\epsilon^2} \log \frac{2|\gG|}{\delta}\big\rceil$
    as stated in Line~\ref{line:N}
    of Algorithm~\ref{alg:main}.
\end{corollary}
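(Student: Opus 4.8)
The plan is to obtain the corollary by substituting the high‑probability gap lower bounds and the partition‑size estimate into the ``black‑box'' pull count of Theorem~\ref{thm:samplebound}, after conditioning on a single good event. I would intersect three events, each of probability at least $1-\delta$: Event~$A$ (from which, via the width‑$\epsilon$ partitioning of $[0,1]$ in Appendix~\ref{app:DeltaH0}, the gap lower bounds $\Delta_H \ge \widetilde{\Delta}_G$, $\Delta_0 \ge \widetilde{\Delta}_0$, and $\Delta'_{H,j} \ge \widetilde{\Delta}'_{G,i}$ follow); the good event of Theorem~\ref{thm:samplebound} (which subsumes Event~$B$, hence \eqref{eq:optimal H}, and additionally supplies $T \le \sum_{H \in \gH}\sum_{j=1}^N \frac{c}{\Delta_{H,j}^2}\log(\frac{|\gH|N}{\delta}\log\frac{1}{\Delta_{H,j}^2})$); and the event $\{\,|S_{H,i}| \le 3\epsilon N$ for all $H, i\,\}$ of Lemma~\ref{lem:partition_arms}. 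A union bound makes the intersection hold with probability at least $1-3\delta$. On it, Events~$A$ and $B$ both hold, so Theorem~\ref{thm:correctness} yields that the returned group satisfies \eqref{eq:best_group_relax_eps_Delta}; it remains to bound the arm pulls on this same event.

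For the first inequality~\eqref{eq:armpullbound_independent_H}, I would start from the Theorem~\ref{thm:samplebound} bound, use $|\gH|=|\gG|$ and the one‑to‑one correspondence $H \leftrightarrow G$, and regroup the inner sum over $j \in H$ according to the partition cell $S_{H,i}$ containing $j$. For $j \in S_{H,i}$ we have $\Delta_{H,j} \ge \widetilde{\Delta}_{G,i}$, which is precisely~\eqref{eq:tilde_Delta_G_i} assembled from the three gap lower bounds; since the per‑arm summand $\frac{c}{x^2}\log(\frac{|\gH|N}{\delta}\log\frac{1}{x^2})$ is non‑increasing in the gap $x$ on the relevant range, each term is at most $\frac{c}{\widetilde{\Delta}_{G,i}^2}\log(\frac{|\gG|N}{\delta}\log\frac{1}{\widetilde{\Delta}_{G,i}^2})$. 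There are at most $3\epsilon N$ indices $j$ in $S_{H,i}$, so the sub‑sum over $S_{H,i}$ is at most $3\epsilon N$ times this common term; summing over the partition cells and over $G \in \gG$ gives~\eqref{eq:armpullbound_independent_H}.

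For the instance‑free bound~\eqref{eq:weakened_final}, I would lower‑bound every $\widetilde{\Delta}_{G,i}$ by $\Delta$ (so $\widetilde{\Delta}_{G,i}^{-2} \le \Delta^{-2}$ and $\log\frac{1}{\widetilde{\Delta}_{G,i}^2} \le 2\log\frac{1}{\Delta}$), bound the number of cells by $O(1/\epsilon)$ using~\eqref{eq:number_of_buckets}, and substitute $N = O(\frac{1}{\epsilon^2}\log\frac{|\gG|}{\delta})$, so that $3\epsilon N = O(\frac{1}{\epsilon}\log\frac{|\gG|}{\delta})$ and, invoking the hypothesis $\delta < \epsilon$ from Line~1 of Algorithm~\ref{alg:main}, $\log\big(\frac{|\gG|N}{\delta}\log\frac{1}{\Delta^2}\big) = O\big(\log\frac{|\gG|}{\delta} + \log\log\frac{1}{\Delta}\big)$. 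Multiplying these factors per group and summing over the $|\gG|$ groups produces $O\big(\frac{|\gG|}{\epsilon^2\Delta^2}(\log^2\frac{|\gG|}{\delta} + \log\frac{|\gG|}{\delta}\log\log\frac{1}{\Delta})\big)$, which is~\eqref{eq:weakened_final}.

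The genuinely substantive ingredients are the two I am treating as given: the per‑cell gap lower bounds $\widetilde{\Delta}'_{G,i}$ derived from Event~$A$ through the width‑$\epsilon$ partition (Appendix~\ref{app:DeltaH0}), whose whole point is that the partition's ``rounding error'' matches the $\epsilon$‑relaxation in~\eqref{eq:best_group_relax_eps_Delta}, and the concentration estimate $|S_{H,i}| \le 3\epsilon N$ of Lemma~\ref{lem:partition_arms}, which controls how many arms can land in any single cell. Within the corollary's proof proper, the only delicate point is the monotonicity step used when replacing $\Delta_{H,j}$ by $\widetilde{\Delta}_{G,i}$ inside the nested logarithm, and the logarithmic bookkeeping in the last paragraph; the one place to be careful is using $\delta < \epsilon$ to collapse the $\log N$ contribution into $\log\frac{|\gG|}{\delta}$.
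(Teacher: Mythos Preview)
Your proposal is correct and follows essentially the same route as the paper: the paper factors the argument through an intermediate Corollary~\ref{cor:samplebound_simplified} (which regroups the Theorem~\ref{thm:samplebound} sum by partition cell and replaces $\Delta_{H,j}$ by $\widetilde{\Delta}_{G(H),i}$, under Events~$A$ and~$B$ with probability $1-2\delta$), then combines this with Lemma~\ref{lem:partition_arms} and Theorem~\ref{thm:correctness} to reach the claim at probability $1-3\delta$. Your direct union of the three events and in-place regrouping is the same argument without the intermediate corollary.
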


The proof is deferred to Appendix~\ref{app:intermediate}.
Here \eqref{eq:armpullbound_independent_H} is our main instance-dependent bound, whereas \eqref{eq:weakened_final} is a weaker result obtained by lower bounding all gaps by $\Delta$, as was done in \eqref{eq:weakened} for the finite-arm setting (note that in both cases, the asymptotic simplifications rely on the assumption $\delta < \epsilon$ specified in Algorithm \ref{alg:main}).
In Section \ref{sec:lower}, we provide a near-matching algorithm-independent lower bound to \eqref{eq:weakened_final} in the worst case, and discuss the difficulties in obtaining instance-dependent lower bounds that nearly match \eqref{eq:armpullbound_independent_H}.  Moreover, in Appendix \ref{sec:comparisons}, we discuss some key differences compared to the problem of finding an arm in the top $(1-\alpha)$-quantile of a single group \cite{aziz2018pure}.


\vspace*{-2ex}
\subsection{Further Improvement via a Multi-Step Algorithm} \label{sec:improve}
\vspace*{-1ex}

A potential weakness of the upper bound in \eqref{eq:best_group_relax_eps_Delta} is that it is based on immediately choosing $N$ large enough such that each group's quantile is $\epsilon$-close to the true quantile in the sense of \eqref{eq:sampled_quantile_sandwiched} (i.e., Event~$A$).  However, if a group is highly suboptimal, it could potentially be identified as such even if $\epsilon$ is replaced by a much higher value in the choice of $N$ in Algorithm \ref{alg:main}.  Thus, if $\epsilon$ is very small, then immediately using it to choose $N$ (which has strong $\frac{1}{\epsilon^2}$ dependence) may be highly wasteful.

To alleviate this problem, one may use a \emph{sequence} of decreasing $\epsilon$ and $\Delta$ values.  We denote these by $\boldsymbol{\epsilon} = (\epsilon_1,\dotsc,\epsilon_K)$ and $\boldsymbol{\Delta} = (\Delta_1,\dotsc,\Delta_K)$ for some $K > 0$, and we assume that $(\epsilon_K,\Delta_K) = (\epsilon,\Delta)$.  We proceed in $K$ epochs, indexing them by $k$ (initially $k=1$) and proceeding as follows:
\begin{itemize}[itemsep=0pt,topsep=0pt,parsep=0pt]
    \item[(i)] Set $N_k = \big \lceil\frac{1}{2\epsilon_k^2} \log \frac{2|\gG|}{\delta}\big\rceil$, and request $N_k$ arms from each group.
    \item[(ii)] Run successive elimination (\texttt{FiniteArmBQID}) with the finite sets of arms from Step (i) and parameter $\Delta_k$.
    \item[(iii)] If multiple groups remain and $k < K$, then increment $k$ and return to Step (i) while permanently removing all groups that were eliminated in Step (ii) (as well as discarding all previously-requested arms).
    \item[(iv)] Otherwise, return $\widehat{G}$ corresponding to the $\widehat{H}$ returned in the last invocation of \texttt{FiniteArmBQID}.
\end{itemize}
In the following, we think of $K$ as having mild dependence with respect to the other parameters, e.g., if halving is used for the smaller of $\Delta$ and $\epsilon$, then $K = O\big( \log\frac{1}{\min\{\epsilon,\Delta\}} \big)$.

We proceed by outlining how the analysis and results are affected.  Since we are essentially running Algorithm \ref{alg:main} $K$ times, the error probability increases from $O(\delta)$ to $O(\delta K)$.  Since the dependence on $\delta$ is logarithmic, this effect of scaling $\delta$ is minimal.  More importantly, we need to identify when suboptimal groups are eliminated.  We let $\widetilde{\Delta}^{(\epsilon)}_{G}$ be defined as in \eqref{eq:Delta_H_lower_bound} with an explicit dependence on $\epsilon$, and introduce the following critical values:
\begin{itemize}[itemsep=0pt,topsep=0pt,parsep=0pt]
    \item If there exists $k$ such that $\widetilde{\Delta}^{(\epsilon_k)}_{G, i} > \Delta_k$, we let $k_{\max}(G)$ be the smallest such $k$ value.
    \item If no such $k$ exists, we let $k_{\max}(G) = K$.
\end{itemize}
Our analysis of Algorithm \ref{alg:main} reveals that a suboptimal group is eliminated when $\widetilde{\Delta}_{G} > \Delta$, and hence, the above definitions ensure that group $G$ will be eliminated (with high probability) after the end of epoch $k_{\max}(G)$.  As a result, we have the following generalization of Corollary \ref{cor:samplebound_simplified}.

\begin{corollary}
    \label{cor:samplebound_improved}
    Consider running the preceding multi-step algorithm with sequences $\boldsymbol{\epsilon} = (\epsilon_1,\dotsc,\epsilon_K)$ and $\boldsymbol{\Delta} = (\Delta_1,\dotsc,\Delta_K)$ for some $K > 0$, with $\delta_k < \epsilon_k < \min(\alpha,1-\alpha)$ for all $k$.  Under Assumption~\ref{as:noise}, with probability at least $1-3K\delta$, the algorithm identifies a group $G \in \gG$ satisfying~\eqref{eq:best_group_relax_eps_Delta},
    and uses a total number of arm pulls satisfying
        \begin{align}
            T(\epsilon, \delta, \Delta) 
            &\leq 
            \sum\limits_{G \in \gG} \sum_{k=1}^{k_{\max}(G)}
            \sum_{i=1}^{m_k}
                \frac{c}{
            (\widetilde{\Delta}^{(\epsilon_k)}_{G, i})^2}
           \log 
            \left(
                \frac{|\gG| N_k }{\delta}
                \log \frac{1}{(\widetilde{\Delta}^{(\epsilon_k)}_{G, i})^2}
            \right) 
        \cdot
        3 \epsilon_k N_k \label{eq:num_pulls_impr}
    \end{align}
    where $c$ is a universal constant, $m_k$ is as in \eqref{eq:number_of_buckets} with $\epsilon_k$ in place of $\epsilon$,
    $\widetilde{\Delta}^{(\epsilon)}_{G, i}$ is as in \eqref{eq:tilde_Delta_G_i} with $\Delta_G = \Delta_G^{(\epsilon)}$ explicitly depending on $\epsilon$, and $N_k = \big \lceil\frac{1}{2\epsilon_k^2} \log \frac{2|\gG|}{\delta}\big\rceil$.
\end{corollary}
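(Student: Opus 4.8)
The plan is to treat the multi-step procedure as $K$ essentially independent reruns of the single-step analysis behind Theorem~\ref{thm:correctness}, Theorem~\ref{thm:samplebound} and Corollary~\ref{cor:samplebound_independent_H}, carried out on a shrinking sequence of group sets $\gG = \gG_1 \supseteq \gG_2 \supseteq \cdots \supseteq \gG_K$, and then to glue the per-epoch guarantees together by a union bound over epochs. Because Step~(iii) discards all previously requested arms, the $N_k$ arms drawn in epoch~$k$ are fresh and independent of everything before; hence, conditioned on any realization of $\gG_k$, the three ``good events'' of epoch~$k$ — the analogue of Event~$A$ (the sandwich \eqref{eq:sampled_quantile_sandwiched} with $\epsilon_k$ in place of $\epsilon$, holding for every $G\in\gG_k$), the analogue of Event~$B$ (\texttt{FiniteArmBQID} returning a group satisfying \eqref{eq:optimal H} with parameter $\Delta_k$ and never eliminating a group whose reward-gap is at most $\Delta_k$), and the partition-size bound $|S_{H,i}|\le 3\epsilon_k N_k$ of Lemma~\ref{lem:partition_arms} — each fail with probability at most $\delta$. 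A union bound over the (at most) three events in each of the $K$ epochs gives overall failure probability at most $3K\delta$; condition on all of them from here on.

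The engine of both the correctness and the pull-count arguments is the following survival fact, which I would establish first: for every $k$, the group $G_0^{(k)}\in\argmax_{G\in\gG}F_{G}^{-1}(1-\alpha-\epsilon_k)$ is present in epochs $1,\dots,k$. Indeed, the single-step analysis shows a group is eliminated in an epoch with parameters $(\epsilon_{k'},\Delta_{k'})$ only when its reward-gap lower bound $\widetilde{\Delta}^{(\epsilon_{k'})}_{G}$ from \eqref{eq:Delta_H_lower_bound} exceeds $\Delta_{k'}$; for $k'\le k$, monotonicity of $F^{-1}$ and $\epsilon_{k'}\ge\epsilon_k$ give $\widetilde{\Delta}^{(\epsilon_{k'})}_{G_0^{(k)}}\le \max_{G\in\gG}F_{G}^{-1}(1-\alpha-\epsilon_{k}) - F_{G_0^{(k)}}^{-1}(1-\alpha-\epsilon_{k}) = 0 < \Delta_{k'}$, so $G_0^{(k)}$ is never eliminated up to and including epoch~$k$. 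In particular $G_0^{(K)}$ survives all $K$ epochs, and since \texttt{FiniteArmBQID} in the last epoch returns $\widehat{H}$ satisfying \eqref{eq:optimal H} with $\Delta_K=\Delta$ over the surviving finite groups (which include the finite sample of $G_0^{(K)}$), chaining this through the epoch-$K$ version of Event~$A$ exactly as in the proof of Theorem~\ref{thm:correctness} yields \eqref{eq:best_group_relax_eps_Delta} for $\widehat{G}$; a short check also confirms that any group eliminated along the way was genuinely $(\epsilon,\Delta)$-suboptimal, so dropping it causes no harm.

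For the pull count I would show that group $G$ incurs arm pulls only in epochs $k\le k_{\max}(G)$. Once $\widetilde{\Delta}^{(\epsilon_k)}_{G}>\Delta_k$ (the smallest such $k$ being $k_{\max}(G)$), the epoch-$k$ version of Event~$A$ together with the survival fact for $G_0^{(k)}$ — which keeps the ``$\max_{G'\in\gG}$'' in \eqref{eq:Delta_H_lower_bound} attained inside $\gG_k$ — gives $\Delta_H\ge\widetilde{\Delta}^{(\epsilon_k)}_{G}>\Delta_k$, so $G$ is eliminated by the end of that epoch. In each epoch $k\le k_{\max}(G)$ in which $G$ is present, its arm pulls are bounded verbatim by the per-group term in \eqref{eq:armpullbound_independent_H} with $(\epsilon,\Delta,N)\mapsto(\epsilon_k,\Delta_k,N_k)$, i.e.\ by $\sum_{i=1}^{m_k}\frac{c}{(\widetilde{\Delta}^{(\epsilon_k)}_{G,i})^2}\log\big(\frac{|\gG|N_k}{\delta}\log\frac{1}{(\widetilde{\Delta}^{(\epsilon_k)}_{G,i})^2}\big)\cdot 3\epsilon_k N_k$; here one uses that the gaps $\widetilde{\Delta}^{(\epsilon_k)}_{G,i}$ appearing on the right, defined via \eqref{eq:tilde_Delta_G_i} over the full set $\gG$ and the final $\Delta\le\Delta_k$, remain lower bounds on the true epoch-$k$ gaps computed over $\gG_k$ with relaxation $\Delta_k$ (shrinking the group set only enlarges $\Delta_H$ and $\Delta_0$), so substituting them keeps the bound an overestimate. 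Summing over $G\in\gG$ and $k=1,\dots,k_{\max}(G)$ yields \eqref{eq:num_pulls_impr}.

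The step I expect to be the main obstacle is exactly this gap bookkeeping across epochs: once the group set is whittled down, one must verify that \eqref{eq:Delta_H_lower_bound} and the derived quantities $\widetilde{\Delta}^{(\epsilon_k)}_{G,i}$ still hold — and only improve — relative to the current $\gG_k$ rather than the original $\gG$, which in turn rests on knowing that the relevant argmax groups $G_0^{(k)}$ are never eliminated too early. Granting the survival fact, the rest is a mechanical re-run of the epoch-1 analysis with $(\epsilon,\Delta)\mapsto(\epsilon_k,\Delta_k)$ plus the union bound over the $K$ epochs described above.
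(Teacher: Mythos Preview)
Your proposal is correct and follows essentially the same approach as the paper: both treat the multi-step procedure as $K$ reruns of the single-step analysis (Corollary~\ref{cor:samplebound_independent_H}), invoke a union bound over the three per-epoch events to get the $3K\delta$ failure probability, and use the fact that a group with $\widetilde{\Delta}^{(\epsilon_k)}_G > \Delta_k$ is eliminated by the end of epoch~$k$ to restrict the outer sum to $k \le k_{\max}(G)$. The paper's justification is only a terse outline (a few sentences preceding the corollary statement), whereas you are more careful about a point the paper leaves implicit---namely the ``survival fact'' that the relevant $\argmax$ group $G_0^{(k)}$ is never eliminated early, which is what ensures the gap lower bounds \eqref{eq:Delta_H_lower_bound}--\eqref{eq:tilde_Delta_G_i} computed over the full $\gG$ remain valid (in fact only improve) when computed over the shrunken $\gG_k$.
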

This bound is more complicated than Corollary \ref{cor:samplebound_independent_H}, but can give significant improvements due to highly suboptimal groups being eliminated early and avoiding the smallest $\epsilon_k$ terms.  For example, in a scenario where all arms have the same gap term $\Delta$ but $\epsilon_{k_{\max}(G)} \gg \epsilon$ for all suboptimal $G$, the $\frac{1}{\epsilon^2}$ dependence in \eqref{eq:weakened_final} would be improved to $\frac{1}{(\min_{G \ne G^*} \epsilon_{k_{\max}(G)})^2}$.  See Appendix~\ref{sec:compare_aziz} for further discussion, and Figure \ref{fig:eps} (in Appendix \ref{sec:instance_lb}) for a visual illustration.

On the negative side, choosing the sequences $\boldsymbol{\epsilon}$ and $\boldsymbol{\Delta}$ may be tricky.  For example, if we choose $\Delta_k = \Delta$ for all $k$, then some of the larger $\epsilon_k$ values may lead to a group unnecessarily incurring $\frac{1}{\Delta^2}$ dependence (even if using the smallest $\epsilon$ would have led to a large gap -- again, see Figure \ref{fig:eps} in Appendix \ref{sec:instance_lb}).  One seemingly natural choice is to scale down each $\epsilon_k$ and $\Delta_k$ by a constant factor on each iteration, with the smaller of $\epsilon$ and $\Delta$ using a constant of $2$ (and the other constant being chosen to ensure the correct $K$-th value).  However, we leave it for future work as to whether this choice has near-optimality guarantees, or whether better universal choices exist.


\vspace*{-2ex}
\section{Lower Bounds}
\vspace*{-1ex}
\label{sec:lower}

Our final upper bounds on the number of arm pulls, given in Corollary \ref{cor:samplebound_independent_H} and Corollary \ref{cor:samplebound_improved}, have several dependencies of interest, including the number of partitions $m$, the gaps $\widetilde{\Delta}_{G, i}$, and the multiplicative term $3\epsilon N = O\big( \frac{1}{\epsilon} \log\frac{|\mathcal{G}|}{\delta} \big)$.  We discuss some of these dependencies in Section~\ref{sec:improve} and Appendix~\ref{sec:compare_aziz}, and we further discuss the difficulties of obtaining instance-dependent lower bounds in Appendix \ref{sec:instance_lb}.

In addition, we derived the weakened upper bound \eqref{eq:weakened_final}, which can also be viewed as corresponding to \eqref{eq:armpullbound_independent_H} in the case that all $\widetilde{\Delta}_{G, i}$ are on the same order as $\Delta$, i.e., a ``worst-case'' instance.  Observe that when $\delta = \frac{\epsilon}{2}$ (recall that we constrain $\delta < \epsilon$), \eqref{eq:weakened_final} reduces to\footnote{The notation $\widetilde{O}(\cdot)$ hides a $\log^2\frac{|\mathcal{G}|}{\epsilon}  + \log\frac{|\mathcal{G}|}{\epsilon} \cdot \log\log\frac{1}{\Delta}$ factor.  If we are interested in constant error probability, we could slightly modify our analysis to use two values $\delta_1$ and $\delta_2$ to replace the two occurrences of $\delta$ in Lemmas \ref{lem:EventA} and \ref{lem:EventB}.  Then we would only require $\delta_2 < \epsilon$, and the hidden dependence would improve to $\log\frac{|\mathcal{G}|}{\epsilon} \cdot \log\log\frac{1}{\Delta}$.}
\begin{equation}
	T(\epsilon,\delta,\Delta) \le \widetilde{O}\bigg( \frac{ |\mathcal{G}| }{ \Delta^2 \epsilon^2 } \bigg). \label{eq:ub_worst}
\end{equation}

Our main result of this section is the following, which states a near-matching $\Omega\big( \frac{1}{ \Delta^2 \epsilon^2 } \big)$ lower bound for worst-case instances.


\begin{theorem} \label{thm:lb}
	{\em (Worst-Case Lower Bound)}
	For any $\epsilon,\Delta \in \big(0,\frac{1}{4}\big)$, any given number of groups $|\mathcal{G}| \ge 2$,\footnote{We thank an anonymous reviewer for suggesting that we generalize this result from $|\mathcal{G}| = 2$ to general $|\mathcal{G}| \ge 2$.} any constant value of $\delta \in \big(0,\frac{1}{2}\big)$, and any algorithm for our max-quantile grouped bandit problem with $\alpha = \frac{1}{2}$ guaranteeing \eqref{eq:best_group_relax_eps_Delta} with probability at least $1-\delta$, there exists an instance with Bernoulli rewards such that the time horizon $T$ satisfies
	\begin{equation}
		\EE[T] \ge \Omega\bigg( \frac{ |\mathcal{G}| }{ \Delta^2 \epsilon^2 } \bigg). 
		\label{eq:lb_worst}
	\end{equation}
	That is, the worst-case upper bound in \eqref{eq:ub_worst} is tight up to logarithmic factors.
\end{theorem}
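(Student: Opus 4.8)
The plan is to prove the bound by a change-of-measure (likelihood-ratio) argument over a family of instances in which every group has a two-atom reservoir, designed so that telling two groups apart requires \emph{both} sampling $\Omega(\epsilon^{-2})$ arms (the quantile-estimation cost) \emph{and} resolving each sampled arm to accuracy $\Delta$ (the mean-estimation cost), and --- crucially --- so that the product $\epsilon^2\Delta^2$ is what enters the KL divergence per arm pull. Take Bernoulli rewards and let $\nu_p$ be the reservoir with mass $p$ on mean $\tfrac12-\Delta$ and mass $1-p$ on mean $\tfrac12+\Delta$ (legal since $\Delta<\tfrac14$, with all means in $[0,1]$). Define the base instance $\mathcal{I}_0$ in which all groups have reservoir $\nu_{1/2+2\epsilon}$, and for each $g\in\{1,\dots,|\mathcal{G}|\}$ the alternative $\mathcal{I}_g$ in which group $g$ has reservoir $\nu_{1/2-2\epsilon}$ and every other group keeps $\nu_{1/2+2\epsilon}$. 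Computing the generalized-inverse CDFs (using $\epsilon<\tfrac14$) shows that under $\mathcal{I}_0$ every group $G$ has $F_G^{-1}(\tfrac12\pm\epsilon)=\tfrac12-\Delta$, so every group satisfies \eqref{eq:best_group_relax_eps_Delta}; whereas under $\mathcal{I}_g$ one gets $F_g^{-1}(\tfrac12\pm\epsilon)=\tfrac12+\Delta$ while all other groups still have $\tfrac12-\Delta$, and since $(\tfrac12+\Delta)-\Delta=\tfrac12>\tfrac12-\Delta$, group $g$ is the \emph{unique} valid answer. Hence any algorithm guaranteeing \eqref{eq:best_group_relax_eps_Delta} with probability at least $1-\delta$ must output $g$ with probability $\ge 1-\delta$ under $\mathcal{I}_g$, and must output \emph{some} group with probability $\ge 1-\delta$ under $\mathcal{I}_0$.

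The core is a divergence bound. Writing $T_g$ for the total number of pulls of arms in group $g$, and $\mathbb{P}^{(0)},\mathbb{P}^{(g)}$ for the laws of the whole observation transcript under $\mathcal{I}_0,\mathcal{I}_g$, I would prove
\[
\mathrm{KL}\!\big(\mathbb{P}^{(0)}\,\big\|\,\mathbb{P}^{(g)}\big)\ \le\ C\,\epsilon^2\Delta^2\,\mathbb{E}_{\mathcal{I}_0}[T_g]
\]
for a universal constant $C$. Because $\mathcal{I}_0$ and $\mathcal{I}_g$ differ only in the reservoir of group $g$, a chain-rule decomposition of the KL over the observation sequence kills all terms except pulls of group-$g$ arms; regrouping those pulls by arm and applying the chain rule again reduces the bound to a per-arm quantity: the KL between the laws, under $\nu_{1/2+2\epsilon}$ versus $\nu_{1/2-2\epsilon}$, of the reward sequence obtained by pulling one freshly-drawn arm $k$ times. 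By exchangeability only the count matters, and the key lemma is
\[
\mathrm{KL}\!\big(M^{(k)}_{1/2+2\epsilon}\,\big\|\,M^{(k)}_{1/2-2\epsilon}\big)\ \le\ C\,\epsilon^2\min\{1,k\Delta^2\}\ \le\ C\,\epsilon^2 k\Delta^2,\qquad M^{(k)}_p:=p\,\mathrm{Bin}\!\big(k,\tfrac12-\Delta\big)+(1-p)\,\mathrm{Bin}\!\big(k,\tfrac12+\Delta\big).
\]
Here the factor $\epsilon^2$ is, for every $k$, the data-processing bound by the binary KL $d(\tfrac12+2\epsilon,\tfrac12-2\epsilon)=\Theta(\epsilon^2)$ (observing pulls is a coarsening of observing the latent mean), while $\min\{1,k\Delta^2\}$ encodes that $k$ pulls reveal only an $O(k\Delta^2)$ fraction of the information separating the two Binomial components; I would establish it via $\mathrm{KL}\le\chi^2$ together with the identity $\chi^2\big(\mathrm{Bin}(k,\tfrac12-\Delta)\,\|\,\mathrm{Bin}(k,\tfrac12+\Delta)\big)=\big(1+\tfrac{4\Delta^2}{1/4-\Delta^2}\big)^k-1$. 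Summing the lemma over the (adaptively, randomly many) arms of group $g$ then yields the displayed bound. A fixed-mixing-weight variant of the construction, with reservoirs $\nu_{3/4},\nu_{1/4}$, handles $\epsilon$ close to $\tfrac14$, where the target is merely $\Omega(|\mathcal{G}|/\Delta^2)$.

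To conclude, put $q_g:=\mathbb{P}_{\mathcal{I}_0}(\widehat{G}=g)$, so that $\sum_g q_g\le 1$ and at most one $q_g$ exceeds $1-\delta$. For every $g$ with $q_g\le 1-\delta$, the data-processing inequality applied to the indicator $\mathbf{1}\{\widehat{G}=g\}$, together with correctness under $\mathcal{I}_g$, gives $d(q_g,\,1-\delta)\le\mathrm{KL}(\mathbb{P}^{(0)}\|\mathbb{P}^{(g)})\le C\,\epsilon^2\Delta^2\,\mathbb{E}_{\mathcal{I}_0}[T_g]$. Summing over these (at least $|\mathcal{G}|-1$) groups and using convexity of $q\mapsto d(q,1-\delta)$ together with the fact that their average is at most $\tfrac12$ (since $\sum_g q_g\le1$ and $|\mathcal{G}|\ge2$) lower-bounds the left side by $(|\mathcal{G}|-1)\,d(\tfrac12,1-\delta)$, and $c(\delta):=d(\tfrac12,1-\delta)>0$ depends only on the constant $\delta\in(0,\tfrac12)$. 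Hence $\mathbb{E}_{\mathcal{I}_0}[T]=\sum_g\mathbb{E}_{\mathcal{I}_0}[T_g]\ge\frac{(|\mathcal{G}|-1)\,c(\delta)}{C\,\epsilon^2\Delta^2}=\Omega\!\big(\tfrac{|\mathcal{G}|}{\Delta^2\epsilon^2}\big)$, which is the claimed lower bound.

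The step I expect to be the main obstacle is the divergence bound $\mathrm{KL}(\mathbb{P}^{(0)}\|\mathbb{P}^{(g)})\le C\epsilon^2\Delta^2\,\mathbb{E}_{\mathcal{I}_0}[T_g]$: naively including the latent arm means in the transcript only yields $\mathrm{KL}\le C\epsilon^2\cdot\mathbb{E}[\#\text{arms requested from }g]$, which is useless because requesting arms is free. One genuinely has to marginalize out the latent means and exploit the mixture structure, carefully accounting for the adaptivity of both \emph{which} arm is pulled and \emph{how many times} it is pulled --- this is the likelihood-ratio analysis alluded to in the introduction, and the real work lies in extracting the $\min\{1,k\Delta^2\}$ dependence (so that the two small scales multiply) rather than settling for the trivial $O(\epsilon^2)$ per pulled arm.
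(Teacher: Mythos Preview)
Your approach is sound and yields the same bound, but it differs from the paper's in two substantive ways.

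\textbf{Construction and reduction.} The paper keeps one ``anchor'' group whose arms all have mean $\tfrac12$, and for the remaining groups defines $|\mathcal{G}|$ instances (one with all other groups ``bad'', and $|\mathcal{G}|-1$ with a single ``good'' group). It proves the $|\mathcal{G}|=2$ case first and then reduces the general case to it by noting that some group $j^*$ must satisfy $\EE_1[T_{j^*}]\le \EE_1[T]/(|\mathcal{G}|-1)$. Your fully symmetric construction with a null instance $\mathcal{I}_0$ and $|\mathcal{G}|$ alternatives treats all groups at once by summing the $\mathrm{KL}(\mathbb{P}^{(0)}\|\mathbb{P}^{(g)})$, which is arguably more direct and avoids the reduction step.

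\textbf{Likelihood ratio vs.\ KL.} The paper does not use a KL bound. It shows that the likelihood ratio $L_t=\prod_i f(d_i)$ (where $f(d)$ is precisely your ratio $M^{(\cdot)}_{p_0}/M^{(\cdot)}_{p_g}$ written as a function of the score $d$) satisfies $\EE_2[L_{t+1}\mid\mathcal{H}_t]\le(1+O(\epsilon^2\Delta^2))L_t$ via an explicit per-step calculation, and then applies Markov's inequality to $L_T$. Your route via the chain rule over time steps gives the same per-step increment --- the conditional predictive Bernoulli parameters under the two instances differ by $O(\epsilon\Delta)$ uniformly over histories, hence the conditional KL per pull of a group-$g$ arm is $O(\epsilon^2\Delta^2)$ --- and the data-processing step replaces Markov. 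These are equivalent packagings of the same computation.

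One caution on your per-arm lemma: the identity you cite, $\chi^2(\mathrm{Bin}(k,\tfrac12-\Delta)\|\mathrm{Bin}(k,\tfrac12+\Delta))=(1+\tfrac{4\Delta^2}{1/4-\Delta^2})^k-1$, grows \emph{exponentially} in $k$ and does not by itself bound the mixture $\chi^2$. What actually works is to write $M^{(k)}_{p_0}-M^{(k)}_{p_g}=(p_0-p_g)(B_--B_+)$ and $M^{(k)}_{p_g}\ge(\tfrac12-2\epsilon)\cdot\tfrac12(B_-+B_+)$, so the mixture $\chi^2$ is $O(\epsilon^2)$ times the Le Cam distance $\sum_j(B_-(j)-B_+(j))^2/(B_-(j)+B_+(j))\le 4H^2(B_-,B_+)=O(k\Delta^2)$. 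Alternatively --- and this is exactly the paper's route --- skip the per-arm lemma entirely: the per-time-step chain rule with the uniform $O(\epsilon^2\Delta^2)$ conditional KL already handles the adaptivity you flag as the main obstacle, without any regrouping by arm or worrying about random pull counts $K_i$. Finally, a minor fix at the end: to cover $|\mathcal{G}|=2$ cleanly, use ``at least $|\mathcal{G}|-1$ groups have $q_g\le\tfrac12$'' (since $\sum_g q_g\le1$) and monotonicity of $d(\cdot,1-\delta)$ on $[0,1-\delta]$, rather than the Jensen step as written.
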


The proof is given in Appendix \ref{app:pf_lower}, and is based on a direct analysis of pairwise log-likelihood ratios between suitably-chosen ``hard-to-distinguish'' instances (having differing optimal groups) with Bernoulli rewards.  With $\alpha = \frac{1}{2}$, these instances are roughly defined as follows:
\begin{itemize} \itemsep0ex
	\item In all groups except the first, there are only two types of arms, ``good'' and ``bad'', with means $\frac{1}{2}+O(\Delta)$ and $\frac{1}{2}-O(\Delta)$ respectively.
	\item We let group 1 consist entirely of arms with mean $\frac{1}{2}$.  The remaining groups are either ``good'' with a $\frac{1}{2}+O(\epsilon)$ fraction of good arms, or ``bad'' with a $\frac{1}{2}-O(\epsilon)$ fraction of good arms.  We consider one instance where groups $2,\dotsc,|\mathcal{G}|$ are all bad (and hence group 1 is optimal), as well as $|\mathcal{G}|-1$ additional instances where only a single group $j \in \{2,\dotsc,|\mathcal{G}|\}$ is good (and hence group $j$ is optimal).  
\end{itemize}
Then, the rough idea is that for each group indexed by $j \in \{2,\dotsc,|\mathcal{G}|\}$, we need to consider $\Omega\big(\frac{1}{\epsilon^2}\big)$ arms from the group to determine whether or not it is a good group, but we also need to pull those arms $\Omega\big( \frac{1}{\Delta^2} \big)$ times each to determine whether they are good or bad arms, leading to $\Omega\big( \frac{1}{\Delta^2 \epsilon^2} \big)$ arm pulls per group.  Our formal proof makes this intuition more precise.

Naturally, it would also be of significant interest to develop instance-dependent lower bounds.  In Appendix \ref{sec:instance_lb}, we discuss some difficulties in doing so, including how certain standard change-of-measure techniques (e.g., see \cite{kaufmann2016complexity}) appear to be insufficient.  Despite this, such techniques do suggest that certain gap quantities may be relevant, and in Appendix \ref{sec:instance_lb} we compare those with the analogous quantities from our upper bound.

\section{Conclusion}

We have introduced the problem of grouped max-quantile infinite-arm bandits, which comes with a variety of unique challenges compared to its component ``ingredients'' of infinite-arm bandits and grouped bandits.  We introduced a two-step elimination-based algorithm and discussed some evidence for the strength of its instance-dependent upper bound, as well as rigorously proving its optimality up to logarithmic factors in a worst-case sense.  For future work, we expect that it should be possible to attain an instance-dependent lower bound that attains the gap terms discussed in Appendix \ref{sec:instance_lb} while also exhibiting the improved $\epsilon$-dependence that was lacking therein.

\acks{This work was supported by the Singapore National Research Foundation (NRF) under grant number A-0008064-00-00.}

\bibliography{bibliography}

\appendix

\section{
Proof of Theorem~\ref{thm:correctness}
(Correctness)}


%
%
\label{sec-correctness}

In Section \ref{sec:crrect_main}, we introduced Events A and B.  We now show them to be sufficient for Algorithm~\ref{alg:main} to return a group $G$
satisfying \eqref{eq:best_group_relax_eps_Delta}, and then show that each of these events happen with probability at least $1-\delta$ (see Lemma~\ref{lem:EventA} and~\ref{lem:EventB}). This implies that Algorithm~\ref{alg:main} returns a group $G$
satisfying \eqref{eq:best_group_relax_eps_Delta} with probability at least $1-2 \delta$, as stated in Theorem~\ref{thm:correctness}.

\begin{lemma}
    \label{lem:EventA}
    Event A occurs with probability at least $1-\delta$.
\end{lemma}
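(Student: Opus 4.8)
The plan is to reduce Event $A$ to a collection of statements about the empirical CDF of i.i.d.\ uniform indices, one per group, and then apply a uniform (DKW-type or, more simply, two-sided Hoeffding plus union bound) concentration argument. Recall that for a single group $G$, the finite group $H$ is formed by drawing $N$ arms independently from $F_G$, equivalently by drawing indices $j_1,\dotsc,j_N$ i.i.d.\ uniform on $[0,1]$ and setting $\mu_{H,j_\ell} = F_G^{-1}(j_\ell)$. The key observation is monotonicity: since $F_G^{-1}$ is non-decreasing, the event $\{ F_G^{-1}(1-\alpha-\epsilon) \le F_H^{-1}(1-\alpha) \le F_G^{-1}(1-\alpha+\epsilon) \}$ is implied by the event that the empirical quantile of the uniform sample at level $1-\alpha$ lies in $[1-\alpha-\epsilon, 1-\alpha+\epsilon]$ — more precisely, it suffices that the empirical CDF $\widehat{F}$ of the uniform indices satisfies $\widehat{F}(1-\alpha-\epsilon) < 1-\alpha \le \widehat{F}(1-\alpha+\epsilon)$ (with appropriate care about which inequalities are strict, matching the $\inf$ in the definition \eqref{eq:CDF}).

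First I would make this reduction precise: show that if $\frac{1}{N}\sum_\ell \boldsymbol 1\{j_\ell \le 1-\alpha-\epsilon\} < 1-\alpha$ then $F_H^{-1}(1-\alpha) \ge F_G^{-1}(1-\alpha-\epsilon)$, and if $\frac{1}{N}\sum_\ell \boldsymbol 1\{j_\ell \le 1-\alpha+\epsilon\} \ge 1-\alpha$ then $F_H^{-1}(1-\alpha) \le F_G^{-1}(1-\alpha+\epsilon)$; here one uses that $F_G(F_G^{-1}(p)) \ge p$ and that $F_H$ is the empirical CDF of the $\mu_{H,j_\ell}$, together with $\mu_{H,j} \le \tau \iff j \le F_G(\tau)$ up to boundary subtleties that can be absorbed since the $j_\ell$ avoid any fixed point a.s. Second, for each fixed group the two binomial tails $\Binomial(N,1-\alpha-\epsilon)$ and $\Binomial(N,1-\alpha+\epsilon)$ are controlled by Hoeffding's inequality: the probability that $\frac{1}{N}\sum_\ell \boldsymbol 1\{j_\ell \le 1-\alpha-\epsilon\}$ exceeds $1-\alpha$ is at most $\exp(-2N\epsilon^2)$, and symmetrically for the other side, so a single group's sandwich fails with probability at most $2\exp(-2N\epsilon^2)$. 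Third, I would take a union bound over the $|\gG|$ groups: the total failure probability is at most $2|\gG|\exp(-2N\epsilon^2) \le \delta$ exactly by the choice $N = \lceil \frac{1}{2\epsilon^2}\log\frac{2|\gG|}{\delta}\rceil$ in Line~\ref{line:N}, since $2N\epsilon^2 \ge \log\frac{2|\gG|}{\delta}$.

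The main obstacle, such as it is, is purely bookkeeping around the boundary behaviour of the generalized inverse $F_G^{-1}$ and of the empirical CDF $F_H$ when $F_G$ has mass points — i.e.\ getting the strict-versus-weak inequalities in the $\inf$ definitions to line up so that the reduction in the first step is airtight rather than off by a boundary case. This is handled by the footnoted convention that the same index $j$ is never drawn twice and, more importantly, that a uniform draw lands exactly on any of the finitely many relevant thresholds ($1-\alpha\pm\epsilon$, or the jump points of $F_G$) with probability zero, so almost surely all the relevant inequalities are strict and the reduction goes through cleanly. Everything else is a routine Hoeffding-plus-union-bound computation, and the constant in $N$ has been chosen precisely to make the final bound come out to $\delta$.
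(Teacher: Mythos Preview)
Your proposal is correct and takes essentially the same approach as the paper: for each group, bound the two one-sided failure events (the sample $(1-\alpha)$-quantile falling above $F_G^{-1}(1-\alpha+\epsilon)$ or below $F_G^{-1}(1-\alpha-\epsilon)$) via Hoeffding on the relevant $\Binomial(N,\cdot)$ counts, then union-bound over the two sides and over the $|\gG|$ groups, with $N$ chosen so that $2|\gG|\exp(-2N\epsilon^2)\le\delta$. The paper phrases the reduction as ``more than $N\alpha$ sampled arms lie in the top-$(1-\alpha+\epsilon)$ quantile'' rather than via the empirical CDF of the uniform indices, but these are equivalent, and the paper's proof does not spell out the boundary/mass-point considerations you mention.
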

\begin{proof}
    This follows from union bounds and Chernoff-Hoeffding bounds; see Appendix~\ref{sect:appendix_infinite} for the details.
\end{proof}

\begin{lemma}
\label{lem:EventB}
There exists a choice of \texttt{FiniteArmBQID} (see Algorithm \ref{alg:elimination} in Appendix~\ref{sec:subroutine_details}) such that under Assumption~\ref{as:noise}, Event B occurs with probability at least $1-\delta$.
\end{lemma}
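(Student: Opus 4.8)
The plan is to take \texttt{FiniteArmBQID} to be the successive-elimination subroutine given as Algorithm~\ref{alg:elimination} in Appendix~\ref{sec:subroutine_details}, and to deduce Event~$B$ from its correctness guarantee (Theorem~\ref{thm:ub_se}). First I would fix the anytime (time-uniform) confidence bounds used throughout the subroutine — the sub-Gaussian deviation bounds of \cite{kaufmann2016complexity} — and define a single \emph{clean event} $\mathcal{E}$ on which, simultaneously for every arm $j$ in every group $H \in \gH$ and every round $t$, the empirical mean $\hat{\mu}_{j, T_j(t)}$ lies within the confidence radius $\beta(T_j(t),\cdot)$ of the true mean $\mu_{H,j}$. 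Because the bound is anytime-valid, the only union bound needed is over the $|\gH| N$ arms, and calibrating its parameter appropriately gives $\PP[\mathcal{E}] \ge 1-\delta$. The remainder of the argument conditions on $\mathcal{E}$.

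Second, on $\mathcal{E}$ I would establish the inductive invariant that every elimination and stopping decision taken by Algorithm~\ref{alg:elimination} is justified. Two structural facts drive this: (a) monotonicity of the finite-group quantile $\ExpRewardInvCDF_{H}(1-\alpha)$ in the underlying arm means implies that the per-arm confidence intervals yield valid lower/upper confidence bounds on $\ExpRewardInvCDF_{H}(1-\alpha)$; and (b) once a given arm's confidence interval is within $\Delta_{H,j}/4$ of its mean (with $\Delta_{H,j}$ as in \eqref{def:gap}), its residual uncertainty can no longer flip any comparison that matters for the group's $(1-\alpha)$-quantile, so the arm may be frozen. Using (a), a group $H$ is dropped only when an empirical lower confidence bound on some other group's $(1-\alpha)$-quantile exceeds an empirical upper confidence bound on $H$'s $(1-\alpha)$-quantile, and the elimination margin is set so that the $\Delta$-slack in \eqref{eq:optimal H} is respected; in particular a group $H^* \in \argmax_{H \in \gH}\ExpRewardInvCDF_{H}(1-\alpha)$ is never eliminated on $\mathcal{E}$. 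Using (b), once all active arms have been frozen, one of the termination conditions (optimal group identified, or all surviving groups already known to satisfy \eqref{eq:optimal H}) must fire.

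Third, I would conclude: on $\mathcal{E}$, Algorithm~\ref{alg:elimination} terminates and returns some $\widehat{H} \in \gH$, and the justification of every elimination/stopping step forces $\widehat{H}$ to satisfy \eqref{eq:optimal H}. Hence Event~$B$ holds whenever $\mathcal{E}$ does, so $\PP[B] \ge 1-\delta$, which is the claim. (The clean event $\mathcal{E}$ is the same one reused to bound the number of arm pulls in Theorem~\ref{thm:samplebound}, but the pull count plays no role here.)

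The main obstacle is the second step: correctly assembling the composite gap $\Delta_{H,j}$ of \eqref{def:gap} from its four constituents $\Delta, \Delta_H, \Delta_0, \Delta'_{H,j}$ (Eqs.~\eqref{eq:Delta_H}--\eqref{eq:delta_prime}) and matching each elimination/termination rule of Algorithm~\ref{alg:elimination} to exactly the right constituent, so that the ``freeze at width $\Delta_{H,j}/4$'' policy is simultaneously \emph{sound} — never freezing an arm whose value still affects a live comparison — and \emph{complete} — ensuring the termination condition eventually triggers with a $\Delta$-optimal group. Lining up these margins with the $\Delta$-relaxation is the delicate bookkeeping, while the confidence-bound union bound and the quantile-monotonicity argument are routine; this is precisely the content worked out in Appendix~\ref{sec:appendix_finite_alg}, and Lemma~\ref{lem:EventB} then follows immediately from Theorem~\ref{thm:ub_se}.
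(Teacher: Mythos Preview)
Your proposal is correct and follows essentially the same approach as the paper: the paper's proof of Lemma~\ref{lem:EventB} is simply a one-line citation of Theorem~\ref{thm:ub_se}, and your sketch of how that theorem is proved (anytime confidence bounds from \cite{kaufmann2016complexity} defining a clean event with probability at least $1-\delta$, then showing on that event that $H^*$ is never eliminated and that the termination rules return a group satisfying \eqref{eq:optimal H}) matches the content of Appendix~\ref{sec:appendix_finite_alg}. The only minor remark is that your point~(b) about the $\Delta_{H,j}/4$ freezing is more directly tied to the pull-count bound (Lemma~\ref{lem:stop_pull_condition}) than to correctness per se; the paper's correctness argument (Lemma~\ref{lem:conv_SE}) establishes termination simply from $U(T,\delta)\to 0$, without needing the gap decomposition at all.
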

\begin{proof}
This is a corollary of Theorem~\ref{thm:ub_se} in Appendix~\ref{sec:subroutine_details}, whose proof is given in Appendix~\ref{sec:appendix_finite_alg}.
\end{proof}

Observe that under the event $A \cap B$, the group 
$\widehat{G}$ returned in Line~\ref{line:output_mainalg} of Algorithm~\ref{alg:main} satisfies 
\eqref{eq:best_group_relax_eps_Delta}, since
\begin{equation}
\ExpRewardInvCDF_{\widehat{G}}(1 - \alpha + \epsilon)
\ge \ExpRewardInvCDF_{\widehat{H}}(1 - \alpha)
\ge
\max_{H \in \gH} \ExpRewardInvCDF_{H}(1 - \alpha) - \Delta
\ge
\max_{G \in {\cal G}} \ExpRewardInvCDF_{G}(1 - \alpha - \epsilon) - \Delta. \label{rem:A_B_implies_correct}
\end{equation}
Combining Lemma~\ref{lem:EventA}, Lemma~\ref{lem:EventB}, and
\eqref{rem:A_B_implies_correct}
yields the correctness of Algorithm~\ref{alg:main}, as stated in Theorem~\ref{thm:correctness}.

\subsection{Proof of Lemma~\ref{lem:EventA} (Bounding the Probability of Event A)}
\label{sect:appendix_infinite}


We show that for a fixed $G \in \gG$ and the corresponding finite group $H$, we have
\begin{equation}
\label{eq:quantile_sampled_eps}
     \ExpRewardInvCDF_{G}(1 - \alpha - \epsilon)
     \le \ExpRewardInvCDF_{H}(1 - \alpha)
     \le \ExpRewardInvCDF_{G}(1 - \alpha + \epsilon).
\end{equation}
with probability at least $1 - \delta/|\gG|$. Then a union bound over 
the groups $G \in \gG$ gives us the desired claim.

The complement of event \eqref{eq:quantile_sampled_eps} is equivalent to at least one of these two events happening:

(i) $\ExpRewardInvCDF_{H}(1 - \alpha) > \ExpRewardInvCDF_{G}(1 - \alpha + \epsilon)$, or

(ii)
$\ExpRewardInvCDF_{H}(1 - \alpha) < \ExpRewardInvCDF_{G}(1 - \alpha - \epsilon)$.

We will upper-bound the probability of each of Event (i) and (ii) by $\frac{\delta}{2|\gG|}$, and then apply a union bound.

Let the number of sampled arms be as in Line~\ref{line:N} of Algorithm~\ref{alg:main}, i.e.,    $N =
    \big\lceil
    \frac{1}{2 \epsilon^2} \log \frac{2|\gG|}{\delta}
    \big\rceil.
$
Note that Event (i) is equivalent to more than $N \alpha$ of the sampled
arms being from the top-$(1-\alpha+\epsilon)$ quantile.
For $i =1, \dots, N$, we let $X_i = 1$ if the $i$-th sampled arm is from the top-$(1-\alpha+\epsilon)$ quantile, and $X_i = 0$ otherwise.
Then $X_1, \dots, X_N$ are i.i.d.~Bernoulli random variables with success probability at most $ \alpha - \epsilon$, and $X = \sum_{i=1}^N X_i$ is the number of sampled arms in the 
top-$(1-\alpha+\epsilon)$ quantile, with $\E[X] \le N(\alpha - \epsilon)$.
By Hoeffding's inequality, we obtain
\[
    \mathbb{P}[X \ge N\alpha] =
    \mathbb{P}[X - (N(\alpha - \epsilon)) \ge N\epsilon] 
    \le
    \mathbb{P}[X - \E[X] \ge N\epsilon]
    \le 
    \mathrm{exp}\left( -2 N \epsilon^2 \right)
    \le \frac{\delta}{2 |\gG|}.
\]
Similar reasoning shows that the probability of 
 Event (ii) is also bounded above by $\frac{\delta}{2|\gG|}$, which completes 
 the proof.

\section{Proof of Corollary \ref{cor:samplebound_independent_H} (Number of Arm Pulls Without $\gH$ Dependence)}

\subsection{Derivations of Weakened Gaps} \label{app:DeltaH0}

Here we provide the derivations of the gap lower bounds stated in \eqref{eq:Delta_H_lower_bound}, \eqref{eq:Delta_0_lower_bound} and \eqref{eq:Delta_tilde}.  
Under Event~A, we can lower bound  each $\Delta_H$ by observing that
\begin{align}
    \widetilde{\Delta}_G \coloneqq
    \max_{G' \in \gG}  
    \ExpRewardInvCDF_{G'}(1 - \alpha - \epsilon)
    -  \ExpRewardInvCDF_{G}(1 - \alpha + \epsilon)
    &\le
    \max_{H' \in \gH}  
    \ExpRewardInvCDF_{H'}(1 - \alpha)  
    -  \ExpRewardInvCDF_{H}(1 - \alpha) \nonumber \\
    &=
    \ExpRewardInvCDF_{H^*}(1 - \alpha) 
    -  \ExpRewardInvCDF_{H}(1 - \alpha) =
    \Delta_H.
\end{align}
Similarly, we define the following value $\widetilde{\Delta}_0$,
which lower bounds $\Delta_0$ under Event A:
\begin{align}
    \widetilde{\Delta}_0 
    &\coloneqq \max\limits_{G \in \gG} \ExpRewardInvCDF_{G}(1 - \alpha - \epsilon)
    - \max\limits_{G'\in \gG, G' \ne G_{\epsilon}^*} \ExpRewardInvCDF_{G'}(1 - \alpha + \epsilon),
\end{align}
where 
$G_{\epsilon}^* \in \argmax_{G\in \gG} \ExpRewardInvCDF_{G}(1 - \alpha + \epsilon)$.  Indeed, we have under Event A that
\begin{equation}
    \max\limits_{G \in \gG}\ExpRewardInvCDF_{G}(1 - \alpha - \epsilon)
    - \max\limits_{G' \in \gG, G' \ne G_{\epsilon}^*}   \ExpRewardInvCDF_{G'}(1 - \alpha + \epsilon)
    \le
    \ExpRewardInvCDF_{H^*}(1 - \alpha)
    -  \max_{H \in \gH, H \ne H^*} \ExpRewardInvCDF_{H}(1 - \alpha)
    = \Delta_0,
\end{equation}
where the inequality uses the definition of $H^*$ (first term) and the fact that we subtract the second-highest value of a smaller quantity (second term).  Note that $\widetilde{\Delta}_0$ can be negative (e.g., when $\Delta_0 = 0$), but this is not a problem since we will later take the maximum with other gaps (see \eqref{eq:tilde_Delta_G_i}).

It remains to derive \eqref{eq:Delta_tilde}.   For each $0 \le i < \lfloor (1-\alpha)/\epsilon \rfloor - 1$,
and for each $j \in S_{H, i}$, we have the following lower bound:
\begin{align}
    \Delta'_{H, j} &=
    \left| \mu_j - \ExpRewardInvCDF_{H}(1 - \alpha) \right|  \\
    &=
    \ExpRewardInvCDF_{H}(1 - \alpha) - \mu_j \\
    &\ge
    \label{eq:Delta_H_j_lower_bound_left}
    \ExpRewardInvCDF_{G}(1 - \alpha -\epsilon) - 
    \ExpRewardInvCDF_{G}(b_{i+1}),
\end{align}    
where \eqref{eq:Delta_H_j_lower_bound_left} follows from
\eqref{eq:sampled_quantile_sandwiched} and the definition of $S_{H,i}$.
Likewise, for each $\lfloor (1-\alpha)/\epsilon \rfloor + 1 < i \le m$,
and for each $j \in S_{H, i}$, we must have
\begin{align}
    \Delta'_{H, j} &=
    \left| \mu_j - \ExpRewardInvCDF_{H}(1 - \alpha) \right| \\ 
    &=
    \mu_j - \ExpRewardInvCDF_{H}(1 - \alpha) \\
    &\ge
    \label{eq:Delta_H_j_lower_bound_right}
    \ExpRewardInvCDF_{G}(b_{i}) -
    \ExpRewardInvCDF_{G}(1 - \alpha +\epsilon).
\end{align}  
Summarizing the above, we obtain the desired result that $\Delta'_{H, j} \ge \widetilde{\Delta}'_{G,i}$ with $\widetilde{\Delta}'_{G,i}$ defined in \eqref{eq:tilde_Delta_G_i} (the third case therein is trivial).

\subsection{Intermediate Results and Proof of Corollary \ref{cor:samplebound_independent_H} (Number of Arm Pulls)} \label{app:intermediate}

Recall the lower bound on the gaps introduced in \eqref{eq:tilde_Delta_G_i}, which ensures that every arm $j$ in the same subset $S_{H,i}$ shares a common lower bound $\Delta_{H,j} \ge \widetilde{\Delta}_{G, i}$. 
Summing over the groups $H \in \gH$ and the subsets
$S_{H, i}$ in \eqref{eq:armpullboundrandom}, we obtain an upper bound of the following form.
\begin{corollary}
    \label{cor:samplebound_simplified}
    Under Assumption~\ref{as:noise}, with probability at least $1- 2\delta$, 
    the total number of arm pull used by Algorithm~\ref{alg:main} satisfies
    \begin{equation}
        \label{eq:armpullbound_simplified}
        T(\epsilon, \delta, \Delta) 
        \leq 
        \sum\limits_{H \in \gH}
        \sum_{i=1}^m
        \frac{c}{
            \widetilde{\Delta}_{G(H), i}^2}
        \log 
        \left(
        \frac{|\gH| N }{\delta}
        \log \frac{1}{\widetilde{\Delta}_{G(H), i}^2}
        \right)    
        \cdot
        \left| S_{H,i} \right|,
    \end{equation}
    where $c$ is a universal constant, $m$ is as defined in \eqref{eq:number_of_buckets},
    $\widetilde{\Delta}_{G(H), i}$ is as defined in \eqref{eq:tilde_Delta_G_i}, 
    and $S_{H,i}$ is as defined in \eqref{def:S_H_i}.
\end{corollary}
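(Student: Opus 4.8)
The plan is to derive \eqref{eq:armpullbound_simplified} from the random-gap bound \eqref{eq:armpullboundrandom} of Theorem~\ref{thm:samplebound} in two moves: first replace each random per-arm gap $\Delta_{H,j}$ by the reservoir-dependent but sample-independent lower bound $\widetilde{\Delta}_{G(H),i}$ that is valid on the bucket $S_{H,i}$ containing $j$, and then collapse the $N$ per-arm terms of each group $H$ into a single term per bucket.

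\textbf{Probability bookkeeping.} Theorem~\ref{thm:samplebound} (via Theorem~\ref{thm:ub_se}) gives that with probability at least $1-\delta$ the subroutine both returns a group satisfying \eqref{eq:optimal H} and obeys \eqref{eq:armpullboundrandom}, while Lemma~\ref{lem:EventA} gives Event~A with probability at least $1-\delta$. A union bound makes both hold simultaneously with probability at least $1-2\delta$, and I would condition on this event; everything that follows is deterministic.

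\textbf{Passing to weakened gaps.} On Event~A, the comparisons established in Appendix~\ref{app:DeltaH0} give $\Delta_H \ge \widetilde{\Delta}_G$, $\Delta_0 \ge \widetilde{\Delta}_0$, and $\Delta'_{H,j} \ge \widetilde{\Delta}'_{G,i}$ for every $j \in S_{H,i}$; taking the maximum with $\Delta$ in \eqref{def:gap} and comparing with \eqref{eq:tilde_Delta_G_i} yields $\Delta_{H,j} \ge \widetilde{\Delta}_{G(H),i}$ for all $j \in S_{H,i}$. The one analytic point is that, with $|\gH|,N,\delta$ fixed, the per-arm summand $x \mapsto \frac{c}{x^2}\log\!\big(\frac{|\gH|N}{\delta}\log\frac{1}{x^2}\big)$ is nonnegative and nonincreasing in $x$ over the relevant range $[\Delta,1]$ (the degenerate value at $x=1$ being handled by whatever floor is built into the confidence-bound form of Theorem~\ref{thm:ub_se}), so replacing $\Delta_{H,j}$ by the smaller $\widetilde{\Delta}_{G(H),i}$ can only increase it.

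\textbf{Regrouping and the main obstacle.} Since $\{S_{H,i}\}_i$ partitions the $N$-arm group $H$ by \eqref{def:S_H_i}, I would rewrite $\sum_{j=1}^N$ in \eqref{eq:armpullboundrandom} as $\sum_i\sum_{j\in S_{H,i}}$; after the substitution the summand no longer depends on $j$, so the inner sum contributes a factor $|S_{H,i}|$, and summing over $H\in\gH$ yields \eqref{eq:armpullbound_simplified} with $m$ as in \eqref{eq:number_of_buckets} (the $i=0$ bucket is simply included, its gap covered by the first case of \eqref{eq:Delta_tilde} or by the $\max$ with $\Delta$). There is no deep obstacle here, since all the substance sits in Theorem~\ref{thm:samplebound}, Lemma~\ref{lem:EventA}, and the gap inequalities of Appendix~\ref{app:DeltaH0}; the only step needing a line of care is the monotonicity of the per-arm term in the gap, which requires stating the bound of Theorem~\ref{thm:ub_se} in a manifestly nonincreasing form.
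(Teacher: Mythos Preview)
Your proposal is correct and mirrors the paper's own argument: the paper simply states that the inequality $\Delta_{H,j}\ge\widetilde{\Delta}_{G,i}$ from \eqref{eq:tilde_Delta_G_i} lets one replace gaps in \eqref{eq:armpullboundrandom} and then ``sum over the groups $H\in\gH$ and the subsets $S_{H,i}$,'' which is exactly your two moves of passing to weakened gaps (under Event~A) and regrouping by bucket. Your explicit probability bookkeeping and monotonicity check are details the paper leaves implicit.
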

While the bound in
\eqref{eq:armpullbound_simplified}
does not contain the terms $\Delta_{H,j}$,
it is still dependent on the specific realization of $H$
through the terms $|S_{H,i}|$.
To remove this remaining dependence on $\gH$, we will show that with high probability,
$|S_{H,i}| \le 3 \epsilon N$ for each $H$ and $i$.
In other words, when drawing arms from $G$ to form $H$, with high probability, at most $3 \epsilon N$ arms are from the interval $[b_i, b_{i+1})$.

\begin{lemma}
    \label{lem:partition_arms}
    Let $(\gG, \alpha, \epsilon, \Delta, \delta)$ be a valid input of Algorithm~\ref{alg:main}.
    Let $N$ be as in Line~\ref{line:N} of Algorithm~~\ref{alg:main}, and let $m$
    be as in \eqref{eq:number_of_buckets}.
    For each $G \in \gG$ and its random arm group $H = H(G)$ generated in Lines~\ref{line:forloop}--\ref{line:generate_H} of Algorithm~\ref{alg:main}, we partition $H$
    into $m+1$ disjoint multisets $\{  S_{H, i} \}_{i=0}^m$
    as defined in \eqref{def:S_H_i}.
    Then with probability at least $1-\delta$,
    we have $|S_{H, i}| \le 3 \epsilon N = O\big( \frac{1}{\epsilon} \log\frac{|\mathcal{G}|}{\delta} \big)$
    for each $H \in \gH$ and for each $i = 0, \dots, m$.
\end{lemma}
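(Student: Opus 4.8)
The plan is to recognize that $|S_{H,i}|$ is a binomial random variable and then apply a concentration inequality followed by a union bound. Recall from the indexing convention in Section~\ref{sec:setup} that when the $N$ arms forming $H$ are requested from $G$, their indices $j_1,\dots,j_N$ are i.i.d.\ uniform on $[0,1]$. By the definitions in \eqref{def:S_H_i} and \eqref{eq: b_i}, each of the $m+1$ partition intervals $[0,b_1)$, $[b_i,b_{i+1})$ for $1\le i\le m-1$, and $[b_m,1]$ has length at most $\epsilon$: this is immediate for the interior intervals since $b_{i+1}-b_i=\epsilon$, while $b_1 = (1-\alpha) - \lfloor (1-\alpha)/\epsilon\rfloor\epsilon \in [0,\epsilon)$ handles $i=0$, and the minimality of $m$ in \eqref{eq:number_of_buckets} gives $b_m+\epsilon \ge 1$, hence $1-b_m \le \epsilon$, handling $i=m$. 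Consequently $|S_{H,i}|$ is distributed as $\mathrm{Binomial}(N,p_{H,i})$ with $p_{H,i}\le\epsilon$, so $\EE[|S_{H,i}|]\le \epsilon N$.

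Next I would apply Hoeffding's inequality to the sum of i.i.d.\ indicator variables $|S_{H,i}|$. Since $3\epsilon N - \EE[|S_{H,i}|] \ge 2\epsilon N$,
\[
  \PP\big[|S_{H,i}| \ge 3\epsilon N\big] \le \PP\big[|S_{H,i}| - \EE[|S_{H,i}|] \ge 2\epsilon N\big] \le \exp\!\big(-8\epsilon^2 N\big).
\]
Substituting $N = \big\lceil \frac{1}{2\epsilon^2}\log\frac{2|\gG|}{\delta}\big\rceil \ge \frac{1}{2\epsilon^2}\log\frac{2|\gG|}{\delta}$ from Line~\ref{line:N} gives $8\epsilon^2 N \ge 4\log\frac{2|\gG|}{\delta}$, and therefore $\PP[|S_{H,i}| \ge 3\epsilon N] \le \big(\delta/(2|\gG|)\big)^4$.

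Finally I would union bound over all $|\gG|$ groups and all $m+1$ partitions. Using $m+1 \le \lceil 1/\epsilon\rceil + 1 \le 1/\epsilon + 2$ together with the standing assumption $\delta < \epsilon < \frac12$ (so that $1/\epsilon + 2 < 1/\delta + 1/\delta = 2/\delta$), the total failure probability is at most $|\gG|\cdot \tfrac{2}{\delta}\cdot\big(\delta/(2|\gG|)\big)^4 = \delta^3/(8|\gG|^3) \le \delta$. This gives the claimed bound $|S_{H,i}| \le 3\epsilon N$ simultaneously for all $H\in\gH$ and all $i=0,\dots,m$, and the stated $O\big(\tfrac1\epsilon\log\tfrac{|\gG|}{\delta}\big)$ form of $3\epsilon N$ follows directly from the definition of $N$.

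I do not expect a serious obstacle here; the only points requiring care are (i) confirming that the two boundary partitions $S_{H,0}$ and $S_{H,m}$ — not only the interior ones — have width at most $\epsilon$, which is exactly where the definition of $m$ in \eqref{eq:number_of_buckets} is used, and (ii) tracking constants in the union bound, though the fourth power of $\delta/(2|\gG|)$ leaves ample slack (a much cruder estimate would suffice). An alternative to Hoeffding's inequality would be a multiplicative Chernoff bound, yielding the same conclusion with a slightly different leading constant in place of $3$.
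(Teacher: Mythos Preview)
Your proposal is correct and follows essentially the same argument as the paper: recognize that each $|S_{H,i}|$ is a sum of $N$ i.i.d.\ Bernoulli variables with success probability at most $\epsilon$, apply Hoeffding's inequality to bound $\PP[|S_{H,i}|>3\epsilon N]\le\exp(-8\epsilon^2 N)\le(\delta/(2|\gG|))^4$, and union bound over all groups and partitions using $\delta<\epsilon$ to control $m+1$. The only cosmetic difference is that the paper phrases the union-bound bookkeeping as $\delta^3<\delta<\epsilon\le\frac{1}{m-1}<\frac{16|\gH|^3}{m+1}$, whereas you route it through $m+1<2/\delta$; both yield the same conclusion with ample slack.
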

\begin{proof}
    This follows easily from Hoeffding's inequality and the union bound; see Appendix~\ref{sec:appendix_partition} for the details.
\end{proof}
Combining Theorem~\ref{thm:correctness}, Corollary~\ref{cor:samplebound_simplified}
and Lemma~\ref{lem:partition_arms}, we obtain Corollary \ref{cor:samplebound_independent_H} as desired.

\subsection{Proof of Lemma~\ref{lem:partition_arms} (Partitioning of Arms)} 
\label{sec:appendix_partition}

For each $H \in \gH$, and $i = 0,1, \dots, m$, we let $E_{H,i}$ denote the event that $|S_{H, i}| \le 3 \epsilon N$, where we recall the choice
\begin{equation}
    N =
    \left\lceil
    \frac{1}{2 \epsilon^2} \log \frac{2|\gG|}{\delta}
    \right \rceil
    =
    \left\lceil
     \frac{1}{2 \epsilon^2} \log \frac{2|\gH|}{\delta}
     \right\rceil. \label{eq:choiceN}
\end{equation}
We will show that the complement of each event $E_{H,i}$ occurs with probability at most $\frac{\delta}{(m+1)|\gH|}$. Then, a union bound over all $H \in \gH$  and $i = 0, \dots, m$ gives us the desired claim.

Fix an arbitrary $H \in \gH$ and an arbitrary $i \in \{0, \dots, m\}$.
By the definition in   \eqref{def:S_H_i}, the complement of event $E_{H,i}$ is equivalent to more than $3 \epsilon N$ of the $N$ arms independently sampled uniformly from $[0,1]$ 
satisfying $j \in [b_{i},  b_{i+1})$, where $b_0 \coloneqq 0, b_{m+1} \coloneqq 1,$ and $b_1, \dots b_m$ are as defined in \eqref{eq: b_i}.
Each such arm is in $[b_{i},  b_{i+1})$ with probability at most $b_{i+1} - b_i \le \epsilon$.

For $k = 1, \dots, N$, we let $X_k = 1$ if the $k$-th arm sampled from $G$ is from the subset $[b_{i},  b_{i+1})$, and $X_k = 0$ otherwise.
Then $X_1, \dots, X_N$ are i.i.d. Bernoulli Random Variables with success probability 
at most $\epsilon$, and
$X = \sum_{k=1}^N X_k = |S_{H,i}|$, with $\E[X] \le N \epsilon$.
By Hoeffding's inequality, we have
\[
    \mathbb{P}[X > 3N\epsilon] 
    \le
    \mathbb{P}[X - \E[X] \ge 2 N \epsilon]
    \le 
    \mathrm{exp}\left( -8 N \epsilon^2 \right)
    \le \frac{\delta^4}{16 |\gH|^4}
    < \frac{\delta}{(m+1) |\gH|},
\]
where the last two steps use the choice of $N$ in \eqref{eq:choiceN}, along with
$\delta^3 < \delta < \epsilon \le \frac{1}{m-1} < \frac{16 |\gH|^3}{m+1}$.


\section{Details of Subroutine for the Finite-Arm Setting}
\label{sec:subroutine_details}

In this section, we formally define our choice of \texttt{FiniteArmBQID} used 
in Line~\ref{line:finite_alg} of Algorithm~\ref{alg:main}.  A complete description of our finite-arm subroutine will be given in Algorithm~\ref{alg:elimination} below.  
The algorithm maintains a set of active arms and groups 
(see \eqref{def:C_t}, \eqref{def:m_t}, \eqref{def:B_t}, and Lines~8-10 of Algorithm~\ref{alg:elimination}).
At each round, the algorithm pulls all arms in the set of active arms, updates their confidence bounds, and eliminates groups that are suboptimal and arms that are ``no longer of interest'' based on the confidence bounds.
When the algorithm identifies that
some group satisfies 
\eqref{eq:optimal H} based on the confidence bounds, it terminates and returns that group.

While the algorithm and results in this appendix are used as a stepping stone to the overall guarantees of Algorithm \ref{alg:main}, we believe that they are also of interest in their own right.

\subsection{Integer-Valued Indexing} \label{sec:indexing}

Since the number of arms in $\gH$ is finite, we \textit{re-index} the set of all arms in $\gH$
by $\{1, \dots, n\}$, where  
\begin{equation}
    \label{eq:number_of_finite_arms}
        n \coloneqq  
        \left| \bigcup\limits_{H \in \gH} H \right|
        = N|\gH| = \left \lceil
        \frac{1}{2\epsilon^2} \log \frac{2|\gG|}{\delta} 
        \right\rceil \cdot |\gG|
\end{equation}
recalling the choice of $N$ in Algorithm \ref{alg:main}.  
We will mostly use this integer-valued indexing $j \in \{1,\dotsc,n\}$ for arms in this section and its associated appendices.  If $j \in \{1,\dotsc,n\}$ is the arm's integer-valued index, $j' \in (0,1)$ is the arm's index in $[0,1]$, and $H$ is the arm's group, then we adopt the shorthand notation
\begin{equation}
    \Delta_j = \Delta_{H,j'}. \label{eq:shorthand}
\end{equation}
We will also slightly abuse notation and write $j \in H$ and $j' \in H$ interchangeably under the two forms of indexing; it will be clear from the context when $j$ is an integer index vs.~a continuous index in $[0,1]$.

\subsection{Law of the Iterated Logarithm and Confidence Bounds} \label{sec:aux}

As is ubiquitous in MAB problems, our analysis relies on confidence bounds.  Despite our distinct objective, our setup still consists of regular arm pulls, and accordingly, we can utilize well-established confidence bounds for stochastic bandits.  Many such bounds exist with varying degrees of simplicity vs.~tightness, and for concreteness, we focus on the law of the iterated logarithm (as refined by \cite[Theorem 8]{kaufmann2016complexity}; see also \cite[Lemma 3]{jamieson2014lil}).
This result shows that
with high probability,
for each arm $j$ and each round index $t$, the mean reward~$\mu_j$ 
is within some confidence interval
whose width decreases as $T_j(t)$ increases.


To formally present the result, we introduce and recall some notations and definitions.  
For each $j \in \{1, \dots, n\}$ and each round index $t \ge 1$, let 
$\widehat{\mu}_{j, T_j(t)}$ denotes the empirical mean of observed rewards of arm $j$ up to round $t$. 
We will use the following function  $U \colon \mathbb{Z}^+ \times (0, 1)
\to \R^+$:
\begin{equation}
\label{eq:confidence_width}
    U(T, \delta)
    \coloneqq
    \sqrt{
    \frac{2 \log(1/\delta) 
    + 6 \log \log (1/\delta) 
    + 3 \log \log(eT)}
    {T}
    }
    = \Theta\left( 
    \sqrt{
    \frac{1}{T}
    \log \left(\frac{\log T}{\delta}\right)
    }
    \right),
\end{equation}
which describes the width of the confidence interval of a mean reward after $T$ pulls.

\begin{lemma}[Anytime confidence bounds {\cite[Theorem 8]{kaufmann2016complexity}}]
\label{lem:cbs}
    Given $\delta \in (0,1)$, under Assumption~\ref{as:noise}, we have
    with probability at least $1-\delta$ that
    \begin{equation}
    \label{eq:cbs}
    \mathrm{LCB}_t(j) 
    \leq 
    \mu_j \leq \mathrm{UCB}_t(j)
    \quad 
    \text{for all arms } j \in \{
    1, \dotsc, n\}
    \text{ and }
    \text{for all rounds } t \ge 1,
    \end{equation}
    where
    \begin{equation}
    \label{eq:lcb_ucb} 
    \mathrm{LCB}_t(j)
    \coloneqq
    \widehat{\mu}_{j, T_j(t)} - 
    U\Bigg(T_j(t), \frac{\delta}{n}\Bigg)
    \quad
    \text{and}
    \quad
    \mathrm{UCB}_t(j)
    \coloneqq \widehat{\mu}_{j, T_j(t)} + 
    U\Bigg(T_j(t), \frac{\delta}{n}\Bigg)
    \end{equation}
    are the lower and upper confidence bounds for 
    arm $j$ at round $t$.
\end{lemma}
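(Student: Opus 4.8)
The plan is to obtain this simultaneous (multi-arm, anytime) statement from the single-arm anytime bound together with a union bound over the $n$ arms.

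First I would recall the precise single-arm form of \cite[Theorem 8]{kaufmann2016complexity} (equivalently \cite[Lemma 3]{jamieson2014lil}): for a fixed arm $j$ whose reward distribution is $\sigma^2$-sub-Gaussian with $\sigma^2 \le 1$, and for any $\delta' \in (0,1)$, with probability at least $1-\delta'$ it holds \emph{simultaneously for every} pull count $T_j(t) \ge 1$ that $\bigl|\widehat{\mu}_{j,T_j(t)} - \mu_j\bigr| \le U\bigl(T_j(t),\delta'\bigr)$, with $U$ as in \eqref{eq:confidence_width}. The essential feature is that this is an \emph{anytime} bound: the law-of-the-iterated-logarithm refinement of \cite{kaufmann2016complexity} produces a confidence width that controls the deviation at all pull counts at once, which is exactly what the elimination subroutine needs since it inspects these bounds every round.

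Next I would instantiate the single-arm bound with $\delta' = \delta/n$ for each $j \in \{1,\dots,n\}$, where $n = N|\gH|$ as in \eqref{eq:number_of_finite_arms}; Assumption~\ref{as:noise} guarantees that each arm's reward distribution is $\sigma^2$-sub-Gaussian with $\sigma^2 \le 1$, so this is legitimate. Writing $\mathcal{E}_j$ for the event that the displayed inequality holds for arm $j$ at all rounds, we have $\PP[\mathcal{E}_j^c] \le \delta/n$. A union bound then gives $\PP\bigl[\bigcup_{j=1}^n \mathcal{E}_j^c\bigr] \le n\cdot\frac{\delta}{n} = \delta$, so with probability at least $1-\delta$ every $\mathcal{E}_j$ holds simultaneously. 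On that event, splitting $\bigl|\widehat{\mu}_{j,T_j(t)}-\mu_j\bigr| \le U(T_j(t),\delta/n)$ into its two one-sided halves and recalling the definitions of $\mathrm{LCB}_t(j)$ and $\mathrm{UCB}_t(j)$ in \eqref{eq:lcb_ucb} immediately yields \eqref{eq:cbs} for all $j$ and all $t$, as claimed. (One minor point to check is that $T_j(t) \ge 1$ whenever the bound is invoked, which holds because the subroutine pulls every active arm each round.)

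There is no substantive obstacle here beyond bookkeeping: the only thing requiring a little care is matching the normalization of \cite[Theorem 8]{kaufmann2016complexity} to Assumption~\ref{as:noise}. If that theorem is stated for $\sigma^2 = 1$, one applies it after rescaling the rewards, and the hypothesis $\sigma^2 \le 1$ only tightens the resulting width, so the stated $U$ remains a valid (if slightly conservative) choice.
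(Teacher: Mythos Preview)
Your proposal is correct and matches the paper's treatment: the paper does not give its own proof of this lemma but simply imports it from \cite[Theorem~8]{kaufmann2016complexity}, and your single-arm anytime bound plus union-bound over the $n$ arms is exactly the standard derivation underlying that citation.
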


\subsection{Successive Elimination} 
\label{sec:se}

With the confidence bounds \eqref{eq:cbs}
formally defined, we  now formally describe how Algorithm~\ref{alg:elimination} maintains the active arms and groups at round $t$ based on confidence bounds.
In the following, we write 
$\mathrm{Q}_{1-\alpha}(X)$
as the $(1-\alpha)$-quantile of a finite multiset $X$, defined in the same way as \eqref{eq:forwardH}.

We will work in rounds indexed by $t \ge 1$. 
We first define the set $\gC_t$ of candidate {\em potentially optimal groups} at the beginning of round $t$, initialized as ${{\gC}_1} \coloneqq \gH$ and subsequently updated based on confidence bounds \eqref{eq:cbs} as follows:
\begin{equation} \label{def:C_t}
    {\gC}_{t+1} \coloneqq 
    \Bigg\{H \in {\gC}_t \Bigm| \       
    \mathrm{Q}_{1-\alpha}
    \left(
    \left\{
    \mathrm{UCB}_{t}(j) : 
    j \in H
    \right\}
    \right)  
    \geq
    \mathrm{Q}_{1-\alpha}
    \left(
    \left\{
    \mathrm{LCB}_{t}(j') : 
    j' \in H'
    \right\}
    \right)  
    \ \forall H' \in {\gC}_t\Bigg\}.
\end{equation}
This definition allows us to eliminate groups that are suboptimal according to the confidence bounds.
For each group $H \in \gC_t$, we define the set $m_t^{(H)}$ of \textit{potential $(1-\alpha)$-quantile arms} of $H$
at the beginning of round $t$, initialized as $m_1^{(H)} \coloneqq H$, and subsequently updated based on the confidence bounds as follows:
\begin{align} \label{def:m_t}
    m_{t+1}^{(H)}\coloneqq 
    \Big\{j \in H 
    \bigm| \
    & \mathrm{LCB}_{t}(j) \leq  
    \mathrm{Q}_{1-\alpha}
    \left(
    \left\{
    \mathrm{UCB}_{t}(j') : 
    j' \in H
    \right\}
    \right)  \ 
    ~~\text{and} 
    \\ \notag
    & \mathrm{UCB}_{t}(j) \geq 
    \mathrm{Q}_{1-\alpha}
    \left(
    \left\{
    \mathrm{LCB}_{t}(j') : 
    j' \in H
    \right\}
    \right)  
    \Big\}.
\end{align}
This definition allows us to eliminate arms that are no longer potentially $(1-\alpha)$-quantile arms in $H$ according to confidence bounds \eqref{eq:cbs}.
Based on \eqref{def:C_t} and \eqref{def:m_t}, we define the set $\gB_t$ of active arms, i.e., the arms that will be pulled in round $t$, initialized as $\gB_1 \coloneqq \bigcup\limits_{H \in \gH} H =\{1, \dots, n\}$, and subsequently updated as follows:
\begin{align} 
\label{def:B_t}
    {\gB}_{t+1} \coloneqq \; \Big\{j \bigm| j \in m_{t+1}^{(H)} \;  \text{for some } H \in {\gC}_{t+1}\Big\}.
\end{align}

Recall that we only need to find (with high probability) a group $H$ whose quantile is within $\Delta$ of the highest, i.e., satisfies \eqref{eq:optimal H}. To take advantage of this relaxation, we define the quantity
$\Delta^{(t)}$ that tracks the
difference between the most optimistic  and pessimistic possibilities of
$\max\limits_{H \in \gC_{t}} \ExpRewardInvCDF_{H}(1 - \alpha) $ according to confidence bounds
at the beginning of round $t$, initialized as $\Delta^{(1)} \coloneqq \infty$, and subsequently updated as follows:
\begin{equation}
\label{eq:max_dist}
    \Delta^{(t+1)} \coloneqq
    \max_{H \in \gC_{t+1}} 
    \left\{
    \mathrm{Q}_{1-\alpha}
    \left(
    \left\{
    \mathrm{UCB}_{t}(j) : 
    j \in H
    \right\}
    \right)
    \right\}
    -
    \max_{H' \in \gC_{t+1}}
    \left\{
    \mathrm{Q}_{1-\alpha}
    \left(
    \left\{
    \mathrm{LCB}_{t}(j) : 
    j \in H'
    \right\}
    \right)
    \right\}.
\end{equation}
Once $\Delta^{(t)} \le \Delta$, each group 
$H \in \argmax\limits_{H \in \gC_{t}}
    \left\{
    \mathrm{Q}_{1-\alpha}
    \left(
    \left\{
    \mathrm{LCB}_{t-1}(j) : 
    j \in H
    \right\}
    \right)
    \right\}$ satisfies \eqref{eq:optimal H} provided that the confidence bounds are valid, and so we can simply return any one of them.

With these definitions in place, we now present the pseudo-code for Algorithm~\ref{alg:elimination}.

\begin{algorithm}
    \caption{Max-Quantile Grouped Finite-Arm Bandit Algorithm (Used as \texttt{FiniteArmBQID} in Algorithm \ref{alg:main})}\label{alg:elimination}
    \begin{algorithmic}[1]
        \Require~~ 
        A finite set of finite arm groups $\gH$, parameters 
        $\alpha, \Delta, \delta \in (0,1)$

        \State Initialize $t = 1$ and $T_j(0) = 0$ for all 
        $j \in \bigcup\limits_{H \in \gH} H = \{1, \dots, n\}$
        
        \State Set $m_1^{(H)} = H \;$ for all $H \in {\gH}$;
        set ${{\gC}_1} = \gH$, $\Delta^{(1)} = \infty$, and $\gB_1 = \bigcup\limits_{H \in \gH} H$
        
        \While {$|{\gC}_t| > 1$
         and $\Delta^{(t)} > \Delta$
        }
        
        \State Pull every arm $j \in \gB_t$ once 
        
        \State Update $T_j(t) = T_j(t-1) + 1$ 
        if $j \in \gB_t$ and $T_j(t) = T_j(t-1)$ otherwise
        \label{line:update_T}
        
        \State Compute $U\big(T_j(t),  \delta_{j, T_j(t)} \big)$ for every arm $j =\{1, \dots, n\}$ according to \eqref{eq:confidence_width}
        
        \State Compute $\mathrm{LCB}_t(j)$ and
        $\mathrm{UCB}_t(j)$ for every arm $j =\{1, \dots, n\}$  according to \eqref{eq:lcb_ucb}
        
        \State Compute ${\gC}_{t+1}$ according to  {\eqref{def:C_t}} \label{line:update_C_t}

     \State Compute $m_{t+1}^{(H)}$
        for each $H \in \gC_{t+1}$
     according to {\eqref{def:m_t}}

     \State Compute $\gB_{t+1}$ and $\Delta^{(t+1)}$  according to
        {\eqref{def:B_t}} and
        \eqref{eq:max_dist} 
        \label{line:update_B_t}
        
        \State Increment the round index $t$ by 1
        \EndWhile 
        
        \State Set $\widehat{H}  \in \argmax\limits_{H \in \gC_{t}}
    \left\{
    \mathrm{Q}_{1-\alpha}
    \left(
    \left\{
    \mathrm{LCB}_{t-1}(j) : 
    j \in H
    \right\}
    \right)
    \right\}$ with uniformly random tie-breaking
        
        \State \textbf{return} $\widehat{H}$
    \end{algorithmic}
\end{algorithm}

\begin{remark}
\label{rem: efficient implementation}
    While computational complexity is not our focus, we note that Lines \ref{line:update_T}--\ref{line:update_B_t} can be computed more
    efficiently than a direct implementation by observing the following:
    \begin{itemize}[topsep=0pt,itemsep=0pt]
        \item Every non-eliminated arm has been pulled the same number of times (i.e., $T_j(t) = t $ if $j \in \gB_t$), and hence,
        \begin{equation}
            U\bigg(T_j(t),  \frac{\delta}{n} \bigg) = U\bigg(t,  \frac{\delta}{n} \bigg) 
            \quad \forall j \in \gB_t,
        \end{equation}
        from which $\mathrm{LCB}_t(j)$ and $\mathrm{UCB}_t(j)$ can readily be computed.  Moreover, the fact that every non-eliminated arm has the same confidence width implies that
        \begin{equation}
            \Delta^{(t+1)} = 2 U\bigg(t,  \frac{\delta}{n} \bigg)
            \quad \forall t \ge 1,
        \end{equation}
        with both maxima in \eqref{eq:max_dist} being attained by the same group.
        \item After eliminating any arm, its UCB and LCB values no longer need to be computed.
    \end{itemize}
\end{remark}

We now state our main result regarding Algorithm~\ref{alg:elimination}.
\begin{theorem} \label{thm:ub_se}
    {\em (Performance Guarantee for Algorithm~\ref{alg:elimination})}
    Let $(\gH, \alpha, \Delta, \delta)$ be a valid input of Algorithm~\ref{alg:elimination}.
    Under Assumption~\ref{as:noise},
    with probability at least $1-\delta$,  Algorithm~\ref{alg:elimination} identifies a group $\widehat{H}$ satisfying 
    \eqref{eq:optimal H} and uses a number of arm pulls satisfying
    \begin{align}
        T(\delta) 
        \le
        \sum\limits_{j = 1}^{n} 
                 \frac{c}{\Delta_j^2}
       \log 
        \left(
            \frac{n}{\delta}
            \log \frac{1}{\Delta_j^2}
        \right)
        \label{eq:T_total}
    \end{align}
    where $c$ is a universal constant, $n$ is as defined in
    \eqref{eq:number_of_finite_arms}, and $\Delta_j$ is as defined in \eqref{def:gap}.
\end{theorem}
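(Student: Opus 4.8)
The plan is to condition throughout on the event $\mathcal{E}$ of Lemma~\ref{lem:cbs} that $\mathrm{LCB}_t(j)\le\mu_j\le\mathrm{UCB}_t(j)$ for every arm $j\in\{1,\dots,n\}$ and every round $t\ge1$; this has probability at least $1-\delta$, and I will show that both the group-selection guarantee and the pull bound \eqref{eq:T_total} hold deterministically on $\mathcal{E}$. The workhorse will be a ``quantile sandwich'': for every group $H$ and every round $t$,
\[
\ExpRewardInvCDF_{H}(1-\alpha)-2U(t,\tfrac{\delta}{n})\ \le\ \mathrm{Q}_{1-\alpha}\big(\{\mathrm{LCB}_t(j):j\in H\}\big)\ \le\ \ExpRewardInvCDF_{H}(1-\alpha)\ \le\ \mathrm{Q}_{1-\alpha}\big(\{\mathrm{UCB}_t(j):j\in H\}\big)\ \le\ \ExpRewardInvCDF_{H}(1-\alpha)+2U(t,\tfrac{\delta}{n}).
\]
The two outer inequalities are immediate from monotonicity of the empirical $(1-\alpha)$-quantile under pointwise domination, since $\mathrm{LCB}\le\mu\le\mathrm{UCB}$ on $\mathcal{E}$. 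For the inner ones I would argue that an arm dropped from $m_t^{(H)}$ by the ``clearly above'' rule of \eqref{def:m_t} is genuinely above $\ExpRewardInvCDF_{H}(1-\alpha)$ on $\mathcal{E}$ and has its frozen $\mathrm{UCB}$ already exceeding $\ExpRewardInvCDF_{H}(1-\alpha)$ (symmetrically for ``clearly below''), so that at least $(1-\alpha)|H|$ arms satisfy $\mathrm{UCB}_t(j)\le\ExpRewardInvCDF_{H}(1-\alpha)+2U(t,\tfrac{\delta}{n})$ — the still-active ones, all pulled $t$ times, contributing through $\mathrm{UCB}_t(j)\le\mu_j+2U(t,\tfrac{\delta}{n})$.

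Given the sandwich, the selection guarantee is short. The group $H^*$ is never eliminated from $\gC_t$, since $\mathrm{Q}_{1-\alpha}(\{\mathrm{UCB}_t(j):j\in H^*\})\ge\ExpRewardInvCDF_{H^*}(1-\alpha)\ge\ExpRewardInvCDF_{H'}(1-\alpha)\ge\mathrm{Q}_{1-\alpha}(\{\mathrm{LCB}_t(j):j\in H'\})$ for every $H'\in\gC_t$. If the loop exits with $|\gC_t|=1$ the survivor is $H^*$, which trivially satisfies \eqref{eq:optimal H}; if it exits with $\Delta^{(t)}\le\Delta$, then using $H^*\in\gC_t$, the definition of $\widehat{H}$ as an $\mathrm{LCB}$-quantile maximiser over $\gC_t$, and one more application of the sandwich,
\[
\ExpRewardInvCDF_{\widehat{H}}(1-\alpha)\ \ge\ \mathrm{Q}_{1-\alpha}\big(\{\mathrm{LCB}_{t-1}(j):j\in\widehat{H}\}\big)\ \ge\ \mathrm{Q}_{1-\alpha}\big(\{\mathrm{UCB}_{t-1}(j):j\in H^*\}\big)-\Delta^{(t)}\ \ge\ \ExpRewardInvCDF_{H^*}(1-\alpha)-\Delta,
\]
which is \eqref{eq:optimal H}.

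For the pull bound, fix an arm $j\in H$ and set $\Delta_j=\max\{\Delta,\Delta_H,\Delta_0,\Delta'_{H,j}\}$. I will show that $j$ is pulled no longer once $U(t,\tfrac{\delta}{n})$ falls below a constant fraction of $\Delta_j$, by a four-way check matching the four gap components: (i) if $U(t,\tfrac{\delta}{n})<\Delta'_{H,j}/4$ then (say $\mu_j>\ExpRewardInvCDF_{H}(1-\alpha)$) the sandwich gives $\mathrm{LCB}_t(j)\ge\mu_j-2U(t,\tfrac{\delta}{n})>\mathrm{Q}_{1-\alpha}(\{\mathrm{UCB}_t(j'):j'\in H\})$, so $j$ leaves $m_{t+1}^{(H)}$ and hence $\gB_{t+1}$ (the groups being disjoint) and, by the frozen-bounds behaviour of Remark~\ref{rem: efficient implementation}, is never pulled again; (ii) if $U(t,\tfrac{\delta}{n})<\Delta_H/4$ with $H\ne H^*$, the sandwich forces $\mathrm{Q}_{1-\alpha}(\{\mathrm{UCB}_t(j'):j'\in H\})<\mathrm{Q}_{1-\alpha}(\{\mathrm{LCB}_t(j'):j'\in H^*\})$, eliminating $H$ from $\gC_{t+1}$; (iii) if $U(t,\tfrac{\delta}{n})<\Delta_0/4$, every $H'\ne H^*$ is gone by (ii), so $|\gC_{t+1}|=1$ and the loop halts; (iv) by Remark~\ref{rem: efficient implementation}, $\Delta^{(t+1)}=2U(t,\tfrac{\delta}{n})$, so the loop also halts once $U(t-1,\tfrac{\delta}{n})\le\Delta/2$. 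Hence $T_j$ is at most the least $t$ with $U(t,\tfrac{\delta}{n})$ below $\Delta_j/4$; inverting \eqref{eq:confidence_width} by the standard computation yields $T_j\le\frac{c}{\Delta_j^2}\log\big(\frac{n}{\delta}\log\frac1{\Delta_j^2}\big)$ for a universal $c$, and summing over $j\in\{1,\dots,n\}$ gives \eqref{eq:T_total}. Finite termination is automatic since $U(t,\tfrac{\delta}{n})\to0$.

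I expect the main obstacle to be the inner direction of the quantile sandwich: bounding $\mathrm{Q}_{1-\alpha}(\{\mathrm{UCB}_t(j):j\in H\})$ and $\mathrm{Q}_{1-\alpha}(\{\mathrm{LCB}_t(j):j\in H\})$ within $\pm2U(t,\tfrac{\delta}{n})$ of $\ExpRewardInvCDF_{H}(1-\alpha)$ even though arms removed from $m_t^{(H)}$ retain \emph{frozen} confidence bounds from an earlier, possibly much wider, round — one must show that such arms sit strictly on the correct side of $\ExpRewardInvCDF_{H}(1-\alpha)$ (and that removal is permanent), so that they cannot widen the window around the quantile. Tightly coupled to this is the bookkeeping verifying that the four triggers in the case analysis genuinely exhaust every reason arm $j$ can become ``no longer of interest'', i.e.\ that $\Delta_j=\max\{\Delta,\Delta_H,\Delta_0,\Delta'_{H,j}\}$ is the correct gap; this is precisely the finite-arm instance of the gap identification flagged among the paper's innovations.
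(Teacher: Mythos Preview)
Your proposal is correct and follows essentially the same architecture as the paper's proof: condition on the anytime confidence event of Lemma~\ref{lem:cbs}, show by induction that $H^*$ is never eliminated from $\gC_t$ (giving correctness in both termination branches), and then do a four-way case analysis on $\Delta_j=\max\{\Delta,\Delta_H,\Delta_0,\Delta'_{H,j}\}$ to show that arm $j$ stops being pulled once $U(t,\delta/n)<\Delta_j/4$, after which inverting \eqref{eq:confidence_width} and summing gives \eqref{eq:T_total}.

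The one place where your route differs slightly from the paper's is the form of the quantile sandwich. The paper proves \eqref{eq:quantilebound}--\eqref{eq:quantilebound2} with a $\pm\Delta_j/2$ window and establishes it via a case split on whether $\Delta_j\ge\Delta_k$ or $\Delta_j<\Delta_k$ for each $k\in H$ with $\mu_k\le\ExpRewardInvCDF_H(1-\alpha)$, using that the three ``group-level'' gap components $\Delta,\Delta_H,\Delta_0$ are shared between $\Delta_j$ and $\Delta_k$. Your version instead bounds the quantiles by $\pm 2U(t,\delta/n)$ directly (which coincides with $\Delta_j/2$ under the hypothesis $U<\Delta_j/4$), handling inactive arms by arguing that on $\mathcal{E}$ an arm with $\mu_k\le\ExpRewardInvCDF_H(1-\alpha)$ can only have been dropped from $m^{(H)}$ as ``clearly below'', so its frozen $\mathrm{UCB}$ already lies under $\ExpRewardInvCDF_H(1-\alpha)$. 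Both arguments yield the same $1/4$ threshold and the same four cases; your sandwich is arguably cleaner since it avoids the gap-comparison case split, but as you correctly anticipate, it shifts the burden onto verifying the frozen-arm behaviour (and, strictly, that $m^{(H)}$ is effectively monotone in $t$ on $\mathcal{E}$)---a point the paper's proof also relies on but leaves implicit.
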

\begin{proof}
    The proof follows typical steps used in elimination-based algorithms for finite-arm settings (with \cite{wang2021max} being perhaps the most similar), but requires care in handling the variety of cases that may occur.  See Appendix~\ref{sec:appendix_finite_alg} for the details.
\end{proof}

Observe that Theorem \ref{thm:ub_se} is equivalent to the first statement in Theorem \ref{thm:samplebound}, with the latter using continuous indexing instead of integer indexing.



\section{Proof of Theorem~\ref{thm:ub_se} (Performance Guarantee for Algorithm~\ref{alg:elimination})} 
\label{sec:appendix_finite_alg}

Throughout this appendix, we use the integer-valued indexing convention introduced in Appendix~\ref{sec:indexing}, in particular using $\Delta_j$ as per \eqref{eq:shorthand}.

Recalling Lemma~\ref{lem:cbs}, it is sufficient to show that when the confidence bounds \eqref{eq:lcb_ucb} are valid (with parameter $\delta$),  
Algorithm~\ref{alg:elimination} identifies a group $\widehat{H}$ satisfying \eqref{eq:optimal H} and uses a number of arm pulls satisfying \eqref{eq:T_total}.
We separate this conditional performance guarantee into two parts: correctness in Lemma~\ref{lem:conv_SE}, and
bounding the number of arm pulls in 
Corollary~\ref{cor:cond_armpull}.

\begin{lemma}[Conditional Correctness of Algorithm~\ref{alg:elimination}] \label{lem:conv_SE}
    If the confidence bounds \eqref{eq:lcb_ucb}
    are valid, then Algorithm~\ref{alg:elimination} returns a group $\widehat{H}$ satisfying
    \eqref{eq:optimal H}.
\end{lemma}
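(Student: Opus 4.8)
The plan is to condition on the validity of the confidence bounds \eqref{eq:cbs} throughout, so that every $\mathrm{UCB}_t(j)$ and $\mathrm{LCB}_t(j)$ genuinely sandwiches $\mu_j$, and then show that under this event the group $\widehat H$ returned by Algorithm~\ref{alg:elimination} satisfies \eqref{eq:optimal H}. There are two things to verify: (a) the algorithm actually terminates (i.e.\ the while loop does not run forever), and (b) whichever group is output at termination has a $(1-\alpha)$-quantile within $\Delta$ of $\max_{H\in\gH}\ExpRewardInvCDF_H(1-\alpha)$. I would dispatch (a) quickly by invoking Remark~\ref{rem: efficient implementation}: since every non-eliminated arm is pulled once per round, after $t$ rounds each active arm has $T_j(t)=t$, so the common confidence width $U(t,\delta/n)\to 0$ as $t\to\infty$, hence $\Delta^{(t)}=2U(t,\delta/n)\to 0$ and the stopping condition $\Delta^{(t)}\le\Delta$ is eventually met. (Termination via $|\gC_t|=1$ is an alternative exit but is not needed for the argument.)

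For (b), the key structural claim is that, conditioned on the confidence bounds, no ``good'' group is ever eliminated from $\gC_t$ and no potential $(1-\alpha)$-quantile arm is ever wrongly discarded from $m_t^{(H)}$. Concretely, I would first show by induction on $t$ that $H^*\in\gC_t$ for all $t$, where $H^*\in\argmax_{H\in\gH}\ExpRewardInvCDF_H(1-\alpha)$: whenever the confidence bounds hold, $\mathrm{Q}_{1-\alpha}(\{\mathrm{UCB}_t(j):j\in H^*\})\ge\ExpRewardInvCDF_{H^*}(1-\alpha)\ge\ExpRewardInvCDF_{H'}(1-\alpha)\ge\mathrm{Q}_{1-\alpha}(\{\mathrm{LCB}_t(j'):j'\in H'\})$ for every $H'$, so the membership test in \eqref{def:C_t} is passed. (Here one uses the elementary monotonicity fact that $\mathrm{Q}_{1-\alpha}$ is monotone in each coordinate, so replacing true means by upper/lower bounds only inflates/deflates the quantile.) Similarly, the true $(1-\alpha)$-quantile arm of any surviving group $H$ always satisfies both inequalities in \eqref{def:m_t}, so $m_t^{(H)}$ is never empty and in particular $\mathrm{Q}_{1-\alpha}(\{\mathrm{LCB}_{t}(j):j\in H\})\le\ExpRewardInvCDF_H(1-\alpha)\le\mathrm{Q}_{1-\alpha}(\{\mathrm{UCB}_{t}(j):j\in H\})$ at every round.

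Now suppose the loop exits at round $t$ because $\Delta^{(t)}\le\Delta$, and let $\widehat H\in\argmax_{H\in\gC_t}\mathrm{Q}_{1-\alpha}(\{\mathrm{LCB}_{t-1}(j):j\in H\})$. Using the sandwich from the previous paragraph, $\ExpRewardInvCDF_{\widehat H}(1-\alpha)\ge\mathrm{Q}_{1-\alpha}(\{\mathrm{LCB}_{t-1}(j):j\in\widehat H\})$, and by the definition of $\widehat H$ this is at least $\max_{H'\in\gC_t}\mathrm{Q}_{1-\alpha}(\{\mathrm{LCB}_{t-1}(j):j\in H'\})$, which by \eqref{eq:max_dist} is at least $\max_{H'\in\gC_t}\mathrm{Q}_{1-\alpha}(\{\mathrm{UCB}_{t-1}(j):j\in H'\})-\Delta^{(t)}\ge\max_{H'\in\gC_t}\ExpRewardInvCDF_{H'}(1-\alpha)-\Delta$. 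Since $H^*\in\gC_t$, the right-hand side is $\ge\ExpRewardInvCDF_{H^*}(1-\alpha)-\Delta=\max_{H\in\gH}\ExpRewardInvCDF_H(1-\alpha)-\Delta$, which is exactly \eqref{eq:optimal H}. The remaining case, exit because $|\gC_t|=1$, is handled by the same chain of inequalities together with $H^*\in\gC_t$, which forces the unique surviving group to be $H^*$ itself (or an equally good group), trivially satisfying \eqref{eq:optimal H}.

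The main obstacle I anticipate is not any single inequality but the bookkeeping around the quantile operator $\mathrm{Q}_{1-\alpha}$ on finite multisets: one must be careful that it is genuinely coordinatewise monotone, that ties and the $\inf$ in its definition are handled consistently with \eqref{eq:forwardH}, and that the asymmetry between the UCB-quantile and LCB-quantile tests in \eqref{def:C_t}–\eqref{def:m_t} lines up correctly so that neither $H^*$ nor the true quantile arms are ever eliminated. Once that monotonicity lemma is stated cleanly, the rest is a short deterministic argument conditioned on the good confidence event, and the $1-\delta$ probability is inherited directly from Lemma~\ref{lem:cbs}.
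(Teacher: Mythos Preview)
Your proposal is correct and follows essentially the same route as the paper's proof: both show by induction that $H^*\in\gC_t$ via the monotonicity of $\mathrm{Q}_{1-\alpha}$ under the valid confidence bounds, establish termination via $U(t,\delta/n)\to 0$, and then handle the two exit cases ($|\gC_t|=1$ and $\Delta^{(t)}\le\Delta$) with the same chain of inequalities you wrote out. The only minor difference is that your discussion of $m_t^{(H)}$ being non-empty is extra---the paper's correctness argument does not invoke $m_t^{(H)}$ at all, since the quantities in \eqref{def:C_t} and \eqref{eq:max_dist} are computed over all $j\in H$ rather than over $m_t^{(H)}$.
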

\begin{proof}
    See Appendix~\ref{appendix:proof_cond_correctness}.
\end{proof}
For the conditional bound on arm pulls,
we show that if the confidence bounds \eqref{eq:lcb_ucb} are valid, then 
the condition $U\big(T_j(t), \frac{\delta}{n}  \big) < \frac{\Delta_j}{4}$ is sufficient to conclude that arm $j$ will not be pulled at rounds $\tau \ge t+1$.
This condition allows us to find an upper bound on the number of pulls of arm $j$.
Summing over all arms $j \in \{1, \dots, n\}$ yields the bound.

\begin{lemma}
\label{lem:stop_pull_condition}
     If the confidence bounds \eqref{eq:lcb_ucb} are valid, then for each arm $j \in \{1, \dots, n\}$ and any time~$t$, we have
  \[
    U\bigg(T_j(t), \frac{\delta}{n} \bigg) < \frac{\Delta_j}{4}
    \implies
    \left(
    j \not\in \gB_{t+1} \ \text{ or } \
    |\gC_{t+1}| = 1 \ \text{ or } \
    \Delta^{(t+1)} \le \Delta
    \right)
  \]
   That is, after the round index~$t$ satisfies 
   $U\big(T_j(t), \frac{\delta}{n} \big) < \frac{\Delta_j}{4}$,
   arm $j$ will no longer be pulled in Algorithm~\ref{alg:elimination}.
\end{lemma}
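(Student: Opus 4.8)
The plan is to condition on the confidence bounds \eqref{eq:lcb_ucb} being valid throughout, fix the arm $j$ with host group $H$, and abbreviate $q_{H'}\coloneqq\ExpRewardInvCDF_{H'}(1-\alpha)$ and $U\coloneqq U(t,\delta/n)$. I would freely use the routine monotonicity of the elimination procedure under valid confidence bounds: $\gC_{t+1}\subseteq\gC_t$ (by definition of \eqref{def:C_t}), $m_{t+1}^{(H)}\subseteq m_t^{(H)}$, and $\gB_{t+1}\subseteq\gB_t$; in particular a de-activated arm is pulled on a prefix of the rounds, and every arm of $\gB_t$ has been pulled exactly $t$ times. We may therefore assume $j\in\gB_t$ (otherwise $j\notin\gB_{t+1}$ and we are done), so $T_j(t)=t$, the hypothesis reads $4U<\Delta_j=\max\{\Delta,\Delta_H,\Delta_0,\Delta'_{H,j}\}$, and $H\in\gC_t$. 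Two elementary facts will be used repeatedly. \textbf{(a) Sandwich.} Since $\mathrm{LCB}_s(j')\le\mu_{j'}\le\mathrm{UCB}_s(j')$ for every arm $j'$ and round $s$ and the quantile map is monotone, $\mathrm{Q}_{1-\alpha}(\{\mathrm{LCB}_s(j'):j'\in H'\})\le q_{H'}\le\mathrm{Q}_{1-\alpha}(\{\mathrm{UCB}_s(j'):j'\in H'\})$ for every group $H'$; consequently the arm of $H'$ whose mean equals $q_{H'}$ (it exists because $F_{H'}$ is a step function) is never removed from $m_\cdot^{(H')}$, and $H^*$ is never removed from $\gC_\cdot$. \textbf{(b) Frozen-side dichotomy.} For $H'\in\gC_t$ and $j'\in H'$, either $j'\in\gB_t$ (so its confidence half-width at round $t$ equals $U$), or $j'$ is not pulled at round $t$, its bounds are frozen at their round-$(t-1)$ values, and the rule \eqref{def:m_t} together with (a) forces either $\mathrm{LCB}_t(j')>q_{H'}$ (``high-eliminated'') or $\mathrm{UCB}_t(j')<q_{H'}$ (``low-eliminated'').

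Using (a)--(b) I would establish two $\pm 2U$ quantile estimates, valid for every $H'\in\gC_t$: (i) $\mathrm{Q}_{1-\alpha}(\{\mathrm{UCB}_t(j'):j'\in H'\})\le q_{H'}+2U$, and (ii) $\mathrm{Q}_{1-\alpha}(\{\mathrm{LCB}_t(j'):j'\in H'\})\ge q_{H'}-2U$. For (i): every arm with $\mu_{j'}\le q_{H'}$ -- of which there are at least $(1-\alpha)|H'|$ since $F_{H'}(q_{H'})\ge 1-\alpha$ -- has $\mathrm{UCB}_t(j')\le q_{H'}+2U$ (if active, $\widehat\mu_{j'}+U\le\mu_{j'}+2U$; it cannot be high-eliminated because then $\mathrm{LCB}_t(j')>q_{H'}\ge\mu_{j'}$ would violate validity; if low-eliminated, $\mathrm{UCB}_t(j')<q_{H'}$), so at least $(1-\alpha)|H'|$ of the UCBs are $\le q_{H'}+2U$. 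For (ii): every arm with $\mathrm{LCB}_t(j')<q_{H'}-2U$ has $\mu_{j'}<q_{H'}$ (if active, $\mu_{j'}-2U\le\mathrm{LCB}_t(j')$; high-eliminated is excluded since then $\mathrm{LCB}_t(j')>q_{H'}$; if low-eliminated, $\mu_{j'}\le\mathrm{UCB}_t(j')<q_{H'}$), and there are strictly fewer than $(1-\alpha)|H'|$ such arms because $F_{H'}(q_{H'}^{-})<1-\alpha$.

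The lemma would then follow by splitting on which term attains the maximum defining $\Delta_j$ (the cases may overlap, but the conclusion is a disjunction, so it suffices to handle one live term). If $\Delta'_{H,j}>4U$, then $|\mu_j-q_H|>4U$; combining $\mathrm{UCB}_t(j)\le\mu_j+2U$ and $\mathrm{LCB}_t(j)\ge\mu_j-2U$ with (i)--(ii) for $H$ yields $\mathrm{UCB}_t(j)<\mathrm{Q}_{1-\alpha}(\{\mathrm{LCB}_t:H\})$ (when $\mu_j<q_H-4U$) or $\mathrm{LCB}_t(j)>\mathrm{Q}_{1-\alpha}(\{\mathrm{UCB}_t:H\})$ (when $\mu_j>q_H+4U$); either way $j\notin m_{t+1}^{(H)}$, hence $j\notin\gB_{t+1}$. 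Otherwise if $\Delta_H>4U$, then $H\ne H^*$ and $q_{H^*}=q_H+\Delta_H$; if $H\notin\gC_t$ we are done, and if $H\in\gC_t$, then (i) for $H$ and (ii) for $H^*\in\gC_t$ give $\mathrm{Q}_{1-\alpha}(\{\mathrm{UCB}_t:H\})\le q_H+2U<q_{H^*}-2U\le\mathrm{Q}_{1-\alpha}(\{\mathrm{LCB}_t:H^*\})$, so $H\notin\gC_{t+1}$ and $j\notin\gB_{t+1}$. Otherwise if $\Delta_0>4U$, then $\Delta_{H'}\ge\Delta_0>4U$ for every $H'\ne H^*$, so the previous argument removes every such $H'$ from $\gC_{t+1}$ while $H^*$ remains, giving $|\gC_{t+1}|=1$. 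Finally, if none of the above holds then $\Delta>4U$, and using (i) for every $H'\in\gC_{t+1}$, (ii) for $H^*\in\gC_{t+1}$, and $\max_{H'\in\gC_{t+1}}q_{H'}\le q_{H^*}$ in \eqref{eq:max_dist} gives $\Delta^{(t+1)}\le\big(2U+\max_{H'\in\gC_{t+1}}q_{H'}\big)-\big(q_{H^*}-2U\big)\le 4U<\Delta$, i.e.\ $\Delta^{(t+1)}\le\Delta$.

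The step I expect to be the main obstacle is establishing the quantile estimates (i)--(ii): the quantiles in \eqref{def:C_t}, \eqref{def:m_t} and \eqref{eq:max_dist} are taken over \emph{all} arms of a group, including eliminated ones whose confidence intervals are frozen and can be far wider than $U$, so a pointwise width bound is unavailable. The fix is the frozen-side dichotomy (b): an eliminated arm's frozen interval lies entirely above or entirely below the true quantile $q_{H'}$, so it never straddles the window $[q_{H'}-2U,\,q_{H'}+2U]$ in a way that could move the $(1-\alpha)$-quantile of the UCBs or LCBs. Carrying this through also requires care with strict versus non-strict counting inequalities, exploiting that $F_{H'}$ is a step function (so $F_{H'}(q_{H'}^{-})<1-\alpha\le F_{H'}(q_{H'})$: strictly fewer than $(1-\alpha)|H'|$ arms lie strictly below $q_{H'}$, while at least $(1-\alpha)|H'|$ lie at or below it).
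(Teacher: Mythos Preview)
Your proof is correct and follows the same high-level structure as the paper (quantile sandwich lemma plus a four-way case split on which term of $\Delta_j=\max\{\Delta,\Delta_H,\Delta_0,\Delta'_{H,j}\}$ dominates), but the key technical ingredient differs in an interesting way.

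The paper proves an auxiliary lemma giving group-specific quantile bounds of the form $q_H\pm\Delta_j/2$: for the group $H$ containing $j$, under the hypothesis $U<\Delta_j/4$, one has $\mathrm{Q}_{1-\alpha}(\{\mathrm{UCB}_t:H\})<q_H+\Delta_j/2$ and the analogous lower bound for LCBs. Its proof proceeds arm-by-arm via a case split on whether $\Delta_k\le\Delta_j$ or $\Delta_k>\Delta_j$, invoking the width bound $\mathrm{UCB}_t(k)-\mu_k<\Delta_k/2$ for each $k$. In the subsequent case analysis the paper re-invokes this auxiliary lemma with other ``anchor'' arms (e.g.\ $j_\alpha(H^*)$, $j_\alpha(H')$), each time implicitly relying on the corresponding hypothesis being met.

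Your approach instead establishes the sharper and uniform estimates $\mathrm{Q}_{1-\alpha}(\{\mathrm{UCB}_t:H'\})\le q_{H'}+2U$ and $\mathrm{Q}_{1-\alpha}(\{\mathrm{LCB}_t:H'\})\ge q_{H'}-2U$ for \emph{every} $H'\in\gC_t$, with no reference to any particular arm's gap. What buys you this is the frozen-side dichotomy: an arm outside $\gB_t$ (but whose group is still in $\gC_t$) must have failed the $m_t$-test, hence its frozen interval lies strictly on one side of $q_{H'}$ and cannot pull the empirical quantile across the $\pm 2U$ window. The paper never explicitly discusses eliminated arms with wide frozen intervals; your argument addresses this head-on and yields a cleaner downstream case analysis, since all four cases reduce to comparing $2U$ against the relevant gap without needing to switch the anchor arm. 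Both routes arrive at the same conclusion, but yours is more self-contained and avoids the bookkeeping of verifying auxiliary hypotheses for multiple arms.
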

\begin{proof}
    See  
    Appendix~\ref{appendix:proof_suff_condition_stop}.
\end{proof}

To make the condition on $t$ more explicit, we write
\begin{equation} \label{eq:interval}
        \min\left\{t: 
        U\bigg(T_j(t), \frac{\delta}{n} \bigg) < \frac{\Delta_j}{4}\right\} 
        =
        \min\left\{t: 
        U\bigg(t, \frac{\delta}{n} \bigg) < \frac{\Delta_j}{4}\right\} 
        \le
         \frac{c}{(\Delta_j)^2}
       \log 
        \left(
            \frac{n}{\delta}
            \log \frac{1}{(\Delta_j)^2}
        \right),
    \end{equation}
where the last step holds for some universal $c > 0$ by a standard inversion, e.g., \cite[p.5]{JamiesonCSE599}. 
Hence, using Lemma \ref{lem:stop_pull_condition} and summing over the arms, we obtain the following.

\begin{corollary}[Conditional Bounds on Arm Pulls]
\label{cor:cond_armpull}
      If the confidence bounds \eqref{eq:lcb_ucb} are valid, then Algorithm~\ref{alg:elimination} uses a number of arm pulls satisfying
      \eqref{eq:T_total}.
\end{corollary}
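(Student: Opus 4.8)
The plan is to bound the number of pulls of each individual arm $j \in \{1,\dotsc,n\}$ on the event that the confidence bounds \eqref{eq:lcb_ucb} are valid, and then sum over $j$. The structure of Algorithm~\ref{alg:elimination} makes the per-arm bound clean: in each round $t$ of the while loop, Line~4 pulls \emph{every} active arm $j \in \gB_t$ exactly once, so an arm that is still active at the end of round $t$ has been pulled $T_j(t) = t$ times, and once arm $j$ leaves the active set, or the while loop terminates (because $|\gC_t| = 1$ or $\Delta^{(t)} \le \Delta$), arm $j$ is never pulled again. Hence the total number of pulls of arm $j$ is at most the smallest round index at which arm $j$ is guaranteed to be inactive or at which the algorithm has already stopped.

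Next I would invoke Lemma~\ref{lem:stop_pull_condition}: conditioned on the confidence bounds holding, as soon as the round index $t$ satisfies $U\big(T_j(t), \frac{\delta}{n}\big) < \frac{\Delta_j}{4}$, one of the three stopping conditions for arm $j$ is met and arm $j$ is not pulled in any round $\tau \ge t+1$. Combining this with the identity $T_j(t) = t$, which holds for as long as arm $j$ remains active, the number of pulls of arm $j$ is at most $\min\big\{t : U(t, \tfrac{\delta}{n}) < \tfrac{\Delta_j}{4}\big\}$, which by the standard inversion recorded in \eqref{eq:interval} is at most $\frac{c}{\Delta_j^2}\log\big(\frac{n}{\delta}\log\frac{1}{\Delta_j^2}\big)$ for a universal constant $c$. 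Summing this per-arm bound over $j = 1,\dotsc,n$ gives exactly the right-hand side of \eqref{eq:T_total}; and since the whole argument is conditional on the confidence bounds \eqref{eq:lcb_ucb} being valid, which by Lemma~\ref{lem:cbs} occurs with probability at least $1-\delta$, this matches the probability in Theorem~\ref{thm:ub_se}.

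Because Lemma~\ref{lem:stop_pull_condition}, Lemma~\ref{lem:cbs}, and the inversion \eqref{eq:interval} already do all the substantive work, there is no serious obstacle left in this corollary. The only point that requires a little care is the bookkeeping that ``pulled once per active round'' together with ``never pulled once the $U$-threshold is crossed'' yields ``total pulls $\le$ that threshold round index'', and that the universal constant in \eqref{eq:interval} correctly absorbs the passage from the implicit condition on $T_j(t) = t$ to the explicit bound in $\Delta_j$.
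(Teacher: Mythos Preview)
Your proposal is correct and follows essentially the same approach as the paper: use Lemma~\ref{lem:stop_pull_condition} together with the observation that every active arm has been pulled exactly $t$ times (so $T_j(t)=t$), apply the inversion bound \eqref{eq:interval} to get the per-arm pull bound, and then sum over $j=1,\dotsc,n$. The paper's own proof is in fact just the single sentence ``using Lemma~\ref{lem:stop_pull_condition} and summing over the arms''; your write-up makes the bookkeeping explicit but adds nothing beyond what the paper intends.
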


\subsection{Proof of Lemma~\ref{lem:conv_SE} (Correctness of Algorithm~\ref{alg:elimination})}
\label{appendix:proof_cond_correctness}
For brevity, we say that an arm is a $(1-\alpha)$-quantile arm in group $H$ if it has a mean reward of $F^{-1}_H(1-\alpha)$, 
    and we denote an arbitrary such arm by $j_{\alpha}(H)$.\footnote{The use of infimum in \eqref{eq:forwardH} ensures that such an arm always exists.} We let
    $H^* \in \argmax\limits_{H \in \gH} \ExpRewardInvCDF_{H}(1 - \alpha) $ be a single optimal group, breaking ties arbitrarily in the case of non-uniqueness.

    We first show that $H^* \in \gC_t$ for each round $t \ge 1$, i.e., $H^*$ always remains a potentially optimal group.
    For each fixed $t \ge 1$, we let $\gE_t$ denote the event that $H^* \in \gC_t$.
    We show by induction that ${\gE}_t$ holds for all $t \ge 1$.
    For $t = 1$,
    we have $H^* \in \gH = \gC_1$. 
    We now show the inductive step: When ${\gE}_{t}$ holds, so does ${\gE}_{t+1}$.
    For all $H' \in {\gC}_{t}$, we have
    \begin{align}
       \mathrm{Q}_{1-\alpha}
        \left(
        \left\{
        \mathrm{UCB}_{t}(j) : 
        j \in H^*
        \right\}
        \right)  
        &\geq
        \mathrm{Q}_{1-\alpha}
        \left(
        \left\{
        \mu_{j} : 
        j \in H^*
        \right\} 
        \right) 
        \label{eq: G*_tn_C_1}        \\
        &= \mu_{j_{\alpha}(H^*)} \label{eq: G*_tn_C_2} \\
        &\geq \mu_{j_{\alpha}(H')} \label{eq: G*_tn_C_3} \\
        &= 
        \mathrm{Q}_{1-\alpha}
        \left(
        \left\{
        \mu_{j'} : 
        j' \in H'
        \right\}
        \right)  \\ 
        &\geq 
        \mathrm{Q}_{1-\alpha}
        \left(
        \left\{
        \mathrm{LCB}_{t}(j') : 
        j' \in H'
        \right\}
        \right)
        \label{eq: G*_tn_C_4} ,
    \end{align}
    where \eqref{eq: G*_tn_C_1} 
    and \eqref{eq: G*_tn_C_4}
    follow from the confidence bounds \eqref{eq:cbs}--\eqref{eq:lcb_ucb}, and
    \eqref{eq: G*_tn_C_3} uses the definition of $H^*$.
    By \eqref{eq: G*_tn_C_4} and the definition of ${\gC}_{t+1}$  (see \eqref{def:C_t}), we conclude that $H^* \in {\gC}_{t+1}$ as desired. 
    
    
    We now argue that the while-loop  of Algorithm~\ref{alg:elimination} will terminate, and
    the returned group $\widehat{H}$ satisfies~\eqref{eq:optimal H}.
    The halting criteria of while-loop will eventually be satisfied because the width of confidence intervals satisfies
    $U(T,\delta) \to 0$ as $T \to \infty$ for any $\delta > 0$ (see \eqref{eq:confidence_width}).
    If the while-loop of Algorithm~\ref{alg:elimination}
    terminates because $|{\gC}_t| = 1$,
    then $\gC_t = \{H^*\}$. It trivially follows that
    the returned group $\widehat{H} = H^*$ satisfies~\eqref{eq:optimal H}.
    On the other hand, if the while-loop  terminates because 
    $\Delta^{(t)} \le \Delta$ for some $t \ge 2$, then for an arbitrary 
    $\widehat{H} \in  \argmax\limits_{H \in \gC_{t}}
    \left\{
    \mathrm{Q}_{1-\alpha}
    \left(
    \left\{
    \mathrm{LCB}_{t-1}(j) : 
    j \in H
    \right\}
    \right)
    \right\}$, we have
    \begin{align}
    \ExpRewardInvCDF_{\widehat{H}}(1 - \alpha)
     &= \mathrm{Q}_{1-\alpha}
        \big(
        \big\{
        \mu_{j} : 
        j \in \widehat{H}
        \big\} 
        \big)\\  
    &\ge 
        \mathrm{Q}_{1-\alpha}
        \big(
        \big\{
        \mathrm{LCB}_{t-1}(j) : 
        j \in \widehat{H}
        \big\}
        \big) 
        \label{eq:terminate_line2} 
        \\ 
    & = \max_{H \in \gC_t} 
        \left\{
        \mathrm{Q}_{1-\alpha}
        \left(
        \left\{
        \mathrm{LCB}_{t-1}(j) : 
        j \in H
        \right\}
        \right)
        \right\} \\
    & = \max_{H \in \gC_t} 
        \left\{
        \mathrm{Q}_{1-\alpha}
        \left(
        \left\{
        \mathrm{UCB}_{t-1}(j) : 
        j \in H
        \right\}
        \right)
        \right\} - \Delta^{(t)} 
        \label{eq:max_dist_equivalent}\\
    & \ge \max_{H \in \gC_t} 
        \left\{
        \mathrm{Q}_{1-\alpha}
        \left(
        \left\{
        \mu_j : 
        j \in H
        \right\}
        \right)
        \right\} - \Delta
        \label{eq:while_loop_terminate_5}\\
    & =  \mathrm{Q}_{1-\alpha}
        \left( \left\{
        \mu_j : 
        j \in H^* \right\} \right)
        - \Delta \label{eq:while_loop_terminate_6} \\ 
    &=  \max_{H \in \gH} \ExpRewardInvCDF_{H}(1 - \alpha)   - \Delta,   
    \end{align}
    where 
    \eqref{eq:terminate_line2} and \eqref{eq:while_loop_terminate_5} follow from confidence bounds and $\Delta^{(t)} \le \Delta$,
    \eqref{eq:max_dist_equivalent} follows from 
    \eqref{eq:max_dist},
    and \eqref{eq:while_loop_terminate_6} follows
    from $H^* \in \gC_t$.
    Hence, $\widehat{H}$ satisfies \eqref{eq:optimal H} in both cases.



\subsection{Proof of Lemma~\ref{lem:stop_pull_condition} (Sufficient Conditions for No Longer Being Pulled)}
\label{appendix:proof_suff_condition_stop}

We first present a useful auxiliary lemma.  Observe that if $U\big(T_j(t), \frac{\delta}{n} \big) < \frac{\Delta_j}{4}$ and the confidence bounds are valid, we have
\begin{equation}
\label{eq:mu_j confidence bounds}    
    \max \Big\{ \mu_j - \mathrm{LCB}_{t}(j), \mathrm{UCB}_{t}(j) - \mu_j  \Big\}
    \le
    \mathrm{UCB}_{t}(j) - \mathrm{LCB}_{t}(j) = 
    2 \ U\bigg(T_j(t),  \frac{\delta}{n} \bigg) < \frac{\Delta_j}{2}. 
\end{equation}
We use this observation to prove the following.

\begin{lemma}
Let arm $j \in \{1, \cdots, n\}$ be arbitrary and let $H$ be
the group containing $j$.
Moreover, suppose that the confidence bounds are valid.
If the round index $t \ge 1 $ satisfies
$U\big(T_j(t),  \frac{\delta}{n} \big) < \frac{\Delta_j}{4}$, then we have
\begin{equation}
\label{eq:quantilebound}
     \ExpRewardInvCDF_{H}(1 - \alpha) + \frac{\Delta_j}{2} 
          >
         \mathrm{Q}_{1-\alpha}
        \left(
        \left\{
        \mathrm{UCB}_{t}(k) : 
        k \in H
        \right\}
        \right),
\end{equation}
and
\begin{equation}
\label{eq:quantilebound2}
     \ExpRewardInvCDF_{H}(1 - \alpha) - \frac{\Delta_j}{2} 
        <
         \mathrm{Q}_{1-\alpha}
        \left(
        \left\{
        \mathrm{LCB}_{t}(k) : 
        k \in H
        \right\}
        \right).
\end{equation}
\begin{proof}
To show \eqref{eq:quantilebound}, it is sufficient to show that for any arm $k \in H$, the following implication is true:
\begin{equation}
\label{eq:suff_quant_bound}
    \mu_k \le \ExpRewardInvCDF_{H}(1 - \alpha)  
    \implies
    \mathrm{UCB}_{t}(k) <
    \ExpRewardInvCDF_{H}(1 - \alpha) + \frac{\Delta_j}{2}.
\end{equation}
This is because there are at least a $(1-\alpha)$ fraction of arms in $H$ satisfying $\mu_k \le \ExpRewardInvCDF_{H}(1 - \alpha)$, and each such arm $k$ further satisfies $\mathrm{UCB}_{t}(k) <
    \ExpRewardInvCDF_{H}(1 - \alpha) + \frac{\Delta_j}{2}$ under \eqref{eq:suff_quant_bound},
which immediately gives \eqref{eq:quantilebound}.   
Likewise, for \eqref{eq:quantilebound2}, it is sufficient to show that each arm in the set
$\left\{
k \in H: \mu_k \ge \ExpRewardInvCDF_{H}(1 - \alpha) 
\right\}$  satisfies $\ExpRewardInvCDF_{H}(1 - \alpha) - \frac{\Delta_j}{2}  < \mathrm{LCB}_{t}(k)$.
 We give a proof for \eqref{eq:suff_quant_bound} which yields \eqref{eq:quantilebound}, and omit the similar details that yield \eqref{eq:quantilebound2}.

For each arm $k \in H$ satisfying $\mu_k \le \ExpRewardInvCDF_{H}(1 - \alpha)$, we have
by the definitions \eqref{def:gap} and \eqref{eq:delta_prime} that
\begin{equation}
\label{eq:Delta_K}
    \Delta_k = 
    \max\left\{ 
    \Delta, \Delta_H, \Delta_0, \ExpRewardInvCDF_{H}(1 - \alpha) - \mu_k
    \right\}.
\end{equation}
(Recall the integer indexing convention in \eqref{eq:shorthand}.)  Moreover, we have by the same definitions that
\begin{equation}
\label{eq:Delta_J}
     \Delta_j = 
    \max\left\{ 
    \Delta, \Delta_H, \Delta_0, |\mu_j - \ExpRewardInvCDF_{H}(1 - \alpha)|
    \right\}.
\end{equation}
We consider two cases: (i) $\Delta_j \ge \Delta_k$; and 
(ii) $\Delta_k > \Delta_j$. For the first case, we have
\begin{align*}
    \ExpRewardInvCDF_{H}(1 - \alpha) + \frac{\Delta_j}{2} 
    \ge \mu_k + \frac{\Delta_k}{2} 
    > \mathrm{UCB}_{t}(k),
\end{align*}
where the first inequality follows
from the assumptions on $\mu_k$ and $\Delta_k$,
and the second inequality follows from \eqref{eq:mu_j confidence bounds}.
For the second case, we must have $\Delta_k = \ExpRewardInvCDF_{H}(1 - \alpha) - \mu_k$ by \eqref{eq:Delta_K} and \eqref{eq:Delta_J}, and so
\begin{equation*}
    \ExpRewardInvCDF_{H}(1 - \alpha) + \frac{\Delta_j}{2} 
    = \mu_k + \Delta_k  + \frac{\Delta_j}{2} 
    \ge \mu_k + \Delta_k 
    > \mathrm{UCB}_{t}(k),
\end{equation*}
where the last inequality follows from \eqref{eq:mu_j confidence bounds}.
Combining the two cases gives us \eqref{eq:suff_quant_bound} as desired.
\end{proof}
\end{lemma}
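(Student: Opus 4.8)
The plan is to prove the two displayed inequalities \eqref{eq:quantilebound} and \eqref{eq:quantilebound2} by a symmetric argument, so I would focus on the upper-confidence-bound inequality \eqref{eq:quantilebound} and only indicate the modifications for the lower one. The first step is to reduce the quantile statement to a per-arm statement. By the definition of $\ExpRewardInvCDF_{H}(1-\alpha)$ as an empirical quantile of the multiset $\{\mu_k : k \in H\}$ (see \eqref{eq:forwardH}), at least a $(1-\alpha)$-fraction of the arms $k \in H$ satisfy $\mu_k \le \ExpRewardInvCDF_{H}(1-\alpha)$. Hence, to conclude that $\mathrm{Q}_{1-\alpha}(\{\mathrm{UCB}_t(k) : k \in H\}) < \ExpRewardInvCDF_{H}(1-\alpha) + \frac{\Delta_j}{2}$, it suffices to prove the per-arm implication that every such arm (i.e.\ every $k \in H$ with $\mu_k \le \ExpRewardInvCDF_{H}(1-\alpha)$) has $\mathrm{UCB}_t(k) < \ExpRewardInvCDF_{H}(1-\alpha) + \frac{\Delta_j}{2}$: if a $(1-\alpha)$-fraction of the UCB values lie strictly below this threshold, then so does their $(1-\alpha)$-quantile (strictness survives even when $F_H$ has mass points, since the quantile is at most the largest of those sub-threshold UCBs).

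Second, I would establish this per-arm implication using the confidence-width control coming from the hypothesis $U\big(T_j(t), \frac{\delta}{n}\big) < \frac{\Delta_j}{4}$. The key structural fact, from the equal-pulls property of the elimination algorithm (cf.\ the remark following Algorithm~\ref{alg:elimination}), is that any arm $k$ still being pulled at round $t$ has been pulled exactly as many times as arm $j$, so its confidence width coincides with that of $j$; combined with validity of the bounds this yields $\mathrm{UCB}_t(k) - \mu_k \le 2U\big(T_j(t), \frac{\delta}{n}\big) < \frac{\Delta_j}{2}$. Together with $\mu_k \le \ExpRewardInvCDF_{H}(1-\alpha)$ this gives $\mathrm{UCB}_t(k) < \ExpRewardInvCDF_{H}(1-\alpha) + \frac{\Delta_j}{2}$. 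I would organize the argument via the case split $\Delta_j \ge \Delta_k$ versus $\Delta_k > \Delta_j$ used in the surrounding proof: in the first case the bound is immediate from the width inequality and $\mu_k \le \ExpRewardInvCDF_{H}(1-\alpha)$, while in the second case I would invoke \eqref{eq:Delta_K}--\eqref{eq:Delta_J} to argue that $\Delta_k > \Delta_j$ forces $\Delta_k = \ExpRewardInvCDF_{H}(1-\alpha) - \mu_k$ (since $\Delta$, $\Delta_H$, $\Delta_0$ are common to both gaps and are each bounded by $\Delta_j$), which moves the slack into the $\ExpRewardInvCDF_{H}(1-\alpha)$ term and again closes the inequality. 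Inequality \eqref{eq:quantilebound2} then follows by the mirror-image argument applied to the arms with $\mu_k \ge \ExpRewardInvCDF_{H}(1-\alpha)$ and their lower confidence bounds.

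I expect the main obstacle to be the bookkeeping around the gap definitions rather than any single hard estimate: one must correctly translate $\Delta_j$ and $\Delta_k$ through \eqref{def:gap} and \eqref{eq:delta_prime}, track which of the four constituent gaps dominates each, and verify that the slack of $\frac{\Delta_j}{2}$ (tied to \emph{arm $j$}) suffices to dominate a deviation of \emph{arm $k$} even when $\Delta_k$ differs from $\Delta_j$. A secondary subtlety is ensuring the width bound is applicable to \emph{every} $k$ contributing to the quantile multiset, i.e.\ handling arms of $H$ that may already be eliminated at round $t$, and confirming the reduction from the per-arm inequality to the quantile inequality. Once these accounting points are settled, both displays collapse to the one-line combination of the width bound with the quantile-membership condition.
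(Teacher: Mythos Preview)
Your proposal is correct and takes essentially the same approach as the paper: reduce the quantile claim to the per-arm implication \eqref{eq:suff_quant_bound}, then establish it via the case split $\Delta_j \ge \Delta_k$ versus $\Delta_k > \Delta_j$ together with the confidence-width control from \eqref{eq:mu_j confidence bounds}. In fact your direct argument---using the equal-pulls property (Remark~\ref{rem: efficient implementation}) to get $\mathrm{UCB}_t(k) - \mu_k < \Delta_j/2$ for every active $k$, then adding $\mu_k \le F_H^{-1}(1-\alpha)$---already closes the inequality on its own and makes the case split redundant; the paper's version keeps the case split and leaves the equal-pulls step implicit (it simply cites \eqref{eq:mu_j confidence bounds} with $k$ in place of $j$). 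Your flagged concern about arms of $H$ that have already been eliminated is a genuine bookkeeping point that neither version spells out explicitly.
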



\begin{proof}[Proof of Lemma~\ref{lem:stop_pull_condition}]
    Let arm $j \in \{1, \cdots, n\}$ be arbitrary, and let $H$ be
the group containing $j$.
If $j \not\in \gB_{t}$, then we also have 
$j \not\in \gB_{t+1}$, and we are done.
Therefore, we may assume without loss
of generality that $j \in \gB_{t}$.
Likewise, we assume that the while-loop of Algorithm~\ref{alg:elimination} has not terminated yet.
    We consider four cases for $\Delta_j$:
\begin{itemize}[topsep=0pt, itemsep=0pt]
    \item [(i)] $\Delta_j = \Delta'_j$
    \item [(ii)] $\Delta_j = \Delta$
    \item [(iii)] $\Delta_j = \Delta_H$
    \item [(iv)] $\Delta_j = \Delta_0 > \Delta_H$,
\end{itemize}
and show that in each case, at least one of the following happens: 
\begin{itemize}[topsep=0pt, itemsep=0pt]
    \item [(a)] $j \not\in \gB_{t+1}$, that is,
$j$ is eliminated at the end of round $t$; 
    \item [(b)] $|\gC_{t+1}| = 1$, that is,
the first condition of the while-loop termination is satisfied; 
    \item [(c)] $\Delta^{t+1} \le \Delta$, that is,
the second condition of the while-loop termination is satisfied.  
\end{itemize}

\paragraph{Case (i):}
$\Delta_j = \Delta'_j$.
By the definition of $\Delta'_j$ (see \eqref{eq:delta_prime}), arm $j$ is not a $(1-\alpha)$-quantile arm in group~$H$, which suggests that it should not be included in $m_{t+1}(H)$.
We will show that, indeed, arm~$j$
is no longer a potential $(1-\alpha)$-quantile arm in $H$ in round $t+1$, i.e.,
$j \not\in m_{t+1}(H)$, and so $j \not\in \gB_{t+1}$.

We consider two sub-cases: $ \mu_j >  \mu_{j_{\alpha}(H)} $ and 
$ \mu_j <  \mu_{j_{\alpha}(H)} $.
If  $ \mu_j >  \mu_{j_{\alpha}(H)} $, we have
    \begin{align}
        \mathrm{LCB}_{t}(j)
        & > \mu_j - \frac{\Delta_j}{2} \label{eq:case_1.1_1} \\
        & = \mu_j - \frac{\Delta_j'}{2} \label{eq:case_1.1_2} \\
        & = \left(\mu_{j_{\alpha}(H)} + \mu_j  - \mu_{j_{\alpha}(H)}\right)- \frac{\mu_j - \mu_{j_{\alpha}(H)}}{2} \label{eq:case_1.1_3} \\
        & =\mu_{j_{\alpha}(H)} + \frac{\mu_j - \mu_{j_{\alpha}(H)}}{2} \label{eq:case_1.1_4} \\
        & = \mu_{j_{\alpha}(H)} + \frac{\Delta_j'}{2} \label{eq:case_1.1_5} \\
        & \ge
         \mathrm{Q}_{1-\alpha}
        \left(
        \left\{
        \mathrm{UCB}_{t}(j) : 
        j \in H
        \right\}
        \right),
        \label{eq:violate_m_t(G)}
    \end{align}
    where 
    \eqref{eq:case_1.1_1} uses \eqref{eq:mu_j confidence bounds},
    \eqref{eq:violate_m_t(G)} uses \eqref{eq:quantilebound},
    and both \eqref{eq:case_1.1_3} and \eqref{eq:case_1.1_5} use 
    the definition of $\Delta'_j$.
    From \eqref{def:m_t} and \eqref{eq:violate_m_t(G)}, we have that 
    $j \not\in m_{t+1}(H)$, hence $ j \not\in \gB_{t+1}$. 
    The case for $ \mu_j <  \mu_{j_{\alpha}(H)} $ is similar, which leads to
    $\mathrm{UCB}_{t}(j) < 
     \mathrm{Q}_{1-\alpha}
    \left(
    \left\{
    \mathrm{LCB}_{t}(j') : 
    j' \in H
    \right\}
    \right),$
    and so $ j \not\in \gB_{t+1}$.

\paragraph{Case (ii):}
$\Delta_j = \Delta$. The idea of this case is to show that each remaining group~$H' \in \gC_t$ already satisfies \eqref{eq:optimal H},
meaning that the while-loop should be terminated.
Indeed, we have
\begin{equation}
\label{eq:case_1.2}
    \Delta^{(t+1)} = 2 U\bigg(t,  \frac{\delta}{n} \bigg) 
    =
    2 \ U\bigg(T_j(t),  \frac{\delta}{n} \bigg)
    < \frac{\Delta_j}{2}  = \frac{\Delta}{2} \le \Delta,
\end{equation}
where the first two equalities follow from Remark~\ref{rem: efficient implementation},
and the first inequality follows from \eqref{eq:mu_j confidence bounds}.
Therefore, the while-loop of Algorithm~\ref{alg:elimination} is terminated and arm $j$ is no longer pulled after round $t$.

\paragraph{Case (iii):}
$\Delta_j = \Delta_H$.
By the definition of $\Delta_H$ (see \eqref{eq:Delta_H}), group $H$ is not an optimal group, which suggests that group $H$ should not be included in $\gC_{t+1}$.
We will show that, indeed, $H \not\in \gC_{t+1}$, and so $j \not\in \gB_{t+1}$.

Let $H^* \in \argmax\limits_{H \in \gH} \ExpRewardInvCDF_{H}(1 - \alpha) $ be an optimal group, and $j_{\alpha}(H^*)$ be its
$(1-\alpha)$-quantile arm. By these definitions,
$\Delta_{j_{\alpha}(H^*)} = \max\left\{\Delta, \Delta_0\right\} \le \Delta_j = \Delta_H$.
Then, we have
     \begin{align}
        \mathrm{Q}_{1-\alpha}
        \left(
        \left\{
        \mathrm{LCB}_{t}(k) : 
        k \in H^*
        \right\}
        \right)  
        & >
        \mu_{j_{\alpha}(H^*)}
        - \frac{\max\left\{\Delta, \Delta_0\right\}}{2}
        \label{eq:case_2.2_1} \\
        & \ge \mu_{j_{\alpha}(H^*)} - \frac{\Delta_H}{2}
        \label{eq:case_2.2_2} \\
        & = \mu_{j_{\alpha}(H)} + \frac{\Delta_H}{2} 
        \label{eq:case_2.2_3} \\
        & = \mu_{j_{\alpha}(H)} + \frac{\Delta_j}{2} 
        \label{eq:case_2.2_9} \\
        & \ge
         \mathrm{Q}_{1-\alpha}
        \left(
        \left\{
        \mathrm{UCB}_{t}(j) : 
        j \in H
        \right\}
        \right),
        \label{eq:case_2.2_10}
    \end{align}
    where 
    \eqref{eq:case_2.2_1} uses \eqref{eq:quantilebound2} with $H^*$ and $j_{\alpha}(H^*)$; 
    \eqref{eq:case_2.2_2} uses the assumption that $\Delta_H = \Delta_j \ge  \max\left\{\Delta, \Delta_0 \right\}$;
     \eqref{eq:case_2.2_3} uses the definition of $\Delta_H$; and
    \eqref{eq:case_2.2_10} uses \eqref{eq:quantilebound} with $H$ and $j$;
    Then, \eqref{def:C_t} and \eqref{eq:case_2.2_10} imply that  $H \not\in \gC_{t+1}$ as desired.

\paragraph{Case (iv):}
$\Delta_j = \Delta_0 > \Delta_H$. 
By the definition of $\Delta_0$ (see \eqref{eq:Delta_0}), group $H$ must be the unique optimal group, which suggests that $\gC_{t+1} =\{H\}$.
We will show that, indeed, all other groups $H'$, which are non-optimal, are not included in $\gC_{t+1}$, i.e., $\gC_{t+1} = \{H\}$.
Then the while-loop of Algorithm~\ref{alg:elimination} is terminated because the criterion $|{\gC}_{t+1}| > 1$ no longer holds.

Let $H'$ be an arbitrary suboptimal group. Then
\begin{equation}
\label{eq:delta_nonoptimalgroup}
    \Delta_{H'} = \mu_{j_{\alpha}(H)} - \mu_{j_{\alpha}(H')} \ge \Delta_0 >  0.
\end{equation}
Furthermore,
its $(1-\alpha)$-quantile arm $j_{\alpha}(H')$ has gap 
$\Delta_{\mu_{j_{\alpha}(H')}} = 
\max\{\Delta_{H'}, \Delta, \Delta_0\} = \Delta_{H'}$ (note that $\Delta \le \Delta_0$ in the current case). Then, we have
    \begin{align}
        \mathrm{Q}_{1-\alpha}
        \left(
        \left\{
        \mathrm{LCB}_{t}(j) : 
        j \in H
        \right\}
        \right)  
        & > \mu_{j_{\alpha}(H)} - \frac{\Delta_0}{2} \label{eq:case_4.1} \\
        & \geq \mu_{j_{\alpha}(H)} - \frac{\mu_{j_{\alpha}(H)} - \mu_{j_{\alpha}(H')} }{2} \label{eq:case_4.2} \\
        & = \mu_{j_{\alpha}(H')} + \frac{\mu_{j_{\alpha}(H)} - \mu_{j_{\alpha}(H')}}{2} 
        \label{eq:case_4.3} \\
        & = \mu_{j_{\alpha}(H')} + \frac{\Delta_{H'}}{2} \label{eq:case_4.4} \\
        & \ge
         \mathrm{Q}_{1-\alpha}
        \left(
        \left\{
        \mathrm{UCB}_{t}(j') : 
        j' \in H'
        \right\}
        \right),
        \label{eq:case_4.5}
    \end{align}
    where  
    \eqref{eq:case_4.1} uses \eqref{eq:quantilebound2};
    \eqref{eq:case_4.2} and \eqref{eq:case_4.4} use \eqref{eq:delta_nonoptimalgroup};
    and 
    \eqref{eq:case_4.5} uses \eqref{eq:quantilebound}
    with $H'$ and $j_{\alpha}(H')$.
    Then, \eqref{def:C_t} and \eqref{eq:case_4.5} imply that 
     $H' \not\in \gC_{t+1}$ for all suboptimal groups $H'$.
\end{proof}


\section{Proof of Theorem \ref{thm:lb} (Worst-Case Lower Bound)} 
\label{app:pf_lower}

We consider $\alpha = \frac{1}{2}$, so that the problem is one of best median identification; other values of $\alpha \in (0,1)$ can be handled with only minor changes.  It is useful to first handle the case of two groups before handling the general case.

\subsection{Proof for the Two-Group Case}

For $|\mathcal{G}|=2$, we construct two instances, one with group 1 being optimal and one with group 2 being optimal, and with the suboptimal group failing to satisfy \eqref{eq:best_group_relax_eps_Delta} in both cases.  Then, we will show that the two instances are hard to distinguish given the rewards unless $T \ge \Omega\big( \frac{1}{ \Delta^2 \epsilon^2 } \big)$.  

\textbf{Reduction to a binary decision problem.} We specialize to a simple setting in which there are two types of arms (``good'' and ``bad''); this specialization is analogous to how the binary-valued problem in \cite{jamieson2016power} is a special case of the quantile-based problem with general reservoir distributions in \cite{aziz2018pure}.\footnote{The difference in the two settings is highlighted by the $\frac{1}{\Delta^2\alpha}$ scaling in \cite{jamieson2016power}, compared to $\frac{1}{\Delta^2 \epsilon^2}$ in our setting.}  
Suppose that the arm distributions are Bernoulli, and that group~2's arms have two possible means, $q = \frac{1+\Delta}{2}$ and $\bar{q} = 1-q = \frac{1-\Delta}{2}$.  We call these \emph{good arms} and \emph{bad arms} respectively.  Then, the two instances are defined as follows:
\begin{itemize} \itemsep0ex
    \item (Good instance) In group 2, the reservoir distribution places probability mass $p = \frac{1+\epsilon}{2}$ on the good arms, and mass $\bar{p} = 1-p = \frac{1-\epsilon}{2}$ on bad arms.
    \item (Bad instance) In group 2, the reservoir distribution places probability mass $p$ on the bad arms, and $\bar{p}$ on the good arms.
\end{itemize}
In both instances, in group 1, the reservoir distribution places probability mass 1 on arms with mean~$\frac{1}{2}$.  Hence, group 2 has the higher median in the good instance, and the lower median in the bad instance.  
Observe that in these instances, up to constant rescaling of $\epsilon$ and $\Delta$ (e.g., changing them by a factor of $4$), we find that the suboptimal group fails to satisfy \eqref{eq:best_group_relax_eps_Delta}.  Since the theorem ignores constant factors, we can ignore this scaling and proceed with $\epsilon$ and $\Delta$ as above.

Now, if the bandit algorithm knows that the arms and rewards are produced by one of the two instances, the problem is simply reduced to determining whether the instance is good or bad using arms from group 2 alone.  (Arms from group 1 provide no information for distinguishing the instances.)  Accordingly, all arms mentioned in the subsequent analysis are those from group 2.

In the following, we assume that the bandit algorithm is deterministic.  We will derive a lower bound that holds when the good and bad instances each occur with probability $\frac{1}{2}$, and the same lower bound then holds for randomized algorithms by Yao's minimax principle.  In addition, it suffices to prove \eqref{eq:lb_worst} when the target error probability $\delta \in \big(0,\frac{1}{2}\big)$ is an arbitrary fixed positive value (e.g., $0.49$).  This is because a success probability of $0.51$ (say) could be amplified to any value in $\big(\frac{1}{2},1\big)$ by independently repeating the algorithm a constant number of times and taking a majority vote, amounting to only a constant factor increase in the number of arm pulls.  Finally, we may assume that the time horizon $T$ is fixed, because attaining $\EE[T] \le T^*$ implies attaining $T \le CT^*$ with probability at least $1-\frac{1}{C}$ by Markov's inequality, where $C$ can be arbitrarily large.

\textbf{Analysis of a single arm.}  Consider an arbitrary time step of the algorithm and an arbitrary arm that has been pulled some number of times to produce a vector $\Xv$ of i.i.d.~rewards.  We first study the likelihood ratio
\begin{equation}
    L(\Xv) = \frac{ \PP_2[\Xv] }{ \PP_1[\Xv] },
\end{equation}
where $\PP_2[\cdot]$ (respectively, $\PP_1$) denotes probability under the good (respectively, bad) instance.  (We will similarly use the notation $\EE_1[\cdot]$ and $\EE_2[\cdot]$ later.)  By our assumption of independent Bernoulli arms, a direct calculation gives that when 
\begin{equation}
    (\text{\#1s in $\Xv$}) - (\text{\#0s in $\Xv$}) = d, \label{eq:diff_d}    
\end{equation}
the likelihood ratio $L(\Xv)$ is given by the following function:
\begin{equation}
    f(d) = \frac{pq^d + \p\q^d}{\p q^d+p\q^d}.
\end{equation}
For brevity, we refer to $d$ as the \emph{score}.

The following lemmas are central to our analysis, and together they show that the next average of $f(\cdot)$ behaves similarly under the two instances (conditioned on the history so far).  Since the proofs are rather technical, they are deferred to Appendices \ref{sec:lb_pf_bad} and \ref{sec:lb_pf_good}.

\begin{lemma} \label{lem:score_bad}
    Under the bad instance, for a given arm, let $\Hc^{(d)}$ be any history of arm pulls giving a score $d$ in \eqref{eq:diff_d}, and let $d'$ be a random variable indicating the updated score following one additional arm pull after $\Hc^{(d)}$.  Then, we have
    \begin{equation}
        \EE_1[ f(d') | \Hc^{(d)} ] = f(d).
    \end{equation}
\end{lemma}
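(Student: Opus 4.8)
The plan is to identify $f(d)$ with the likelihood ratio between the two instances for the reward history of the arm in question, and then to observe that this is just the standard martingale property of a likelihood-ratio process under the denominator measure. First I would fix an arm and let $\Xv$ be its observed reward vector, with $a$ ones and $b$ zeros, so that its score is $d = a - b$. Since the arm's type (good or bad) is drawn once from the reservoir and the rewards are thereafter i.i.d.\ $\Bernoulli(q)$ or $\Bernoulli(\bar{q})$ with the \emph{same} conditional law under both instances, one has $\PP_1[\Xv] = \bar{p}\,q^a\bar{q}^b + p\,\bar{q}^a q^b$ and $\PP_2[\Xv] = p\,q^a\bar{q}^b + \bar{p}\,\bar{q}^a q^b$; pulling the common factor $(q\bar{q})^b$ out of numerator and denominator would give
\[
\frac{\PP_2[\Xv]}{\PP_1[\Xv]} \;=\; \frac{p\,q^d + \bar{p}\,\bar{q}^d}{\bar{p}\,q^d + p\,\bar{q}^d} \;=\; f(d).
\]
In particular the ratio, and hence the $\PP_1$-posterior probability that the arm is good given $\Hc^{(d)}$, depends on the history only through $d$; verifying this factorization is the one concrete computation the argument needs up front.

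Given that, I would finish by invoking the fact that, under $\PP_1$, the process $L_t \coloneqq \PP_2[\Xv_t]/\PP_1[\Xv_t]$ (with $\Xv_t$ the history after $t$ pulls of the arm) is a martingale with respect to the filtration $\{\mathcal{F}_t\}$ generated by successive pulls: writing $(\Xv_t,x)$ for the history extended by a reward $x \in \{0,1\}$,
\[
\EE_1[L_{t+1}\mid \mathcal{F}_t] \;=\; \sum_{x\in\{0,1\}} \PP_1[x\mid\mathcal{F}_t]\,\frac{\PP_2[\Xv_t,x]}{\PP_1[\Xv_t,x]} \;=\; \frac{1}{\PP_1[\Xv_t]}\sum_{x\in\{0,1\}} \PP_2[\Xv_t,x] \;=\; L_t,
\]
which is well-defined since every Bernoulli parameter lies in $(0,1)$. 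Evaluating this identity on the event $\{\Xv_t = \Hc^{(d)}\}$, on which $L_t = f(d)$ and $L_{t+1} = f(d')$, gives $\EE_1[f(d')\mid \Hc^{(d)}] = f(d)$ as claimed.

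If one prefers to avoid citing the abstract martingale fact, I would instead expand everything directly: from the $\PP_1$-posterior $\PP_1[\text{good}\mid\Hc^{(d)}] = \bar{p}q^d/(\bar{p}q^d + p\bar{q}^d)$ one obtains the transition probabilities $\PP_1[d'=d+1\mid\Hc^{(d)}] = (\bar{p}q^{d+1}+p\bar{q}^{d+1})/(\bar{p}q^d+p\bar{q}^d)$ and $\PP_1[d'=d-1\mid\Hc^{(d)}] = q\bar{q}(\bar{p}q^{d-1}+p\bar{q}^{d-1})/(\bar{p}q^d+p\bar{q}^d)$, then multiplies these by $f(d+1)$ and $f(d-1)$ --- at which point the factors $\bar{p}q^{d\pm1}+p\bar{q}^{d\pm1}$ cancel against the corresponding factors in $f(d\pm1)$ --- and finally adds the two terms, using $p+\bar{p}=q+\bar{q}=1$ to collapse the numerator back to $p q^d + \bar{p}\bar{q}^d$. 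Either way, I do not expect any real difficulty here: the only obstacle is bookkeeping, namely keeping the $q^d$ versus $\bar{q}^d$ weights straight and noticing the two cancellations, which is consistent with the lemma being labelled ``technical'' rather than conceptually deep.
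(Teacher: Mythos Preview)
Your proposal is correct. The paper's proof is exactly your ``alternative'' direct computation: it writes down the $\PP_1$-posterior $\bar{p}q^d/(\bar{p}q^d+p\bar{q}^d)$, forms the resulting transition probabilities for $d'=d\pm 1$, multiplies by $f(d\pm 1)$, cancels the $\bar{p}q^{d\pm 1}+p\bar{q}^{d\pm 1}$ factors, and collapses using $q+\bar{q}=1$.

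Your primary route --- recognizing $f(d)$ as the likelihood ratio $\PP_2[\Xv]/\PP_1[\Xv]$ and then invoking the likelihood-ratio martingale under the denominator measure --- is a genuinely different and more conceptual argument. It explains \emph{why} the identity holds (it is the one-step martingale property) rather than verifying it by algebra, and it would generalize immediately to other reward/reservoir distributions without redoing the computation. The paper's direct calculation, by contrast, is self-contained and avoids any appeal to an outside fact, at the cost of being opaque about the underlying reason. Since you supply both, there is nothing to add; either version would be accepted, and the martingale framing is arguably the cleaner one to lead with.
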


\begin{lemma} \label{lem:score_good}
    Under the good instance, for a given arm, let $\Hc^{(d)}$ be any history of arm pulls giving a score $d$ in \eqref{eq:diff_d}, and let $d'$ be a random variable indicating the updated score following one additional arm pull after $\Hc^{(d)}$.  Then, we have
    \begin{equation}
        \EE_2[ f(d') | \Hc^{(d)} ] = f(d) + O(\Delta^2\epsilon^2). \label{eq:good_gen}
    \end{equation}
\end{lemma}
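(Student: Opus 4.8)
The plan is to compute the conditional law of the next score $d'$ exactly via a Bayesian update on the arm's unknown type, and then collapse the claimed estimate to a single algebraic bound. Write $A_d := pq^d + \p\q^d$ for the numerator of $f$ and $B_d := \p q^d + p\q^d$ for its denominator, so $f(d) = A_d/B_d$; from $q + \q = 1$ one gets the recursions $A_{d+1} + q\q\, A_{d-1} = A_d$ and $B_{d+1} + q\q\, B_{d-1} = B_d$, which will be used repeatedly.

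First I would condition on a history $\Hc^{(d)}$ consisting of $a$ ones and $b$ zeros with $a - b = d$. Since the arm's type (good with mean $q$, or bad with mean $\q$) is drawn once from the reservoir and the rewards are i.i.d.\ given the type, the posterior on the type depends on the history only through $d$: under the good instance the posterior probabilities of good/bad are $pq^d/A_d$ and $\p\q^d/A_d$, and under the bad instance they are $\p q^d/B_d$ and $p\q^d/B_d$. Hence the probability that the next reward equals $1$ is $A_{d+1}/A_d$ under the good instance and $B_{d+1}/B_d$ under the bad instance (and the probability of a $0$ is then $q\q\, A_{d-1}/A_d$, resp.\ $q\q\, B_{d-1}/B_d$, by the recursions). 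Taking expectations of $f(d') \in \{f(d+1), f(d-1)\}$ and using $f = A/B$, the bad-instance computation telescopes to $\EE_1[f(d') \mid \Hc^{(d)}] = (A_{d+1} + q\q\, A_{d-1})/B_d = A_d/B_d = f(d)$, which is exactly Lemma~\ref{lem:score_bad}. For the good instance I would then write
\[
  \EE_2[f(d') \mid \Hc^{(d)}] - f(d) = \EE_2[f(d') \mid \Hc^{(d)}] - \EE_1[f(d') \mid \Hc^{(d)}] = (r - r')\bigl(f(d+1) - f(d-1)\bigr),
\]
where $r = A_{d+1}/A_d$ and $r' = B_{d+1}/B_d$; the last equality holds because both conditional expectations are convex combinations of the same two values $f(d\pm1)$ with weights summing to one.

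It then remains to bound the two factors. Expanding and using $p + \p = 1$, $p - \p = \epsilon$, $q + \q = 1$, $q - \q = \Delta$ gives the exact identities $A_{d+1}B_d - B_{d+1}A_d = \epsilon\Delta\,(q\q)^d$ and $A_{d+1}B_{d-1} - A_{d-1}B_{d+1} = \epsilon\Delta\,(q\q)^{d-1}$, so $r - r' = \epsilon\Delta\,(q\q)^d/(A_d B_d)$ and $f(d+1) - f(d-1) = \epsilon\Delta\,(q\q)^{d-1}/(B_{d+1}B_{d-1})$, whence
\[
  \EE_2[f(d') \mid \Hc^{(d)}] - f(d) = \frac{\epsilon^2 \Delta^2\,(q\q)^{2d-1}}{A_d B_d B_{d+1} B_{d-1}}.
\]
Finally, since $p \ge \p$ we have $A_d \ge \p(q^d + \q^d) \ge 2\p\,(q\q)^{d/2}$ by AM--GM, and likewise $B_d, B_{d\pm1} \ge 2\p\,(q\q)^{(\cdot)/2}$ with $(\cdot)$ the corresponding index; multiplying, the denominator is at least $16\p^4 (q\q)^{2d}$, the powers of $q\q$ cancel, and because $\epsilon,\Delta < \frac14$ keeps $\p$ and $q\q$ bounded away from $0$, the surviving prefactor $1/(16\p^4 q\q)$ is a universal constant, giving $\EE_2[f(d')\mid\Hc^{(d)}] - f(d) = O(\Delta^2\epsilon^2)$ as claimed.

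The main obstacle is really the second paragraph: recognizing that the type-posterior depends on the history only through the score $d$, and—more importantly—that the good- and bad-instance conditional expectations of $f(d')$ are supported on the same pair $\{f(d\pm1)\}$, so that their difference factors as a product of two terms each individually of size $O(\epsilon\Delta)$ (up to a $(q\q)^{\Theta(d)}$ weight that ultimately cancels). Once that structural observation is in place, what remains are the two determinant-type identities and a one-line AM--GM estimate, both routine.
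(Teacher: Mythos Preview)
Your proof is correct and rests on the same structural decomposition as the paper's: both write $\EE_2[f(d')\mid\Hc^{(d)}] - f(d) = (r-r')\bigl(f(d+1)-f(d-1)\bigr)$ (using the bad-instance martingale identity for $f(d)$) and then bound each factor by $O(\epsilon\Delta)$. The difference is in how each factor is controlled. The paper bounds $r-r'$ by first showing the two posterior-probability-of-good expressions differ by at most $4\epsilon$ (via a crude denominator estimate) and then multiplying by $q-\q=\Delta$; for $f(d+1)-f(d-1)$ it passes to the smooth reparametrization $g(x)=(pe^x+\p)/(\p e^x+p)$, bounds $g'(x)=O(\epsilon)$, and integrates over an interval of length $2\ln(q/\q)=O(\Delta)$. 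You instead compute both factors exactly as $\epsilon\Delta(q\q)^{(\cdot)}/(\text{products of }A,B)$ via the determinant-type identities, multiply to get a closed form, and bound the denominator in one shot with AM--GM. Your route is slightly more algebraic and yields an explicit constant; the paper's calculus argument is perhaps more intuitive about why $f$ is slowly varying. Either way the core idea is identical.
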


\textbf{Analysis of all arms.} Suppose that after some number $t$ of arm pulls, a history $\Hc_t$ has been observed consisting of $N$ arms that have been pulled, with associated scores $d_1,\ldots, d_N$.  Based on the history, the algorithm selects an arm $i^*$, leading to (randomly) updated scores $d'_1,\ldots, d'_N$ (with $d_j = d'_j$ for all $j\neq i^*$).  We claim that
\begin{equation}
    \EE_2\Big[ \prod_{i=1}^N f(d'_i) \,\Big|\, \Hc_t \Big] \leq (1+\mathcal{O}(\epsilon^2 \Delta^2)) \prod_{i=1}^N f(d_i).
    \label{eq:choose_next_arm}
\end{equation}
To see this, we note that $i^*$ is deterministic given $\Hc_t$.  Hence, we obtain \eqref{eq:choose_next_arm} from Lemma \ref{lem:score_good}; the values of $d_j$ for $j \ne i^*$ do not change, so we can factorize their terms out on both sides in \eqref{eq:choose_next_arm}.  Note also that since the arms are independent and the (good) instance is fixed, other arms' rewards do not impact the next reward of arm $i^*$.  

Another point worth mentioning is that a new arm may also be chosen at time $t$.  This is easily captured by the above definitions by increasing $N$ by one and then setting $d_N = 0$ (the score of any not-yet-pulled arm is trivially zero).  In fact, without loss of generality, we may assume that $T$ arms are requested at the very start, and accordingly set $N = T$ (though not all of these arms would end up being pulled).


Hence, letting $L_t = \prod_i f(d_i)$ at time $t$, we have $L_0=1$, and if $\Hc_t$ is the history up to time $t$, then
\begin{equation}
    \EE_2[L_{t+1} | \Hc_t] \leq (1+\mathcal{O}(\epsilon^2 \Delta^2)) L_t.
\end{equation}
Applying this recursively, we have after $T$ arm pulls that
\begin{equation}
    \EE_2[L_T] \leq (1+\mathcal{O}(\epsilon^2 \Delta^2))^T \leq \exp(\mathcal{O}(T\epsilon^2 \Delta^2)).
\end{equation}
Hence, if $T = O\big( \frac{1}{\epsilon^2\Delta^2}\big)$ with a small enough implied constant, then we have $\EE_2[L_T] \le 1.5$.  Applying Markov's inequality, it follows that $\PP_2[L_T \ge 2] \le \frac{3}{4}$.  However, when $L_T < 2$, the probability under $\PP_1$ of observing the same history is at least half of the corresponding probability under $\PP_2$.  Given that history, the algorithm can only be correct under at most one of $\PP_1$ and $\PP_2$.  Hence, the failure probability (in selecting between the good and bad instances) is $\Omega(1)$ when $T = O\big( \frac{1}{\epsilon^2\Delta^2}\big)$. In view of our preceding reduction to a binary decision problem, this gives the desired result stated in Theorem~\ref{thm:lb} for the case that $|\mathcal{G}|=2$.

\subsection{Proof for the General Case}

The above analysis of the two-group case can be summarized as showing the following: Let group~1 consist entirely of arms with mean $\frac{1}{2}$, and consider two instances where (i) group 2 contains a fraction $\frac{1-\epsilon}{2}$ of good arms, and (ii) group 2 contains a fraction $\frac{1+\epsilon}{2}$ of good arms.  Then any algorithm identifying the optimal group on both of these instances with constant probability (better than random guessing) must have $\EE_2[T] \ge \Omega\big( \frac{1}{\Delta^2\epsilon^2} \big)$, where the subscript to $\EE$ indicates the instance.  Moreover, since the problem is symmetric with respect to the two instances (namely, swapping 0 rewards with 1 rewards simply amounts to replacing ${\rm Bernoulli}(p)$ arms by ${\rm Bernoulli}(1-p)$), such an algorithm must also have $\EE_1[T] \ge \Omega\big( \frac{1}{\Delta^2\epsilon^2} \big)$.\footnote{Viewed differently, we can simply assume without loss of generality that $\EE_1[T]$ and $\EE_2[T]$ coincide to within a constant factor.  To see this, note that if $\EE_1[T]$ were much larger than $\EE_2[T]$ (say), we could simply run the algorithm up to time $C \EE_2[T]$ for some large $C$, and guess that we are in instance 1 if the algorithm has not terminated.  This would increase the error probability under instance 2 by at most $\frac{1}{C}$, which is arbitrarily small.}

We now turn to the case of a more general number of groups.  We define good and bad arms in the same way as the two-group case (i.e., with means $\frac{1+\Delta}{2}$ and $\frac{1-\Delta}{2}$), but we now generalize the structure of the groups themselves.  As before, we let group 1's reservoir distribution place probability mass~1 on arms with mean~$\frac{1}{2}$.  We then define $|\mathcal{G}|$ different bandit instances according to the proportion of good vs.~bad arms in the remaining groups.  Specifically, letting $p_i$ denote the proportion of good arms in group $i \in \{2,\dotsc,|\mathcal{G}|\}$, we define the following:
\begin{itemize} \itemsep0ex
    \item In instance 1, we let $p_i = \frac{1-\epsilon}{2}$ for all $i \in \{2,\dotsc,|\mathcal{G}|\}$.
    \item In instance $j$ for $j \in \{2,\dotsc,|\mathcal{G}|\}$, we let $p_j = \frac{1+\epsilon}{2}$, and we let $p_i = \frac{1-\epsilon}{2}$ for all $i \in \{2,\dotsc,|\mathcal{G}|\} \setminus \{j\}$.
\end{itemize}
Thus, we clearly have for all $j \in \{1,\dotsc,|\mathcal{G}|\}$ that group $j$ is the optimal group in instance $j$.  Moreover, up to rescaling of $\epsilon$ and $\Delta$, it is the only group that satisfies \eqref{eq:best_group_relax_eps_Delta}.  Thus, under instance $j$, the algorithm's estimate $\hat{j}$ of the optimal group must equal $j$ with probability at least $1-\delta$ for some $\delta \in \big(0,\frac{1}{2}\big)$ in accordance with the statement of Theorem \ref{thm:lb}.

Let $\PP_j$ and $\EE_j$ denote probability and expectation under instance $j$.  Moreover, for each group $i$, let $T_i$ be a random variable indicating the total number of pulls of arms from group $i$.  Since $\sum_{i=1}^{|\mathcal{G}|} T_j = T$ almost surely, there must exist a group $j^* \in \{2,\dotsc,|\mathcal{G}|\}$ such that $\EE_{1}[T_{j^*}] \le \frac{ \EE_1[ T ] }{ |\mathcal{G}|-1 }$.  We proceed with a reduction to the two-group setting with group 1 and group $j^*$.

This reduction is based on the observation that all groups except $j^*$ are identical under instance 1 and instance $j^*$ (including group 1), so their arm pulls have no power in distinguishing the two instances.  Thus, an algorithm for the $|\mathcal{G}|$-group succeeding on these two instances immediately implies that there exists an algorithm succeeding for the 2-group setting (with only groups $1$ and $j^*$) using an average number of arm pulls given by the quantity $\EE_{1}[T_{j^*}]$ mentioned above.  Then, the result stated in the first paragraph of this subsection implies that $\EE_{1}[T_{j^*}]\ge \Omega\big( \frac{1}{\Delta^2\epsilon^2} \big)$, and combining this with $\EE_{1}[T_{j^*}] \le \frac{ \EE_1[ T ] }{ |\mathcal{G}|-1 }$ and $|\mathcal{G}| \ge 2$ completes the proof of Theorem \ref{thm:lb}.

\subsection{Proof of Lemma \ref{lem:score_bad} (Bad Instance)} \label{sec:lb_pf_bad}

By a simple application of Bayes' rule, under the bad instance, conditioned on $\Hc^{(d)}$ with score $d$, the probability that the arm under consideration is good is
\begin{equation}
    \frac{\p q^d}{\p q^d + p \q^d}. \label{eq:p_good}
\end{equation}
As a sanity check, setting $d=0$ makes this quantity equal to the prior, $\p$.

From \eqref{eq:p_good} and the fact that good (resp., bad) arms are Bernoulli with mean $q$ (resp., $\q$), the probability of the next draw returning 1 is given by
\begin{equation}
    q \frac{\p q^d}{\p q^d + p \q^d} + \q  \frac{p \q^d}{\p q^d + \q^d}.
\end{equation}
Therefore, we have
\begin{align}
    \EE_1[f(d')|\Hc^{(d)}] 
    &= \left( q \frac{\p q^d}{\p q^d + p \q^d} + \q  \frac{p \q^d}{\p q^d + p \q^d}\right) f(d+1) + \left(\q \frac{\p q^d}{\p q^d + p \q^d} + q  \frac{p \q^d}{\p q^d + p \q^d}\right) f(d-1) \\
    &= \frac{\p q^{d+1} + p \q^{d+1}}{\p q^d + p \q^d}  \frac{pq^{d+1}+\p\q^{d+1}}{\p q^{d+1}+p\q^{d+1}} + \frac{q\q(\p q^{d-1} + p\q^{d-1})}{\p q^d + p \q^d} \frac{pq^{d-1}+\p\q^{d-1}}{\p q^{d-1}+p\q^{d-1}} \\
    &= \frac{pq^{d+1} + \p \q^{d+1} + q\q(pq^{d-1} + \p \q^{d-1}) }{\p q^d + p \q^d}\\
    &= \frac{(q+\q)(pq^{d}+\p\q^d)}{\p q^d + p \q^d}\\
    &= f(d).
\end{align}

\subsection{Proof of Lemma \ref{lem:score_good} (Good Instance)} \label{sec:lb_pf_good}

We use similar ideas to the proof of Lemma \ref{lem:score_bad}.  This time, the conditional probability (given $\mathcal{H}^{(d)}$) of having a good arm is
\begin{equation}
    \frac{pq^d}{pq^d+\p\q^d}, \label{eq:p_bad}
\end{equation}
since in the good instance we swap $p$ and $\p$.  Observe that the difference in the two conditional probabilities (\eqref{eq:p_good} and \eqref{eq:p_bad}) can be upper bounded as follows:
\begin{equation}
    \frac{pq^d}{pq^d+\p\q^d} - \frac{\p q^d}{\p q^d + p \q^d} = \frac{q^d\q^d (p^2-\p^2)}{(pq^d + \p\q^d)(\p q^d + p \q^d)} \leq \frac{q^d\q^d (p^2-\p^2)}{(pq^d)(p\q^d)} \leq 4(p^2 - \p^2) = 4\epsilon, \label{eq:eps_bound}
\end{equation}
where the factor of 4 comes from $p \ge \frac{1}{2}$, and the last step substitutes $p = \frac{1+\epsilon}{2}$.

Now let $A$ denote the event that the next arm pull returns 1, and let $B$ denote the event that the arm under consideration is good, and observe that
\begin{align}
    &\PP_2[A | \Hc^{(d)}] - \PP_1[A | \Hc^{(d)}] \\
     & = q \cdot  (\PP_2[B |\Hc^{(d)}] - \PP_1[B | \Hc^{(d)}]) +\q \cdot  (\PP_2[B^c |\Hc^{(d)}] - \PP_1[B^c | \Hc^{(d)}]) \\
    & = (q-\q) \left(\frac{pq^d}{pq^d+\p\q^d} - \frac{\p q^d}{\p q^d + p \q^d}\right)\\
    & \leq 4\epsilon\Delta,
\end{align}
where we substituted \eqref{eq:eps_bound} and $q = \frac{1+\Delta}{2}$. 
It follows that
\begin{align}
     &\EE_2[f(d')|\Hc^{(d)}]  - \EE_1[f(d')|\Hc^{(d)}]  \nonumber \\
     & = f(d+1) \big(\PP_2[A|\Hc^{(d)}] - \PP_1[A|\Hc^{(d)}]\big) + f(d-1) \big(\PP_2[A^c|\Hc^{(d)}] - \PP_1[A^c|\Hc^{(d)}]\big)\big) \\
     & \leq (f(d+1)-f(d-1)) \cdot 4\epsilon\Delta. \label{eq:generator_dependence}
\end{align}
To bound $f(d+1)-f(d-1)$, consider the function
\begin{equation}
    g(x) = \frac{pe^x+\p}{\p e^x+p},
\end{equation}
and note that
\begin{equation}
    f(d) = \frac{pq^d + \p \q^d}{\p q^d + p\q^d} = g(d \ln (q/\q)).
\end{equation}
Taking the derivative, we obtain
\begin{equation}
    \frac{d}{dx} g'(x) = \frac{e^x(p^2-\p^2)}{(\p e^x+p)^2} \leq \frac{e^x(p^2-\p^2)}{2p\p e^x} = \frac{p^2-\p^2}{2p\p} = \mathcal{O}(\epsilon),
\end{equation}
where the final step uses $p = \frac{1+\epsilon}{2}$ and the assumption $\epsilon \in \big(0,\frac{1}{4}\big)$.
Hence, we can bound the difference of interest using a first-order Taylor expansion:
\begin{align}
    f(d+1)-f(d-1) 
    &= g((d+1)\ln(q/\q)) - g((d-1)\ln(q/\q)) \\
    &\leq 2\ln(q/\q) \cdot \sup_{x} g'(x) \\
    &\leq 2 \ln(q/\q) \cdot \mathcal{O}(\epsilon) = \mathcal{O}(\epsilon \cdot \Delta),
\end{align}
since $q = \frac{1+\Delta}{2}$ and $\Delta \in \big(0,\frac{1}{4}\big)$.

Combining this with (\ref{eq:generator_dependence}), and using Lemma \ref{lem:score_bad} for the $\EE_1$ term, we obtain (\ref{eq:good_gen}) as desired.

\section{Comparisons and Connection to Previous Works} \label{sec:comparisons}

\subsection{Brief Overview of Two Related Works}

Regarding \cite{wang2021max} (max-min grouped bandits with finitely-many arms) and \cite{aziz2018pure} (identifying a single $(1-\alpha)$-quantile arm in a non-grouped infinite-arm setting), some similarities and differences to our work are highlighted as follows:
\begin{itemize}
    \item At a high level, we adopt a two-step approach from \cite{aziz2018pure} of requesting a fixed number of arms (per group) from the reservoir distribution and then running a finite-arm algorithm.  Our setting gives rise to fundamental differences (see Appendix~\ref{sec:compare_aziz}), leading to us giving an improved upper bound via a multi-step approach (Section \ref{sec:improve}), as well as requiring distinct techniques for the lower bound (Section \ref{sec:lower}).
    \item As we mentioned in Section \ref{sec:num_pulls}, our finite-arm subroutine and its analysis are generally similar to the main algorithm in \cite{wang2021max} but with different details (see Appendices~\ref{sec:subroutine_details} and \ref{sec:appendix_finite_alg}).  The key distinction is not these details, but rather the challenges of incorporating the finite-arm algorithm into an infinite-arm framework.
\end{itemize}

\subsection{Suboptimality of General Structured Bandit Framework} \label{sec:structured}

As mentioned in Section \ref{sec:related}, various general frameworks for structured bandits have been introduced in the existing literature.  Perhaps most notably, our \emph{finite-arm} sub-problem (Appendix \ref{sec:subroutine_details}) can be viewed as a special case of best-arm identification in structured bandits, which was studied in \cite{huang2017structured}.  However, here we discuss how the general-purpose upper bound in \cite{huang2017structured} can be worse than our upper bound in Theorem \ref{thm:ub_se} (also written in continuous-index notation in Theorem \ref{thm:samplebound}).

When specialized to our setting, \cite[Thm.~10]{huang2017structured} gives gaps of the form $\max\{|\mu_j - c|,\Delta_0/2\}$, with $\Delta_0$ in \eqref{eq:Delta_0} and $c$ being the mid-point between the best two $(1-\alpha)$-quantiles.  We proceed by assuming that $\Delta_0 > 0$, as this follows from a uniqueness assumption made in \cite{huang2017structured}.

The weakness is that in suboptimal groups we could have $|\mu_j - c| = 0$ (or $\approx 0$) even when $\Delta'_{j}$ in~\eqref{eq:delta_prime} is large, since $\mu_j$ could be far above its own group's $(1-\alpha)$-quantile.  Moreover, this can happen no matter how small $\Delta_0$ is, if the group under consideration is not among the best two.  On the other hand, for arms in the top two groups and arms \emph{below the $(1-\alpha)$-quantile} in suboptimal groups, it can be checked that our gaps in~\eqref{def:gap} match those of \cite{huang2017structured} to within constant factors.  Hence, the gaps essentially match for many arms, but not for all arms.

More generally, we are unaware of any follow-up works to \cite{huang2017structured} that could be specialized to obtain our Theorem \ref{thm:ub_se}.  (We also emphasize that our main results are those for the infinite-arm setting.)

\subsection{More Detailed Comparison to Quantile-Based Good Arm Identification} \label{sec:compare_aziz}

The following discussion concerns our number of arm pulls stated in Corollary \ref{cor:samplebound_independent_H}.  

In \cite{aziz2018pure}, a related problem was studied in which there is only one infinite-arm group, and the goal is to identify a single arm in the top-$\alpha$ fraction of arms for some $\alpha \in (0,1)$.  Their final sample complexity bears some resemblance to ours, e.g., utilizing gaps that depend on quantities $b_1,b_2,\dotsc$ amounting to a similar partitioning to that in Section \ref{sec:num_pulls_without}.  In addition, their algorithm uses a similar two-step procedure to our Algorithm \ref{alg:main}, first acquiring ``sufficiently many'' arms and then running a finite-arm algorithm.

Notably, the partitioning in \cite{aziz2018pure} only requires $m = \frac{1}{\alpha}$, whereas we use $m = \frac{1}{\epsilon}$.  Similarly, our term $3\epsilon N$ in \eqref{eq:armpullbound_independent_H} leads to a further multiplicative $\frac{1}{\epsilon}$ factor, whereas no analogous term is present in \cite{aziz2018pure}.  
The main reason for these differences is that our problem formulation necessitates stricter conditions on the $N$ sampled arms (per group) in order for the finite-arm algorithm to lead to the desired goal.  In particular, in \cite{aziz2018pure} it is only required that at least one arm in the top-$\alpha$ fraction is chosen, whereas in our setting it is required that each group's sample quantile is $\epsilon$-close to the true quantile (see Event $A$ in \eqref{eq:sampled_quantile_sandwiched}).  The former is obtained with $N$ having $\frac{1}{\alpha}$ dependence, whereas the latter requires $\frac{1}{\epsilon^2}$ dependence, making our $3\epsilon N$ term in~\eqref{eq:armpullbound_independent_H} much more significant.

The preceding discussion also partially addresses the reason why our $m$ value is higher than that of \cite{aziz2018pure}, though in Section \ref{sec:improve}, we discuss how this could be at least partially alleviated by moving from a two-step algorithm to a multi-step algorithm.  Furthermore, in Section \ref{sec:lower}, we show that our upper bound is tight up to logarithmic factors in a ``worst-case'' sense (roughly amounting to all gaps equaling a common value $\Delta$) when $|\mathcal{G}|=2$ and $\delta = \Theta(1)$, thus justifying our larger choice of~$N$ compared to \cite{aziz2018pure}.

\section{Discussion on Instance-Dependent Lower Bounds} \label{sec:instance_lb}

In this appendix, we provide some discussion on instance-dependent lower bounds; these are absent in Section \ref{sec:lower}, where we provided a worst-case lower bound.

{\bf General-purpose change-of-measure techniques.} 
Change-of-measure techniques have proved to be very useful for obtaining instance-dependent lower bounds in diverse bandit problems \cite{kaufmann2016complexity}.  In our setting, however, it appears to be difficult to use existing general-purpose techniques while maintaining the correct $\epsilon$ dependence, as we will discuss further below.   

Despite this limitation, it is interesting to observe what dependencies on certain gaps could be obtained via this approach.  For simplicity, suppose that there are two groups.  Let $\mu_{j_{\alpha}(G_1)}$ and $\mu_{j_{\alpha}(G_2)}$ be $(1-\alpha)$-quantiles of the two groups, and suppose without loss of generality that $\mu_{j_{\alpha}(G_1)} > \mu_{j_{\alpha}(G_2)}$.  Then, let $\epsilon_0 \in [0,\alpha]$ be defined such that a fraction $\alpha-\epsilon_0$ of group 2's arms have mean exceeding $\mu_{j_{\alpha}(G_1)}$.  That is, a fraction $\epsilon_0$ of the arms have means between $\mu_{j_{\alpha}(G_2)}$ and $\mu_{j_{\alpha}(G_1)}$.  See Figure \ref{fig:eps} for an illustration (the quantity $\epsilon_k$ therein is discussed later).

\begin{figure}
    \begin{centering} 
        \includegraphics[width=0.45\columnwidth]{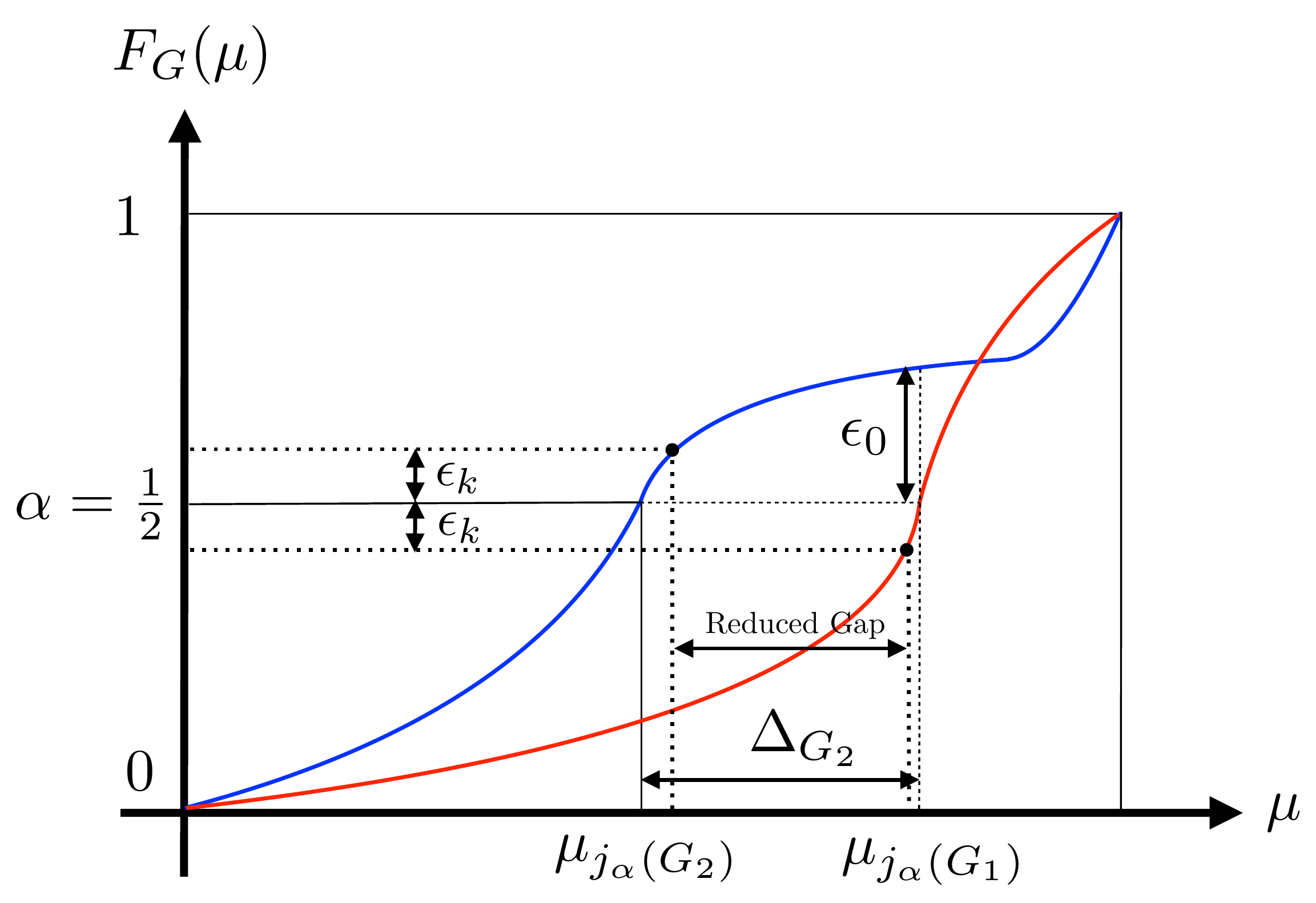} \quad
        \includegraphics[width=0.45\columnwidth]{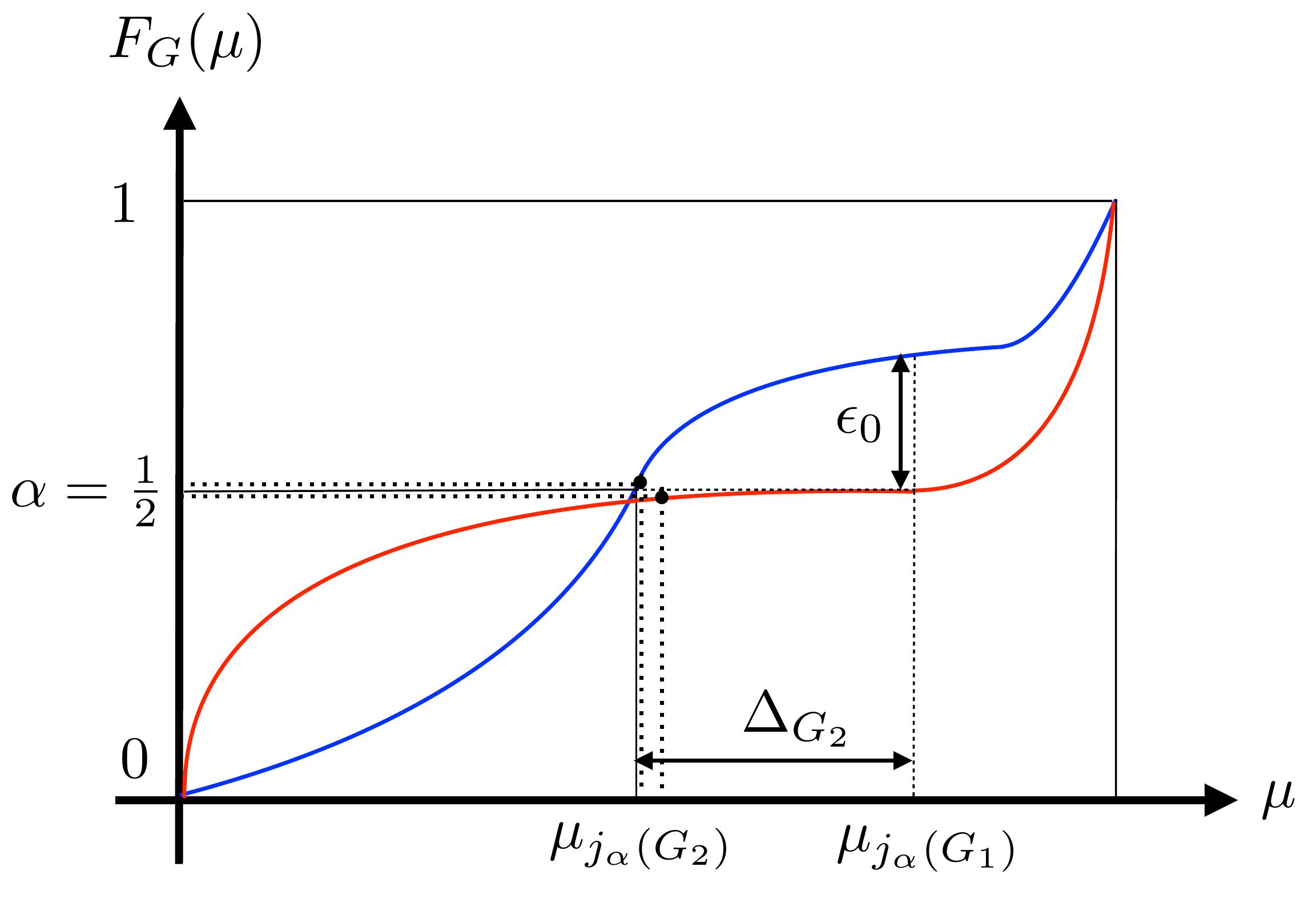}
        \par
    \end{centering}
    
    \caption{Examples of group reservoir distributions and the associated parameters.  In both cases, group 1 is the optimal group, and $\epsilon_0$ takes the same value.  In the left figure, the value $\epsilon_k$ associated with group 2 is similar to $\epsilon_0$, whereas in the right figure we have $\epsilon_k \ll \epsilon_0$. \label{fig:eps}}
\end{figure}

In the following, we consider the case $\Delta = 0$ for simplicity.
Observe that if any fraction exceeding $\epsilon_0$ of group 2's arms were ``shifted'' to just above $\mu_{j_{\alpha}(G_1)}$ (and they were all originally below this value), then group 2 would become the best one.  The standard change-of-measure technique in \cite{kaufmann2016complexity} then indicates that arms in this subset must be pulled a minimal number of times with dependence $\frac{1}{(\mu_{j_{\alpha}(G_1)} - \mu_{\min})^2}$, where $\mu_{\min}$ is the smallest mean in the partition.  One can then form roughly $\frac{1-\alpha + \epsilon_0}{\epsilon_0}$ disjoint partitions (up to rounding) capturing a fraction (marginally above) $\epsilon_0$ of group 2's reservoir distribution each, and apply the same reasoning.

The preceding discussion amounts to having gaps of the form $\mu_{j_{\alpha}(G_1)} - \mu_{\min}$.  This is consistent with our use of $\max\{\Delta_H,\Delta'_{H,j}\} = \Theta(\Delta_H + \Delta'_{H,j} )$ in our upper bound (though in the upper bound $\mu_{\max}$ would replace $\mu_{\min}$, as we discuss below).  Indeed, when $j$ indexes an arm that is in a suboptimal group and not in its top $(1-\alpha)$-quantile, $\Delta_H + \Delta'_{H,j}$ is precisely the difference between $\mu_{H,j}$ and the best group's $(1-\alpha)$-quantile.  Similar reasoning can be applied for the number of arm pulls in~$G^*$ itself (with a suitably modified choice of $\epsilon_0$).

{\bf Comparison of gap terms in upper and lower bounds.} In the preceding discussion, the value of $\epsilon_0$ is group-dependent, and it is interesting to compare this value to the quantities $\{\epsilon_k\}_{k=1}^K$ introduced in Section \ref{sec:improve}.  As we show in Figure \ref{fig:eps}, a value of $\epsilon_k$ similar in size to $\epsilon_0$ is often large enough to retain a large gap (left example), but in the worst case the required $\epsilon_k$ may be significantly smaller than $\epsilon_0$ (right example).  The separation between the two is very much dependent on the shapes of the underlying reservoir distributions.

Apart from this key difference, the upper and lower bounds differ for two additional reasons:
\begin{itemize}[itemsep=0pt,topsep=0pt]
    \item In the upper bound we need to ``round'' within each partition to lower bound the gap, whereas for the lower bound we need to upper bound the gap.  A similar limitation was also present in \cite{aziz2018pure}, and as noted therein, the difference becomes less significant as $\epsilon$ decreases (or in their setting, as $\alpha$ decreases).
    \item The lower bound discussed above only characterizes the number of arm pulls on one side of the $(1-\alpha)$-quantile in each group, whereas the upper bound sums over all partitions (i.e., both sides of the quantile).  Similar limitations apply to the lower bounds in \cite{wang2021max}, though in our setting they are alleviated because our $\alpha$ value is fixed in $(0,1)$ (whereas \cite{wang2021max} roughly corresponds to taking $\alpha = 1$).
\end{itemize}
To summarize the above discussion, the overall gap terms that we introduced share many similar features and properties, but there may still remains significant room for bringing them closer together.

{\bf Limitations.}  
Unfortunately, even in cases where near-matching gap terms are attained in the upper and lower bounds, the lower bound approach discussed above would still fail to capture any counterpart to the $3\epsilon N$ factor present in \eqref{eq:armpullbound_independent_H} (which has dependence $\frac{1}{\epsilon}$), e.g., only attaining $\frac{1}{\Delta^2 \epsilon}$ scaling instead of $\frac{1}{\Delta^2 \epsilon^2}$ in \eqref{eq:weakened_final}.  We focused on worst-case lower bounds in Theorem \ref{thm:lb} because obtaining tight instance-dependent lower bounds appears to be significantly more challenging.

\end{document}